\documentclass[12pt]{article}
\usepackage[usenames,dvipsnames]{color}

\usepackage{graphicx}
\usepackage{amsthm}
\usepackage{graphics}
\usepackage{amsmath,,amssymb}
\usepackage{algorithmic}
\usepackage{algorithm}
\usepackage{caption}
\usepackage{subcaption}
\usepackage{pstricks, pst-node}
\usepackage{tikz}
\usepackage[linewidth=1pt]{mdframed}
\usepackage{multirow}
\usepackage[round,colon,authoryear]{natbib}
\usepackage{pgfplots}
\usepackage{hyperref}
\usepackage{bm}
\usepackage{enumitem}
\usepackage[title]{appendix}

\pgfplotsset{compat=1.10}
\usepgfplotslibrary{fillbetween}
\usetikzlibrary{patterns}
\usetikzlibrary{shapes,arrows}
\usetikzlibrary{arrows.meta}

\newtheorem{theorem}{Theorem}

\newtheorem{lemma}{Lemma}
\newtheorem{corollary}{Corollary}

\newtheorem{assumption}{Assumption}

\def\EE{\mathbb{E}}
\def\PP{\mathbb{P}}
\def\RR{\mathbb{R}}
\def\QQ{\mathbb{Q}}
\def\Rcal{\mathcal R}

\def\Ecal{\mathcal E}

\def\Bcal{\mathcal B}
\def\Acal{\mathcal A}
\def\Lcal{\mathcal L}

\def\Kcal{\mathcal K}

\def\Ccal{\mathcal C}
\def\Mcal{\mathcal M}

\def\Scal{\mathcal S}

\def\Tcal{\mathcal T}
\def\Ucal{\mathcal U}
\def\Ccal{\mathcal C}
\def\Gcal{\mathcal G}
\def\Wcal{\mathcal W}
\def\Hcal{\mathcal H}
\def\Vcal{\mathcal V}

\def\argmax{\mathop{\rm argmax}}

\def\bI{\mathbf I}

\def\argmin{\mathop{\rm argmin}}

\tikzstyle{block} = [rectangle, draw, fill=white!80!black, line width=2pt,
    text width=15em, text centered, rounded corners, minimum height=3em]
\tikzstyle{line} = [draw, -latex',line width=2pt]

\begin{document}
\title{Learning Representations through Token Prediction: \\ Geometry, Approximation, and Downstream Guarantees}
\author{Shulei Wang\\ University of Illinois at Urbana-Champaign}
\date{(\today)}

\maketitle

\footnotetext[1]{Address for Correspondence: Department of Statistics, University of Illinois at Urbana-Champaign, 605 E. Springfield Ave., Champaign, IL 61820 (Email: shuleiw@illinois.edu).}

\newpage

\begin{abstract}
	Token prediction is a central pre-training objective for modern language models. Despite its empirical success, why token prediction learns broadly useful representations remains incompletely understood. We develop a statistical framework connecting token prediction with representation geometry, encoder approximation, and downstream performance. Under a softmax prediction head, we show that accurate token prediction organizes token embeddings according to similarities between the distributions of contexts in which different token types appear, as measured by Hellinger distance, with explicit errors governed by prediction accuracy and token frequency. Meanwhile, the contextual representation provides a low-dimensional coordinate for the conditional distribution of the target token relative to these embeddings. We further introduce a self-consistency principle showing that repeated applications of a shared representation block can progressively refine the contextual representation without introducing additional block parameters. Among representations with the same prediction accuracy, this recurrent construction favors those that can be stably reconstructed from their contexts. Finally, we establish downstream guarantees for token generation, token community recovery, and classification by a linear probe, showing how prediction accuracy and recovered geometry translate into performance beyond the pre-training objective. Together, these results explain how the simple objective of predicting tokens can recover semantic geometry and produce broadly useful representations. A controlled simulation illustrates the theoretical mechanisms.
\end{abstract}



\newpage
\section{Introduction}
\label{sc:intro}
Token prediction is a central pre-training objective for modern language models and language representation systems \citep{radford2018improving,kenton2019bert}. Given a context, the objective is to predict a target token. The context can be defined in different ways: it may consist of the preceding tokens, as in autoregressive next-token prediction, or the surrounding unmasked tokens, as in masked-token prediction. Because the prediction targets are constructed automatically from raw text, token prediction requires no human annotation and can be scaled to massive unlabeled corpora. Large-scale pre-training with this objective has enabled modern language models to learn contextual representations that support language generation and a broad range of downstream tasks \citep{brown2020language,raffel2020exploring}. 

Despite its empirical success, why token prediction learns broadly useful representations remains incompletely understood. What information is learned through token prediction, and why should that information support tasks beyond the prediction objective itself? Recent studies have begun to answer parts of these questions from complementary perspectives \citep{lee2021predicting,wei2021pretrained,liu2022masked,liu2023same,trauger2025nexttokenpredictionllmsend}. For example, \cite{wu2023connecting} study the transfer from pre-training to binary sequence classification under a log-linear prediction model. \cite{zhao2024implicit} investigate how next-token prediction induces geometry in the tokens and context embedding spaces under an unconstrained-feature model. \cite{saunshi2021a} show that next-word prediction can benefit text classification when the classification task can be reformulated as a sentence-completion problem, and they quantify the resulting downstream classification error. More recently, \cite{chen2026the} explain how next-token prediction can facilitate post-training and test-time selection through the lens of response-level coverage. These works provide valuable explanations of particular links between token prediction and its downstream benefits. However, they do not offer a common framework that simultaneously characterizes the information identified by token prediction, its encoding in learned representations, and its consequences for different downstream tasks. This motivates a unified statistical framework for understanding how token prediction produces useful contextual representations.

In this paper, we develop a statistical framework that provides a unified answer to these questions. To introduce the framework, let $X_{j,c}$ denote the observed context of a target token $x_j$ and write its conditional probability mass function as
$$
p_j(v|X)=\PP(x_j=v|X_{j,c}),\qquad v\in\Vcal,
$$
where $\Vcal$ is the token vocabulary. Depending on the prediction objective, the context $X_{j,c}$ may consist of the tokens preceding $x_j$, as in autoregressive prediction, or the observed tokens surrounding a masked position, as in masked-token prediction. The population target of token prediction is $p_j(v|X)$, and the training objective seeks to estimate this conditional distribution. Our analysis is primarily conducted at the population level. This perspective isolates the structural consequences of the token-prediction objective from finite-sample and optimization errors and is particularly relevant to modern pre-training, where the number of observed tokens can be enormous. It should nevertheless be viewed as an idealization rather than an assertion that finite-sample effects always vanish. The main idea can be seen from the familiar softmax head
$$
\hat{p}_j(v|X)\propto\exp\left(E_v^T\Theta_j(X)\right),
$$ 
where $E_v\in \RR^s$ is the embedding vector of token $v$ and $\Theta_j(X)\in \RR^s$ is the contextual representation produced by the encoder. In modern language models, the representation map $\Theta$ is typically implemented through a stack of Transformer layers, and both the token embeddings and contextual representations can be used in downstream analysis. This leads to a central question: what information does token prediction encode in the token embeddings $E_v$ and contextual representations $\Theta_j(X)$?

Our first set of results characterizes the information encoded in these two representations. The analysis shows that the token embeddings can capture a contextual notion of semantic geometry motivated by the distributional hypothesis \citep{lenci2023distributional}. More specifically, let $p_{v}(X_{j,c}|v)=\PP(X_{j,c}|x_j=v)$ denote the conditional distribution of the context given the target token $x_j=v$, and define the squared Hellinger distance between conditional distributions of the context
$$
\Gamma_h(v_1,v_2)=H^2(p_{v_1}(X_{j,c}|v_1),p_{v_2}(X_{j,c}|v_2)).
$$
Thus, $\Gamma_h(v_1,v_2)$ is small when $v_1$ and $v_2$ occur in similar contexts. Let $\epsilon_p$ denote the population token prediction error and let $D_E(v_1,v_2)=\|E_{v_1}-E_{v_2}\|^2$ be the squared Euclidean distance between their embeddings. Under the regularity conditions specified later, we establish the two-sided comparison
$$
\sqrt{1-e^{-c_{\Theta}D_E(v_1,v_2)}}-\sqrt{\epsilon_p\over p_m(v_1,v_2)}\le \sqrt{\Gamma_h(v_1,v_2)}\le \sqrt{1-e^{-C_{\Theta}D_E(v_1,v_2)}}+\sqrt{\epsilon_p\over p_m(v_1,v_2)},
$$
where $p_m(v_1,v_2)=\min\{p(v_1),p(v_2)\}$ is the minimum of marginal probabilities of $v_1$ and $v_2$, and $c_\Theta,C_\Theta>0$ depend on regularity properties of the token prediction model. Therefore, when token prediction is accurate and the tokens are sufficiently frequent, their embedding distances recover the geometry of their conditional contextual distributions. At the context level, $\Theta_j(X)$ serves as a latent coordinate whose image under the prediction head approximates $p_j(X)$ on the manifold determined by the token embeddings. How can this contextual representation be approximated by a tractable encoder?

Our second set of results addresses this approximation problem. The representation map $\Theta$ is defined on the discrete and combinatorial space of token sequences, which has no intrinsic geometry through which information can be shared across similar contexts. A natural strategy is to first map tokens into a Euclidean representation space and then exploit smoothness in that space to construct the representation of a target token from the representations of its context. In other words, we use a representation to construct itself, motivating a compositional architecture built from repeated representation blocks, as in recurrent-depth Transformers and equilibrium models \citep{dehghani2019universal,bai2019deep}. We formalize this idea through a self-consistency principle under which a target representation satisfies
$$
\Theta_j(X)=g_\Theta\left(\Theta_{j,c}(X)\right),
$$
where $\Theta_{j,c}(X)$ denotes the collection of representations associated with the observed context $X_{j,c}$ and $g_\Theta$ is a smooth function. The token geometry identified above provides a natural starting point for representing contexts in Euclidean space, while the self-consistency principle requires the resulting contextual geometry to support a smooth map $g_\Theta$. This relation characterizes $\Theta$ as a fixed point and motivates approximating it through repeated applications of a simpler representation block. Under contraction and block-approximation conditions, we show that the error of a depth-$L$ composite encoder is bounded by $\kappa_\Theta^L$ times its initialization error plus an approximation floor of order $\delta_\Theta/(1-\kappa_\Theta)$, where $\kappa_\Theta$ is the contraction factor and $\delta_\Theta$ measures how well the self-consistency operator can be approximated by an individual block. Because the same representation block is reused at every step, increasing $L$ reduces the initialization component of the approximation error without increasing the number of trainable parameters. The resulting recurrent-depth architecture therefore trades additional computation for improved representation accuracy until the block-approximation floor is reached. Because the reused block may itself be highly expressive, it can be applied for additional refinement steps without introducing additional parameters. Token prediction alone generally identifies an equivalence class of representation pairs rather than a unique coordinate system, and predictively equivalent representations need not be equally compatible with repeated-block approximation. Restricting the encoder to the recurrent-depth class therefore introduces an approximation-theoretic implicit bias toward representations with well-conditioned embedding geometry and a self-consistency operator that is contractive and accurately approximated by the shared block. An accompanying oracle inequality translates the representation approximation error into the additional population prediction error incurred by this restriction.

\begin{figure}[h!]
	\centering
	\includegraphics[width=0.9\linewidth]{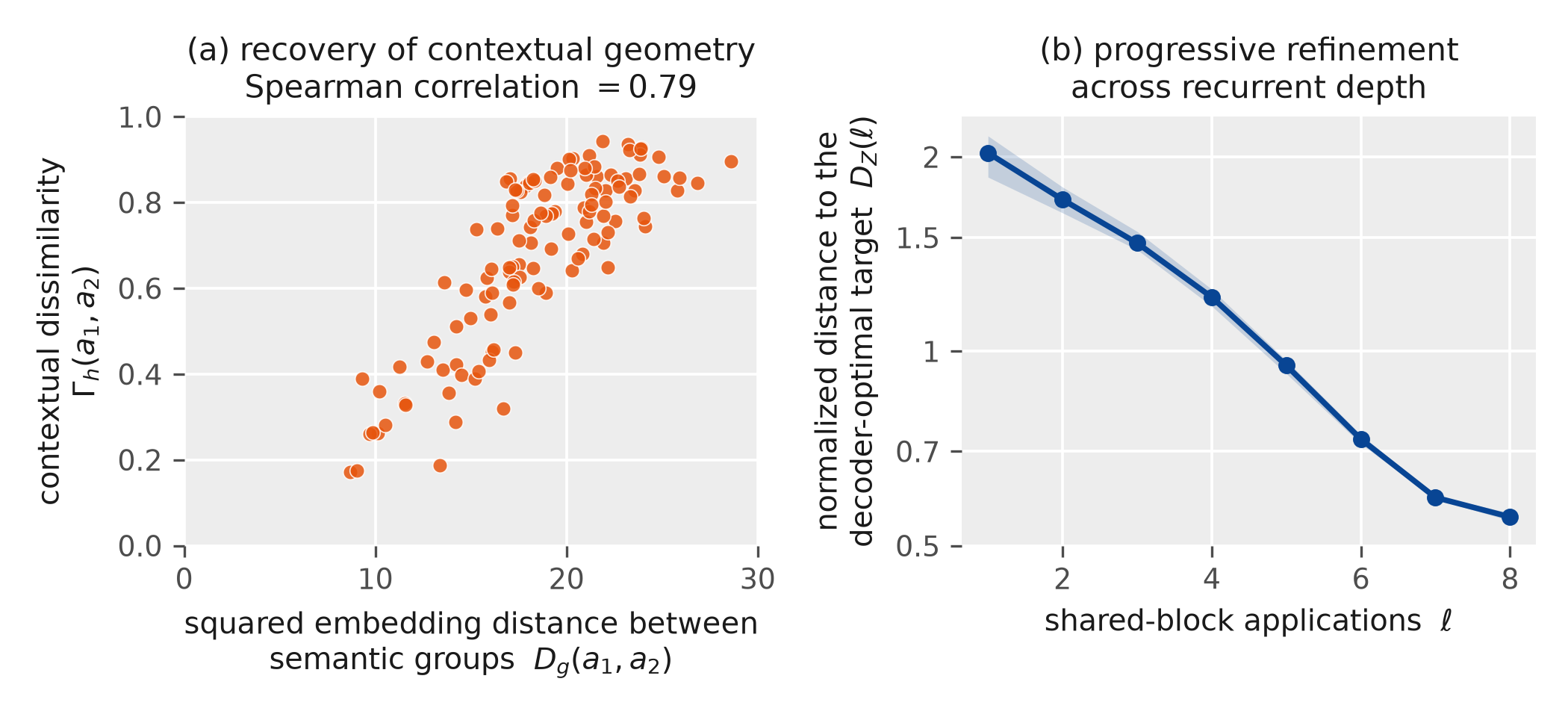}
	\caption{Simulation-based preview of geometry recovery and recurrent refinement. (a) The group embedding distance $D_g(a_1,a_2)$ is plotted against the contextual dissimilarity $\Gamma_h(a_1,a_2)$, with one point for each of the $120$ pairs of $16$ semantic groups. (b) The normalized distance $D_Z(l)$ between the intermediate representation $\Psi_j^l(X)$ and the decoder-optimal representation $Z_j(X)$ across repeated applications of a shared representation block. Precise definitions and simulation details are provided in Section~\ref{sc:numerical}.}
	\label{fg:intro}
\end{figure}

Figure~\ref{fg:intro} provides a simulation-based illustration of the preceding geometry and approximation results. Although the model is trained only through next-token prediction and never observes the semantic groups or their contextual geometry, panel~(a) shows that the group embedding distance $D_g(a_1,a_2)$ has a Spearman correlation of $0.79$ with the contextual dissimilarity $\Gamma_h(a_1,a_2)$. Panel~(b) shows that the normalized distance $D_Z(l)$ decreases as the same $4112$-parameter representation block is repeatedly applied. Thus, recurrent depth progressively moves the intermediate representation $\Psi_j^l(X)$ toward the decoder-optimal representation $Z_j(X)$, increasing computation without increasing the number of trainable parameters.

Having characterized what token prediction encodes and how the resulting contextual representation can be approximated, we next ask whether these learned representations provide benefits beyond token prediction itself. Our final set of results answers this question through three downstream problems: token generation, token community recovery, and token classification. For generation, we show that a small population prediction error forces the pre-trained generator to assign high probability to the set of plausible tokens. When the pre-trained generator is subsequently used as the reference distribution in post-training, it restricts the effective choice set from the full vocabulary of size $d$ to a plausible set of size $k$, replacing a complexity term of order $\log d$ with $\log k$ and yielding a smaller reward-regret bound. For community recovery, the two-sided relationship between embedding distance and contextual dissimilarity makes the learned token embeddings sufficiently informative for spectral clustering, and we derive an explicit bound on the resulting misclustering rate. For token classification, we show that the excess risk of a linear probe can be decomposed into a representation error inherited from pre-training and a downstream estimation error arising from the finite labeled sample. Under appropriate compatibility conditions, the representation error measures how well pre-training organizes the large discrete context space into a contextual representation for which class evidence can be approximated by linear scores. The token-embedding geometry and the simple prediction head together make this linearization possible. Taken together, these results show that accurate learning of the conditional token distribution concentrates generation on plausible tokens, organizes token embeddings according to contextual similarity, and produces contextual representations that support linear classification from a finite labeled data set.

\section{Geometry Induced by Token Prediction}
\label{sc:geometry}

\subsection{From Conditional Token Distributions to Semantic Representations}
Suppose we observe a collection of token sequences, where each sequence $X$ consists of $m$ ($m$ may vary across token sequences) tokens $X=[x_{1},\ldots, x_{m}]$, where each token $x_{j}$ is drawn from a finite vocabulary $\Vcal$ with $|\Vcal|=d$. The goal of representation learning is to learn a map $\Theta$ which converts a token sequence to a numerical vector sequence, i.e.,
$$
[z_{1},\ldots, z_{m}]=[\Theta_1(X),\ldots,\Theta_m(X)]=\Theta(x_{1},\ldots, x_{m}),
$$
where each $z_{j}\in \RR^s$ is the contextual representation at target position $j$. 

In token prediction, a natural starting point of a suitable representation is the conditional probability mass function given the context 
$$
p_j(v|X)=\PP\left(x_j=v\middle|X_{j,c}\right),\quad v\in \Vcal\qquad {\rm and}\qquad p_j(X)=\{p_j(v|X)\}_{v\in \Vcal} \in \Delta^{d-1},
$$
where $X_{j,c}$ is the context of the $j$th token, that is, an observed subset of the token sequence $X$ that excludes the target token $x_j$, and $\Delta^{d-1}$ is a $(d-1)$-dimensional simplex. Notably, the context $X_{j,c}$ is quite flexible in this definition, because it may consist of the tokens preceding $x_j$, as in autoregressive prediction, or the observed tokens surrounding a masked position, as in masked-token prediction. The conditional distribution $p_j(X)$ describes the mixture of token types that can plausibly occupy the target position. It is the population target of token prediction and provides the probability weights needed to construct a contextual representation. However, because its support is an unordered vocabulary, it does not by itself encode semantic relationships among token types.

Although $p_j(X)$ is a straightforward way to summarize plausible token types in the given context, we still need an appropriate way to measure the similarity between such conditional probability mass functions. Commonly used divergences or distances defined on the simplex, such as KL divergence and Hellinger distance, usually treat the vocabulary as an unordered discrete support and therefore do not account for similarities between token types. To incorporate such similarities, we introduce an embedding for each token type and regard the conditional probability mass function as a probability measure on the embedding space. Specifically, let $E_v\in \RR^s$ be the embedding vector of token type $v$. Given these embeddings, the conditional probability mass function $p_j(X)$ can be mapped to the following discrete probability measure on the embedding space
$$
p_j(X)\quad \to \quad p_j(X;E)=\sum_{v\in \Vcal}p_j(v|X)\delta_{E_v},
$$
where $\delta_{E_v}$ denotes the point mass at $E_v$. Equivalently, $p_j(X;E)$ is the pushforward of $p_j(X)$ under the embedding map $v \to E_v$. This construction incorporates token similarity into contextual comparison. If two conditional distributions differ mainly by reallocating probability among nearby token embeddings, then their Wasserstein distance is small, corresponding to equivalence collapse. In contrast, reallocating probability between distant embeddings produces a larger distance, preserving semantic separation.

The construction $p_j(X;E)$ shows how a conditional token distribution can become a semantic contextual representation once the vocabulary is equipped with an appropriate geometry. It also raises two related questions. How can the probability measure $p_j(X;E)$ be represented by a tractable Euclidean vector, and can the required token geometry $E$ itself be learned solely through token prediction? We address the first question through the recovery-based prediction head and the second through the geometry-recovery result below.

\subsection{Representation Learning through Token Prediction}
While $p_j(X;E)$ provides a quantitative representation that incorporates the geometry among token types encoded by $E$, directly evaluating probability metrics on the embedding space may be computationally inconvenient, particularly when representations must be compared repeatedly. Moreover, $p_j(X;E)$ is a probability measure rather than a fixed-dimensional numerical vector. These issues motivate the construction of a concise representation of $p_j(X;E)$. We consider a general strategy for constructing such a representation: recovery-based representations. A recovery-based representation seeks a vector from which the original conditional probability mass function can be approximately reconstructed. Specifically, given a prespecified nonnegative similarity kernel $\Kcal$, we seek $\Theta_j(X)\in\RR^s$ such that
$$
p_j(v|X)\approx \hat{p}_E(v|\Theta_j(X))={\Kcal(\Theta_j(X),E_v) \over \sum_{v'\in\Vcal}\Kcal(\Theta_j(X),E_{v'})},\qquad v\in\Vcal.
$$
The kernel-based recovery generalizes standard coordinate recovery. In standard coordinate recovery, each coordinate of a vector is obtained through its inner product with a canonical basis vector. Kernel-based recovery replaces the fixed canonical basis with learned token embeddings and replaces the inner product with a kernel. Thus, each coordinate is recovered by comparing the contextual representation $\Theta_j(X)$ with the token embedding $E_v$, followed by normalization over the vocabulary. Therefore, $\Theta_j(X)$ can be interpreted as a latent coordinate of the conditional distribution $p_j(X)$ within the decoder family determined by $E$. The recovery head incorporates the token embeddings into the reconstruction of the conditional distribution. In this sense, $\Theta_j(X)$, together with $E$ and the recovery head, provides a concise surrogate for $p_j(X;E)$.

Unlike many existing representation learning methods that learn an efficient map of observed objects, the desired representation described above aims to learn a map of an unobserved conditional probability mass function with unknown token embeddings. Because of the connection between the desired representation and the conditional probability mass function, representation learning can be carried out by predicting the target token from its context. Specifically, we consider the population token-prediction problem
$$
\min_{\Theta, E} \EE_{X,j}\left[L\left(p_j(X);\hat{p}_E(\Theta_j(X))\right)\right],
$$
where $L(p;q)=\sum_{v\in \Vcal}p_v\log(p_v/q_v)$ is the Kullback–Leibler divergence and $\EE_{X,j}$ is the expectation taken with respect to token sequences and target positions. This population loss is equivalent, up to a constant independent of $E$ and $\Theta$, to the expected negative log-likelihood used in token-prediction training.

Similar in spirit to singular value decomposition, the conditional token distribution recovery described above produces two related representations. In singular value decomposition, the row and column representations interact with each other to recover each entry of a matrix. Here, the contextual representation $\Theta_j(X)$ and the token representation $E_v$ interact through the recovery score
$$
\Kcal(\Theta_j(X),E_v).
$$
For fixed token embeddings, $\Theta_j(X)$ represents a context through its compatibility with all token types. Conversely, across different contexts, $E_v$ represents token type $v$ through its compatibility with all contextual representations. Normalizing these interaction scores over the vocabulary approximately recovers the conditional distribution $p_j(X)$. Thus, $\Theta_j(X)$ and $E_v$ can be viewed as the context and token factors of the same conditional-distribution recovery problem.

To focus on the geometry of the token embeddings, we first allow the representation map $\Theta$ to be sufficiently flexible to select any $Z\in\RR^s$ for each context. Given the token embeddings $E$, $\hat{p}_E(Z)$ defines a parametric map from the representation space $\RR^s$ to the simplex $\Delta^{d-1}$. Therefore,
$$
\Mcal_E=\{\hat{p}_E(Z):Z\in\RR^s\}\subset \Delta^{d-1}
$$ 
is a low-dimensional model in the simplex. Under suitable regularity conditions, it forms a smooth manifold of dimension at most $s$. Clearly, $\Mcal_E$ depends on the token embeddings $E$. For $p\in\Delta^{d-1}$, define its discrepancy from $\Mcal_E$ as $L(p;\Mcal_E)=\inf_{Z\in\RR^s} L(p;\hat{p}_E(Z))$.
The population token-prediction problem then reduces to
\begin{equation}
	\label{eq:mnffit}
	\min_{E}\EE_{X,j}\left[ L(p_j(X);\Mcal_E) \right].
\end{equation}
In other words, token prediction identifies a low-dimensional model in the simplex that approximates the collection of conditional probability mass functions $p_j(X)$. Let $E^\ast$ denote a minimizer of \eqref{eq:mnffit}. Although $E^\ast$ is selected only through prediction accuracy, it is subsequently used as a representation of token types. A natural question is therefore whether the Euclidean geometry of $E^\ast$ reflects semantic similarities among token types. We answer this question in the next subsection.

\subsection{Recovery of Semantic Geometry}
Although token prediction directly estimates the conditional distribution of a target token given its context, we show that it can also recover the conditional distribution of the context given a token type. The conditional contextual distribution is particularly relevant to semantic representation because, according to the distributional hypothesis, token types occurring in similar contexts tend to have similar meanings \citep{lenci2023distributional}. We now make this connection precise. For a given token type $v$, Bayes' rule gives the population conditional contextual distribution
$$
p_v(X_{j,c}|v)={p_j(v|X)\over p(v)}p(X_{j,c}),
$$
where $p(v)=\EE_{X,j}(p_j(v|X))$ is the marginal probability that describes how frequently token type $v$ occurs and $p(X_{j,c})$ is the probability mass function of context $X_{j,c}$. Replacing the population conditional token distribution by its fitted counterpart gives the estimated conditional contextual distribution
$$
\hat{p}_{E,v}(X_{j,c}|v)={\hat{p}_E(v|\Theta_j(X))\over \hat{p}(v)}p(X_{j,c}),
$$
where $\hat{p}(v)=\EE_{X,j}(\hat{p}_E(v|\Theta_j(X)))$ is the marginal probability induced by the fitted conditional token distributions. The connection between the two conditional-distribution problems can be seen directly from the token-prediction loss. The chain rule for Kullback-Leibler divergence gives
$$
\EE_{X,j}\left[L\left(p_j(X);\hat{p}_E(\Theta_j(X))\right)\right]=\sum_{v\in\Vcal}p(v)\log{p(v)\over \hat{p}(v)}+\sum_{v\in\Vcal}p(v)L(p_v(X_{j,c}|v);\hat{p}_{E,v}(X_{j,c}|v)).
$$
The first term measures how well the fitted model recovers the marginal frequencies of the token types. The second term measures how well it recovers their conditional contextual distributions, averaged according to the token frequencies. Since both terms are nonnegative, accurate token prediction leads to accurate estimation of the conditional contextual distribution for every sufficiently frequent token type.

We next examine how the token embedding enters the model-implied conditional contextual distribution. Consider the softmax kernel $\Kcal(\Theta_j(X),E_v)=\exp\left(E_v^T\Theta_j(X)\right)$. The model-implied conditional contextual distribution can be written as
$$
\hat{p}_{E,v}(X_{j,c}|v)=e^{E_v^T\Theta_j(X)-A(E_v)}\tilde{p}(X_{j,c}),
$$
where $A(e)=\log\sum_{X,j}\exp(e^T\Theta_j(X))\tilde{p}(X_{j,c})$ is the normalizing constant that ensures the distribution over contexts sums to one and $\tilde{p}(X_{j,c})=p(X_{j,c})/ \sum_{v\in\Vcal}e^{E_v^T\Theta_j(X)}$ is a common base measure. The base measure $\tilde{p}(X_{j,c})$ is common to all token types, while the token embedding $E_v$ varies with $v$. Therefore, the model-implied conditional contextual distributions form an exponential family whose natural parameter is $E_v$ \citep{efron2023exponential}. In the exponential family, the Hellinger distance can connect the embedding geometry with the geometry in the model-implied conditional contextual distributions. Specifically, a direct calculation gives
$$
H^2(\hat{p}_{E,v_1}(X_{j,c}|v_1),\hat{p}_{E,v_2}(X_{j,c}|v_2))=1-\exp\left(-J_A(E_{v_1},E_{v_2})\right),
$$
where $H$ is the Hellinger distance and $J_A(E_{v_1},E_{v_2})$ is the Jensen gap of $A$
$$
J_A(E_{v_1},E_{v_2})={A(E_{v_1})+A(E_{v_2})\over 2}-A\left({E_{v_1}+E_{v_2}\over 2}\right).
$$
In this equality, $1-H^2$ measures the overlap between the two model-implied conditional contextual distributions, while $J_A$ measures the lack of overlap on a logarithmic scale through the curvature of $A$. This identity is why Hellinger distance is particularly convenient for the exponential-family representation. If the curvature of $A$ is bounded above and below along the relevant embedding directions, its Jensen gap is comparable to the squared Euclidean distance between the embeddings, that is, there exist constants $0<c_\Theta\le C_\Theta$ such that
$$
c_\Theta\|E_{v_1}-E_{v_2}\|^2\le J_A(E_{v_1},E_{v_2})\le C_\Theta\|E_{v_1}-E_{v_2}\|^2.
$$
Consequently, the Euclidean distance between token embeddings characterizes the Hellinger geometry of the model-implied conditional contextual distributions, up to a monotone exponential transformation.

The preceding argument concerns the model-implied conditional contextual distributions, which have the exponential-family structure induced by the recovery head. The population conditional contextual distributions need not belong to this family. However, the token-prediction loss shows that the model-implied and population distributions are close when the prediction error is small and the token types are sufficiently frequent. Since Kullback–Leibler divergence controls Hellinger distance and $H$ satisfies the triangle inequality, the prediction-loss bound allows the embedding geometry of the model-implied conditional contextual distributions to be transferred to their population counterparts. We therefore define the population contextual dissimilarity between token types $v_1$ and $v_2$ as
$$
\Gamma_h(v_1,v_2)=H^2(p_{v_1}(X_{j,c}|v_1),p_{v_2}(X_{j,c}|v_2)).
$$
The quantity $\Gamma_h(v_1,v_2)$ is a natural measure of semantic dissimilarity because it compares the distributions of contexts in which the two token types occur. A small value means that $v_1$ and $v_2$ tend to occur in similar contexts, whereas a large value means that their contextual distributions have little overlap. The following theorem formalizes this relationship and quantifies the effects of token-prediction error and token frequency in a more general setting. To characterize the behavior of embedding vectors, we consider the following assumptions:
\begin{assumption}
	\label{asp:embedding}
	We make the following assumptions:
	\begin{enumerate}
		\item There exist $E^\ast$ and $\Theta^\ast$ such that 
		$$
		\EE_{X,j}\left[L\left(p_j(X);\hat{p}_{E^\ast}(\Theta^\ast_j(X))\right)\right]\le \epsilon_p.
		$$
		\item The kernel $\Kcal$ has the form $\Kcal(\Theta_j(X),E_v)=\exp\left(E_v^T\Theta_j(X)\right)$.
		\item There exist constants $0<b_\Theta\le B_\Theta<\infty$ such that, for any $e\in {\rm conv}\{E_v^\ast:v\in\Vcal\}$ and $\tilde{e}\in {\rm span}\{E^\ast_{v_1}-E^\ast_{v_2}: v_1,v_2\in \Vcal\}$, we have
		$$
		b^2_\Theta \|\tilde{e}\|^2\le \tilde{e}^T\nabla^2 A(e)\tilde{e}\le B^2_\Theta \|\tilde{e}\|^2.
		$$
		Here $A$ is defined by $E^\ast$ and $\Theta^\ast$.
	\end{enumerate}
\end{assumption}

The first assumption suggests that the token prediction loss is small enough. The second assumption gives a specific form of the kernel used in practice. The last assumption indicates that the curvature of $A$ is bounded above and below. The upper bound can be easily satisfied if $\Theta_j(X)$ is bounded, whereas the lower bound is the important nondegeneracy condition. With these assumptions, the following theorem establishes a connection between Euclidean distances among token embeddings and semantic dissimilarities among token types. 

\begin{theorem}
	\label{thm:emdsimilarity}
	We assume that Assumption~\ref{asp:embedding} holds and $p(v_1),p(v_2)>0$. 
	Then, we have
	$$
	\sqrt{\Gamma_h(v_1,v_2)}\le \sqrt{1-\exp\left(-{B_\Theta^2\over 8} \left\|E^\ast_{v_2}-E^\ast_{v_1}\right\|^2\right)}+\sqrt{\epsilon_p\over \min\{p(v_1),p(v_2)\}},
	$$
	and 
	$$
	\sqrt{\Gamma_h(v_1,v_2)}\ge \left(\sqrt{1-\exp\left(-{b_\Theta^2\over 8} \left\|E^\ast_{v_2}-E^\ast_{v_1}\right\|^2\right)}-\sqrt{\epsilon_p\over \min\{p(v_1),p(v_2)\}}\right)_+.
	$$
\end{theorem}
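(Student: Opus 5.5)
The argument uses the Hellinger triangle inequality to pass from the population contextual distributions to their model-implied counterparts, and then handles the model-implied distance exactly through the exponential-family identity.

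Write $P_i=p_{v_i}(X_{j,c}|v_i)$ and $Q_i=\hat p_{E^\ast,v_i}(X_{j,c}|v_i)$ for $i=1,2$, where $Q_i$ is built from $E^\ast,\Theta^\ast$. Since $H$ is a metric,
$$H(Q_1,Q_2)-H(P_1,Q_1)-H(P_2,Q_2)\le H(P_1,P_2)\le H(Q_1,Q_2)+H(P_1,Q_1)+H(P_2,Q_2).$$
So it suffices to prove two things. First, $H^2(Q_1,Q_2)=1-\exp(-J_A(E^\ast_{v_1},E^\ast_{v_2}))$ with $J_A$ sandwiched by $\frac{b_\Theta^2}{8}\|\cdot\|^2$ and $\frac{B_\Theta^2}{8}\|\cdot\|^2$. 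Second, the combined error $H(P_1,Q_1)+H(P_2,Q_2)$ is at most $\sqrt{\epsilon_p/\min\{p(v_1),p(v_2)\}}$. The lower bound then follows, with the positive part justified because $\sqrt{\Gamma_h}\ge 0$.

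\textbf{Exponential-family step.} With the softmax kernel, $Q_i(X_{j,c})=\exp(E_{v_i}^{\ast T}\Theta^\ast_j(X)-A(E^\ast_{v_i}))\tilde p(X_{j,c})$. Hence
$$\sum\sqrt{Q_1Q_2}=\exp\!\left(A\!\left(\tfrac{E^\ast_{v_1}+E^\ast_{v_2}}{2}\right)-\tfrac{A(E^\ast_{v_1})+A(E^\ast_{v_2})}{2}\right),$$
which gives the identity $H^2=1-\sum\sqrt{Q_1Q_2}=1-e^{-J_A}$, using the convention $H^2=1-\sum\sqrt{PQ}$. For the Jensen gap, set $\Delta=E^\ast_{v_2}-E^\ast_{v_1}$, $m=(E^\ast_{v_1}+E^\ast_{v_2})/2$, and $g(t)=A(m+t\Delta/2)$ for $t\in[-1,1]$. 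Then $J_A=\tfrac{g(1)+g(-1)}2-g(0)$. Taylor's theorem with integral remainder gives
$$J_A=\tfrac12\int_0^1(1-t)\bigl[g''(t)+g''(-t)\bigr]\,dt,\qquad g''(t)=\tfrac14\Delta^T\nabla^2A(m+t\Delta/2)\Delta.$$
The point $m+t\Delta/2$ lies on the segment between $E^\ast_{v_1}$ and $E^\ast_{v_2}$, which is inside the convex hull, and $\Delta$ lies in the difference span. So Assumption~\ref{asp:embedding}(3) gives $b_\Theta^2\|\Delta\|^2/4\le g''\le B_\Theta^2\|\Delta\|^2/4$. Since $\int_0^1(1-t)\,dt=1/2$, this yields $J_A\in[\frac{b_\Theta^2}{8}\|\Delta\|^2,\frac{B_\Theta^2}{8}\|\Delta\|^2]$. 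Monotonicity of $x\mapsto\sqrt{1-e^{-x}}$ then finishes this step.

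\textbf{Error step.} This is the main point of care: the constant must come out as exactly $\sqrt{\epsilon_p/p_m}$ for the sum of two Hellinger errors. The chain-rule decomposition of the KL loss displayed above, applied at $(E^\ast,\Theta^\ast)$, has both terms nonnegative. Combined with Assumption~\ref{asp:embedding}(1), it gives
$$p(v_1)L(P_1;Q_1)+p(v_2)L(P_2;Q_2)\le\epsilon_p.$$
Use $H^2\le \frac12 L$, which holds under the normalization $H^2=1-\sum\sqrt{PQ}$ (from $-\log x\ge 1-x$ applied to the Bhattacharyya coefficient via Jensen). Then, with $p_m=\min\{p(v_1),p(v_2)\}$,
$$H(P_1,Q_1)+H(P_2,Q_2)\le\sqrt{2\bigl(H^2(P_1,Q_1)+H^2(P_2,Q_2)\bigr)}\le\sqrt{L(P_1;Q_1)+L(P_2;Q_2)}\le\sqrt{\epsilon_p/p_m}.$$
This is where the choice of Hellinger normalization matters. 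I would fix the convention $H^2=1-\sum\sqrt{PQ}$, which is the one consistent with the stated identity $H^2=1-e^{-J_A}$, and verify that $H^2\le\frac12 L$ under it, so the constants close. Substituting both steps into the triangle-inequality sandwich gives both bounds of Theorem~\ref{thm:emdsimilarity}.
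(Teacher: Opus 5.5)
Your proposal is correct and follows the paper's proof step for step. The shared steps are the exponential-family identity $H^2=1-e^{-J_A}$, the integral-remainder Taylor bound giving $J_A\in[\frac{b_\Theta^2}{8}\|\Delta\|^2,\frac{B_\Theta^2}{8}\|\Delta\|^2]$, the KL chain rule combined with $2H^2\le L$ and $H(P_1,Q_1)+H(P_2,Q_2)\le\sqrt{2(H^2(P_1,Q_1)+H^2(P_2,Q_2))}$, and the Hellinger triangle inequality, all with identical constants; your point that the curvature bound is only needed on the segment between $E^\ast_{v_1}$ and $E^\ast_{v_2}$ and in the direction $\Delta$ actually uses Assumption~\ref{asp:embedding}(3) more carefully than the paper, which asserts the bound for any vector.
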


Theorem~\ref{thm:emdsimilarity} suggests that the discrepancies between token types can be approximately upper and lower bounded by a monotone transformation of token embeddings' Euclidean distance. These bounds are sharper when the conditional probability mass function can be better estimated by the model, i.e., $\epsilon_p$ is smaller. It is worth pointing out that the conclusion from Theorem~\ref{thm:emdsimilarity} is reliable when both token types are moderately frequent, i.e., $p(v)$ is not too small. If one of the token types is rare, the bounds for  $\Gamma_h(v_1,v_2)$ are not accurate, so the Euclidean distance between embedding vectors need not accurately reflect the similarity of rare token types. This is expected as rare token types do not provide enough information for us to compare them.

\subsection{Consequences for Contextual Representation Learning}
Theorem~\ref{thm:emdsimilarity} shows that token prediction produces two desirable geometric properties commonly associated with useful representations. First, it produces equivalence collapse \citep{wang2025augmentation}. Two token types $v_1$ and $v_2$ are distributionally equivalent if they occur in the same contextual distribution, that is,
$$
p_{v_1}(X_{j,c}|v_1)=p_{v_2}(X_{j,c}|v_2),
$$
or equivalently, $\Gamma_h(v_1,v_2)=0$. When the population prediction error is zero and the curvature condition in Assumption~\ref{asp:embedding} holds, Theorem~\ref{thm:emdsimilarity} implies
$$
\Gamma_h(v_1,v_2)=0\quad\Longleftrightarrow\quad E_{v_1}^\ast=E_{v_2}^\ast.
$$
Thus, token types that are indistinguishable by their conditional contextual distributions receive the same embedding. When the prediction error is nonzero, the same conclusion holds approximately, with the degree of collapse controlled by the prediction error and the marginal frequencies of the two token types.

Second, token prediction preserves meaningful contextual distinctions. If two token types occur in substantially different contextual distributions, then $\Gamma_h(v_1,v_2)$ is large. Provided that the prediction error is sufficiently small, Theorem~\ref{thm:emdsimilarity} requires their embedding vectors to remain separated. The embedding geometry therefore removes distinctions that cannot be identified from context while preserving distinctions supported by contextual evidence. These two properties correspond to the principles of equivalence collapse and semantic geometry preservation that commonly motivate representation learning \citep{bengio2013representation}.

These conclusions concern the token embeddings $E_v$. They nevertheless have an important consequence for learning the contextual representation $\Theta_j(X)$. Once token types are placed in a Euclidean space that reflects their contextual similarity, the conditional measure
$$
p_j(X;E)=\sum_{v\in\Vcal}p_j(v\mid X)\delta_{E_v}
$$
becomes a semantic representation of the possible target token. Together with the token embeddings \(E\) and the recovery head, $\Theta_j(X)$ provides a concise coordinate from which this measure can be recovered. The recovered token geometry therefore supplies the Euclidean structure needed to construct contextual representations from the representations of the observed context.

Geometry recovery alone, however, does not guarantee that the contextual representation map is stable under small changes in its context or that it can be efficiently estimated from observed sequences. Addressing these questions requires additional structure on $\Theta$. In the next section, we introduce a self-consistency principle that uses the recovered token geometry to approximate $\Theta$ through repeated applications of a simpler representation block.

\section{Recurrent-Depth Approximation of Contextual Representations}
\label{sc:approximation}

In Section~\ref{sc:geometry}, the contextual representation map was allowed to be sufficiently flexible so that we could isolate the geometry induced in the token embeddings. We now reverse the focus: fixing a population representation pair $(E^\ast,\Theta^\ast)$, we study whether $\Theta^\ast$ can be approximated by a structured encoder. The central idea is to select a self-consistent representation and approximate its defining operator through repeated applications of a shared representation block.

\subsection{Nonidentifiability and Self-Consistency}
Learning the representation map $\Theta$ is fundamentally different from learning a smooth function on a Euclidean domain. Euclidean space comes with a built-in geometry: nearby points are naturally defined, and one can borrow strength across neighboring observations by exploiting smoothness. In contrast, the space of token sequences is discrete and combinatorial. Without an additional metric or learned geometry, different sequences are simply separated objects, so observations from one sequence do not directly inform the value of $\Theta$ at another. This makes direct estimation of a function on the space of token sequences statistically challenging.

In addition, the solution of the token-prediction problem is often not unique. We still consider the kernel $\Kcal(\Theta_j(X),E_v)=\exp\left(E_v^T\Theta_j(X)\right)$. If $(E^\ast,\Theta^\ast)$ is a minimizer, then for any invertible matrix $A$,
$$
\Theta'_j(X)=A\Theta^\ast_j(X)\qquad {\rm and}\qquad E_v'=(A^{-1})^TE_v^\ast 
$$
gives the same value of each kernel score and hence the same token prediction loss. This example suggests that the token prediction objective alone cannot distinguish between $\Theta^\ast$ and $A\Theta^\ast$. Thus, the token prediction objective identifies an equivalence class of dual representations rather than a unique coordinate system. Although representations within this equivalence class are indistinguishable under the token-prediction loss, their Euclidean geometries and the corresponding curvature constants in Theorem~\ref{thm:emdsimilarity} may differ. The theorem remains valid for each representative, but the informativeness of its bounds depends on the conditioning of the selected coordinates. Which representative admits a smooth and stable construction from its contextual representations?

In this section, we show that these two challenges can be mitigated by introducing an additional criterion for favoring useful coordinates. A natural way to mitigate the first challenge is to first map tokens into a preliminary Euclidean representation space and then exploit smoothness in that space to construct a refined contextual representation. The refined representation can itself serve as the input to another representation update. Repeating this procedure produces a sequence of representations, each constructed from the preceding one. Under an appropriate contraction condition, these representations move progressively toward a self-consistent population representation. In this sense, we use a preliminary representation to construct a better representation. This leads to a contextual self-consistency principle. Following the definition of $X_{j,c}$, let $\Theta_{j,c}(X)$ denote the ordered collection of representations at the positions included in $X_{j,c}$, retaining their positional information. Although $\Theta_j(X)$ is defined on a discrete sequence space, the context representation $\Theta_{j,c}(X)$ takes values in a Euclidean space. Because $\Theta_{j,c}(X)$ lies in a Euclidean product space, smoothness can now be meaningfully imposed. We say that a representation map $\Theta$ is self-consistent if there exists a smooth function $g_\Theta$ such that
$$
\Theta_j(X)=g_\Theta\left(\Theta_{j,c}(X)\right).
$$
We call a representation map satisfying the above principle a self-consistent representation and call $g_\Theta$ the self-consistency function. The self-consistency function $g_\Theta$ is indexed by $\Theta$ because different candidate representations induce different Euclidean geometries for the contextual representations, and hence require different smooth functions. This assumption does not treat $\Theta$ as a smooth function on the original discrete space. Rather, it asserts that the representation is self-consistent: once a candidate representation induces a Euclidean geometry for token contexts, the representation at each location can be recovered smoothly from the geometry of its context. Learning $\Theta$ is therefore a fixed-point problem, because the geometry used to estimate $\Theta_j(X)$ is itself determined by $\Theta$. 
 
The fixed-point problem induced by the self-consistency principle also suggests a general way to construct approximations to the representation map. For any smooth function $g$, we define an operator $\Tcal_g$ on the space of candidate representation maps by
$$
[\Tcal_g(\Psi)]_j(X)=g\left(\Psi_{j,c}(X)\right),
$$
where $\Psi_{j,c}(X)$ denotes the contextual representation induced by the candidate map $\Psi$. The operator $\Tcal_g$ is indexed by $g$ because different smooth recovery functions induce different operators on the space of representation maps. Under the self-consistency principle, the self-consistent representation $\Theta$ is a fixed point of the operator associated with $g_\Theta$: $\Theta=\Tcal_{g_\Theta}(\Theta)$. Suppose that $\Tcal_{g_\Theta}$ is a contraction under a suitable metric $D$ on the space of representation maps, i.e., $D(\Tcal_{g_\Theta}(\Psi),\Tcal_{g_\Theta}(\Phi))\le \kappa_\Theta D(\Psi,\Phi)$ for some constant $0<\kappa_\Theta<1$. Then, starting from an initial representation map $\Psi^0$, we can iteratively define
$$
\Psi^{l+1}=\Tcal_{g_\Theta}(\Psi^l),\qquad l=0,1,\ldots,L-1.
$$
The contraction property implies
$$
D(\Psi^{L},\Theta)\le \kappa_\Theta^L D(\Psi^0,\Theta).
$$
Thus, repeated applications of $\Tcal_{g_\Theta}$ move the candidate representation toward the self-consistent representation $\Theta$. This geometric convergence suggests approximating the unknown operator $\Tcal_{g_\Theta}$ by a simpler representation block and repeatedly applying that block. Repeated parameter sharing is familiar from recurrent-depth and equilibrium architectures \citep{dehghani2019universal,bai2019deep}. Our purpose here is not to introduce this architecture, but to explain its role in approximating population representations induced by token prediction.

The contraction factor $\kappa_\Theta$ also suggests that predictively equivalent representations may have different approximation difficulty. Consequently, some representatives may be more suitable for structured approximation even though they are indistinguishable under the token-prediction loss. If $\Theta^\ast_j(X)=g_{\Theta^\ast}\left(\Theta^\ast_{j,c}(X)\right)$, then the transformed representation $\tilde\Theta=A\Theta^\ast$ satisfies $\tilde\Theta_j(X)=A g_{\Theta^\ast}\left(A^{-1}\tilde\Theta_{j,c}(X)\right)$. Therefore, the corresponding self-consistency function is $g_{\tilde\Theta}(y)=A g_{\Theta^\ast}(A^{-1}y)$. Even if $g_{\Theta^\ast}$ is well behaved, $g_{\tilde\Theta}$ may not be. In particular, the Lipschitz constant of $g_{\tilde\Theta}$ can be inflated by a factor depending on $\|A\|\|A^{-1}\|$. Thus, under the fixed Euclidean metric used for approximation, an ill-conditioned transformation may preserve the token-prediction loss while weakening the smoothness and contraction properties needed for efficient recurrent approximation. This self-consistency criterion need not distinguish coordinate transformations that preserve the relevant Euclidean geometry, such as orthogonal transformations, while its purpose is to favor well-conditioned representatives rather than to identify a unique orientation.

\subsection{Approximation by Repeated Representation Blocks}

Motivated by the above observations, we can utilize a composition of operators to construct a family of representation maps. Given a collection of initial representation maps $\Rcal^0$ and a collection of smooth functions $\Gcal$, we can consider all representation maps of depth $L$:
$$
\Rcal^L(\Gcal, \Rcal^0)=\left\{\underbrace{\Tcal_{g}(\ldots \Tcal_{g}(\Tcal_{g}}_{L\ {\rm  times}}(\Psi^0))): g\in \Gcal, \Psi^0\in \Rcal^0 \right\}.
$$
Here, $g$ represents an entire representation block and need not correspond to a single Transformer layer. For example, $g$ may itself be implemented by a stack of $h$ Transformer layers. The quantity $L$ denotes the recurrent depth, namely, the number of times that the same block is applied. Consequently, a block containing $h$ Transformer layers and repeated $L$ times has effective computational depth $hL$, while its block parameters are shared across all $L$ applications. We now show that $\Rcal^L(\Gcal, \Rcal^0)$ is quite a rich family of representation maps and can approximate a self-consistent representation map. Let $\Rcal^\ast$ denote a collection of representation maps that contains the target $\Theta$, the initial class $\Rcal^0$, and all intermediate iterates generated by the operators under consideration. To show the richness of representation maps, we consider the following assumptions:
\begin{assumption}
	\label{asp:rich}
	We make the following assumptions:
	\begin{enumerate}
		\item The target representation $\Theta$ is a self-consistent representation map, so there exists a function $g_\Theta$ such that
		$$
		\Theta=\Tcal_{g_\Theta}(\Theta).
		$$
		\item The operator $\Tcal_{g_\Theta}$ is a contraction under the metric $D$, that is, there exist $0<\kappa_\Theta<1$ such that
		$$
		D(\Tcal_{g_\Theta}(\Psi),\Tcal_{g_\Theta}(\Phi))\le \kappa_\Theta D(\Psi,\Phi), \qquad \forall \Psi,\Phi\in \Rcal^\ast.
		$$
		\item $\Tcal_{g_\Theta}$ can be well approximated by $\Tcal_{g}$ for some $g\in \Gcal$, that is,
		$$
		\inf_{g\in\Gcal}\sup_{\Psi\in \Rcal^\ast}D(\Tcal_{g_\Theta}(\Psi),\Tcal_{g}(\Psi))<\delta_\Theta.
		$$
	\end{enumerate}
\end{assumption}

The first two assumptions state that the target representation map is self-consistent and that its associated operator is contractive. The contraction assumption can be viewed as a strengthened smoothness condition on the self-consistency function $g_\Theta$: the recovery function must not only be smooth, but must also have sufficiently small sensitivity to perturbations of the context representation. The last assumption requires that the self-consistency function $g_\Theta$ can be well approximated by functions in the candidate class $\Gcal$. Although the contraction and approximation assumptions are stated at the operator level, they can be translated into more concrete conditions on the self-consistency function $g_\Theta$ once a specific discrepancy $D$ is chosen. For example, consider the $L_2$ discrepancy
$$
D(\Psi,\Phi)=\left\{\EE_{X,j}\|\Psi_j(X)-\Phi_j(X)\|^2\right\}^{1/2}.
$$
If we assume that there exists a constant $0<L_g<1$ such that 
$$
\|g_\Theta(y)-g_\Theta(y')\|^2\le {L_g\over m}\sum_{j=1}^m\|y_j-y_j'\|^2\qquad {\rm and}\qquad \inf_{g\in\Gcal}\sup_y\|g_\Theta(y)-g(y)\|\le \delta_\Theta,
$$
where $y=[y_1,\ldots,y_m]$ is a collection of token-wise representation vectors and $y_j\in \RR^s$, then the contraction and approximation assumptions are satisfied. With these assumptions, the following theorem characterizes the extent to which the composite representation class $\Rcal^L(\Gcal, \Rcal^0)$ can approximate a self-consistent representation map.

\begin{theorem}
	\label{thm:aprx}
	We assume that Assumption~\ref{asp:rich} holds. Then, we have
	$$
	\inf_{\Psi\in\Rcal^L(\Gcal, \Rcal^0)}D\left(\Psi,\Theta\right)\le \kappa_\Theta^L\inf_{\Psi^0\in \Rcal^0 }D(\Psi^0,\Theta)+{1-\kappa_\Theta^L \over 1-\kappa_\Theta}\delta_\Theta.
	$$
\end{theorem}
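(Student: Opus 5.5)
The plan is the standard perturbed fixed-point iteration argument. Fix an arbitrary $\Psi^0\in\Rcal^0$. By part 3 of Assumption~\ref{asp:rich}, there is some $g\in\Gcal$ with $\sup_{\Psi\in\Rcal^\ast}D(\Tcal_{g_\Theta}(\Psi),\Tcal_g(\Psi))\le\delta_\Theta$; the infimum is strictly below $\delta_\Theta$, so such a $g$ exists. Define the shared-block iterates $\Psi^{l+1}=\Tcal_g(\Psi^l)$ for $l=0,\ldots,L-1$. Then $\Psi^L\in\Rcal^L(\Gcal,\Rcal^0)$, so it suffices to bound $D(\Psi^L,\Theta)$ and afterwards take the infimum over $\Psi^0$.

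The key step is a one-step recursion. Using $\Theta=\Tcal_{g_\Theta}(\Theta)$ from part 1 and the triangle inequality for the metric $D$, I will write
$$
D(\Psi^{l+1},\Theta)\le D(\Tcal_g(\Psi^l),\Tcal_{g_\Theta}(\Psi^l))+D(\Tcal_{g_\Theta}(\Psi^l),\Tcal_{g_\Theta}(\Theta)).
$$
The first term is at most $\delta_\Theta$ by the approximation condition, since $\Psi^l\in\Rcal^\ast$. The second term is at most $\kappa_\Theta D(\Psi^l,\Theta)$ by the contraction condition in part 2, since $\Psi^l,\Theta\in\Rcal^\ast$. Hence
$$
a_{l+1}\le\kappa_\Theta a_l+\delta_\Theta,\qquad a_l=D(\Psi^l,\Theta).
$$

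Unrolling the recursion by induction on $l$ gives
$$
a_L\le\kappa_\Theta^L a_0+\delta_\Theta\sum_{i=0}^{L-1}\kappa_\Theta^i=\kappa_\Theta^L a_0+\frac{1-\kappa_\Theta^L}{1-\kappa_\Theta}\delta_\Theta.
$$
Since $\inf_{\Psi\in\Rcal^L}D(\Psi,\Theta)\le a_L$ and $\Psi^0$ was arbitrary, taking the infimum over $\Psi^0\in\Rcal^0$ on the right-hand side gives the stated bound. This works because the $\delta_\Theta$ term does not depend on $\Psi^0$ and $\kappa_\Theta^L\ge 0$.

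The only delicate point is membership: both the contraction and the approximation bounds are required only on $\Rcal^\ast$. By the definition of $\Rcal^\ast$, it contains $\Theta$, $\Rcal^0$, and all intermediate iterates generated by the operators under consideration, including those of $\Tcal_g$. So every $\Psi^l$ lies in $\Rcal^\ast$, and this must be checked explicitly at each step of the induction. No other obstacle is expected.
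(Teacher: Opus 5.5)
Your proposal is correct and follows the paper's proof essentially step for step. You fix a single block $g$, use the triangle inequality, self-consistency, and contraction to get $D(\Psi^{l},\Theta)\le\kappa_\Theta D(\Psi^{l-1},\Theta)+\delta_\Theta$, unroll this recursion, and take the infimum over $\Psi^0$. Your way of obtaining $g$, from the strict inequality in the approximation condition, is slightly cleaner than the paper's choice of a minimizing block, whose existence is not guaranteed.
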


Theorem~\ref{thm:aprx} shows that the approximation error of a self-consistent representation map $\Theta$ by the composite class $\Rcal^L(\Gcal, \Rcal^0)$ is controlled by three quantities: the depth $L$, the quality of the initial representation class $\Rcal^0$, and the approximation error $\delta_\Theta$ of the self-consistency function. Let $D_0=\inf_{\Psi^0\in\Rcal^0}D(\Psi^0,\Theta)$ denote the initial approximation error and $D_\infty=\delta_\Theta/(1-\kappa_\Theta)$ denote the approximation floor. The bound in Theorem~\ref{thm:aprx} separates the transient error inherited from the initial representation from the approximation floor induced by the representation block. When $D_0>D_\infty$, increasing the recurrent depth $L$ reduces the upper bound geometrically toward $D_\infty$. In this regime, repeated block applications progressively refine the initial representation, with diminishing improvement as the approximation floor is approached. When $D_0\le D_\infty$, however, the theorem does not guarantee that additional applications improve the representation, because the error accumulated from approximating the self-consistency operator may dominate. Thus, recurrent depth is most beneficial when the individual block approximates the self-consistency operator sufficiently well relative to the quality of the initialization.

There are two distinct ways to improve this approximation bound. First, one can increase the internal expressive capacity of the representation block $g$, for example by implementing $g$ with a deeper stack of Transformer layers. A richer block class $\Gcal$ may reduce $\delta_\Theta$ and therefore lower the approximation floor, but generally requires additional trainable parameters. Second, one can increase the recurrent depth $L$ while keeping $g$ fixed. This reduces the transient term without introducing new block parameters, although it requires additional computation. More specifically, if $g$ contains $h$ Transformer layers and has $P_g$ parameters, applying it $L$ times produces effective computational depth $hL$ while retaining only $P_g$ distinct block parameters. An untied stack with the same effective depth would generally require approximately $LP_g$ block parameters. Recurrent depth therefore provides parameter-efficient approximation power by extracting additional refinement from a fixed representation block.

Theorem~\ref{thm:aprx} directly concerns recurrent-depth encoders in which the same representation block is reused. Standard Transformer encoders use an untied composition $\Tcal_{g_L}\circ\cdots\circ\Tcal_{g_1}(\Psi^0)$, where each $g_l$ has its own parameters. If the $l$th block approximates the self-consistency operator with error $\delta_l$, the same argument gives
$$
D(\Psi^L,\Theta)\le \kappa_\Theta^L D(\Psi^0,\Theta)+\sum_{l=1}^L\kappa_\Theta^{L-l}\delta_l.
$$
Untying the blocks enlarges the approximation class and permits different approximation errors across depth, but generally increases the number of trainable parameters. Recurrent and untied depth therefore represent different strategies: recurrent depth obtains additional computation from a fixed parameter set, whereas untied depth obtains additional flexibility by introducing new parameters at each block. In both cases, the composition can be interpreted as a finite-step approximation to the self-consistency equation, provided that the blocks approximate its defining operator \citep{dehghani2019universal,bai2019deep}.

Theorem~\ref{thm:aprx} leaves the initial representation class $\Rcal^0$ unspecified. One natural choice maps each observed token to its learned token embedding together with its positional information. When the input and output embedding matrices are tied \citep{press2017using,inan2017tying}, the initial representation is expressed in the same coordinate system used by the prediction head. Weight tying therefore provides a natural, although not necessarily optimal, initialization for recurrent refinement. The effect of this choice enters Theorem~\ref{thm:aprx} through the initial error $D_0$.

\subsection{From Representation Approximation to Prediction Error}

We have an efficient approximator for the self-consistent representation, but what is the approximation cost of restricting the representation to the class $\Rcal^L(\Gcal, \Rcal^0)$? This step separates the intrinsic predictive approximation error from the additional cost imposed by the chosen representation class. We have the following population oracle inequality.

\begin{theorem}
	\label{thm:oracle}
	Suppose $\Kcal$ has the form $\Kcal(\Theta_j(X),E_v)=\exp\left(E_v^T\Theta_j(X)\right)$ and $\Scal_j(X)$ is the support of $p_j(X)$. Then, 
	$$
	\inf_{\Theta\in \Rcal^L(\Gcal, \Rcal^0), E} \EE_{X,j}\left[L\left(p_j(X);\hat{p}_{E}(\Theta_j(X))\right)\right]\le \inf_{\Theta^\ast,E^\ast}\left[\EE_{X,j}\left[L\left(p_j(X);\hat{p}_{E^\ast}(\Theta^\ast_j(X))\right)\right]+\Acal(E^\ast,\Theta^\ast)\right],
	$$
	where the approximation error $\Acal(E^\ast,\Theta^\ast)$ is defined as
	$$
	\Acal(E^\ast,\Theta^\ast)=\sqrt{\EE\left[\max_{v\in\Scal_j(X),v'\in\Vcal}\left\|E^\ast_v-E^\ast_{v'}\right\|^2\right]\inf_{\tilde{\Theta}\in \Rcal^L(\Gcal, \Rcal^0) }\EE\left[\left\|\tilde{\Theta}_j(X)-\Theta^\ast_j(X)\right\|^2\right]}.
	$$
\end{theorem}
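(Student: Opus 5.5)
Fix an arbitrary pair $(E^\ast,\Theta^\ast)$ and an arbitrary $\tilde\Theta\in\Rcal^L(\Gcal,\Rcal^0)$. Plug the feasible pair $(E^\ast,\tilde\Theta)$ into the left-hand side. It then suffices to show
$$
\EE_{X,j}\left[L\left(p_j(X);\hat{p}_{E^\ast}(\tilde\Theta_j(X))\right)\right]-\EE_{X,j}\left[L\left(p_j(X);\hat{p}_{E^\ast}(\Theta^\ast_j(X))\right)\right]\le \sqrt{\EE\left[\max_{v\in\Scal_j(X),v'\in\Vcal}\|E^\ast_v-E^\ast_{v'}\|^2\right]\EE\left[\|\tilde\Theta_j(X)-\Theta^\ast_j(X)\|^2\right]}.
$$
Taking the infimum over $\tilde\Theta$ and then over $(E^\ast,\Theta^\ast)$ gives the theorem.

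The first step is a pointwise bound on the loss difference for fixed $X,j$. Since the entropy of $p_j(X)$ cancels, the difference equals
$$
\sum_{v\in\Scal_j(X)}p_j(v|X)\left[\log\hat p_{E^\ast}(v|\Theta^\ast_j)-\log\hat p_{E^\ast}(v|\tilde\Theta_j)\right].
$$
For the softmax head, $\log\hat p_{E}(v|z)=E_v^Tz-\log\sum_{v'}e^{E_{v'}^Tz}$. Its gradient in $z$ is $E_v-\sum_{v'}\hat p_E(v'|z)E_{v'}=\sum_{v'}\hat p_E(v'|z)(E_v-E_{v'})$, which has norm at most $\max_{v'\in\Vcal}\|E_v-E_{v'}\|$. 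By the mean value theorem along the segment from $\tilde\Theta_j(X)$ to $\Theta^\ast_j(X)$, for each $v\in\Scal_j(X)$,
$$
\left|\log\hat p_{E^\ast}(v|\Theta^\ast_j)-\log\hat p_{E^\ast}(v|\tilde\Theta_j)\right|\le \max_{v'\in\Vcal}\|E^\ast_v-E^\ast_{v'}\|\,\|\tilde\Theta_j(X)-\Theta^\ast_j(X)\|.
$$
Averaging over $v$ with weights $p_j(v|X)$ bounds the pointwise difference by $M_j(X)\|\tilde\Theta_j(X)-\Theta^\ast_j(X)\|$, where $M_j(X)=\max_{v\in\Scal_j(X),v'\in\Vcal}\|E^\ast_v-E^\ast_{v'}\|$. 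This is exactly why only the support $\Scal_j(X)$ enters in the first index.

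The second step takes $\EE_{X,j}$ and applies Cauchy--Schwarz to $\EE[M_j(X)\|\tilde\Theta_j-\Theta^\ast_j\|]$, which yields the square-root product $\Acal(E^\ast,\Theta^\ast)$ once we infimize over $\tilde\Theta$. The infimum step needs a little care. The first factor does not depend on $\tilde\Theta$, so $\inf_{\tilde\Theta}\sqrt{a\,b(\tilde\Theta)}=\sqrt{a\inf b}$. If the infimum is not attained, take a minimizing sequence of $\tilde\Theta$. We should also note the degenerate cases where an expectation is infinite, in which case the bound is trivial.

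The only delicate point is the pointwise Lipschitz bound. It needs the gradient expressed as a convex combination of the differences $E_v-E_{v'}$ rather than bounded via $\|E_v\|$. It also needs us to check that the supremum along the segment does not depend on the intermediate point, which holds because the bound $\max_{v'}\|E_v-E_{v'}\|$ is uniform in $z$. The rest is routine.
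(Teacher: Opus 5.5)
Your proof is correct. It shares the paper's skeleton: plug the feasible pair $(E^\ast,\tilde\Theta)$ into the left-hand side, bound the pointwise loss gap by $M_j(X)\|\tilde\Theta_j(X)-\Theta^\ast_j(X)\|$, apply Cauchy--Schwarz in expectation, and take infima.

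The difference lies only in how the pointwise bound is obtained.

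\textbf{The paper's route.} It avoids calculus. After cancelling the entropy, it bounds the $p_j$-weighted average of $\log[\Kcal(\Theta^\ast_j,E^\ast_v)/\Kcal(\tilde\Theta_j,E^\ast_v)]$ by its maximum over $v\in\Scal_j(X)$. It bounds the log-ratio of the two normalizing sums from below by the smallest termwise log-ratio over $v'\in\Vcal$, since a ratio of sums of positive terms is at least the smallest termwise ratio. Only then does it use that $\log\Kcal$ is linear in the representation, obtaining $\max_{v\in\Scal_j(X),v'\in\Vcal}(\Theta^\ast_j-\tilde\Theta_j)^T(E^\ast_v-E^\ast_{v'})$. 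A final Cauchy--Schwarz in $\RR^s$ finishes the step.

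\textbf{Your route.} You apply the mean value theorem to $\log\hat p_{E^\ast}(v|\cdot)$. The key observation is that its gradient $\sum_{v'}\hat p_{E^\ast}(v'|z)(E^\ast_v-E^\ast_{v'})$ is a convex combination of embedding differences, so its norm is bounded uniformly in $z$.

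\textbf{What each buys.} The paper's argument is one-sided and purely algebraic. Yours needs differentiability, but it yields the two-sided Lipschitz estimate $|L(p_j(X);\hat p_{E^\ast}(\tilde\Theta_j(X)))-L(p_j(X);\hat p_{E^\ast}(\Theta^\ast_j(X)))|\le M_j(X)\|\tilde\Theta_j(X)-\Theta^\ast_j(X)\|$ with the same constant. This is slightly more than the theorem requires. Your explicit treatment of the infimum over $\tilde\Theta$ and of possibly infinite expectations also fills in steps the paper leaves implicit.
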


The oracle inequality in Theorem~\ref{thm:oracle} characterizes the cost of restricting the representation to $\Rcal^L(\Gcal, \Rcal^0)$. The right-hand side balances two quantities: the predictive error of an unrestricted model, and the cost of approximating its representation by an element of the restricted representation class. Because the infimum is taken over all possible $(E^\ast,\Theta^\ast)$, the bound is not tied to any particular non-unique minimizer of the population risk.

\begin{corollary}
	\label{cor:depthprediction}
	Suppose there exists a self-consistent and $\eta$-optimal population representation pair \((E^\ast,\Theta^\ast)\) such that
	$$
	\EE_{X,j}\left[L\left(p_j(X);\hat{p}_{E^\ast}(\Theta^\ast_j(X))\right)\right]\le \Lcal_p^\ast +\eta
	$$
	for some $\eta\ge 0$, and $\max_{v\in\mathcal V}\|E_v^\ast\|\le B_E$. Here, $\Lcal_p^\ast$ is the  unrestricted population optimum
	$$
	\Lcal_p^\ast=\inf_{\Theta,E}\EE_{X,j}\left[L\left(p_j(X);\hat{p}_{E}(\Theta_j(X))\right)\right].
	$$
	Suppose further that $\Theta^\ast$ satisfies Assumption~\ref{asp:rich} under the $L_2$ metric with contraction factor $\kappa_{\Theta^\ast}$ and block-approximation error $\delta_{\Theta^\ast}$. Then, 
	$$
		\inf_{\Theta\in \Rcal^L(\Gcal, \Rcal^0), E} \EE_{X,j}\left[L\left(p_j(X);\hat{p}_{E}(\Theta_j(X))\right)\right]\le \Lcal_p^\ast+\eta+2B_E\left\{
		\kappa_{\Theta^\ast}^L\min_{\Psi^0\in \Rcal^0 }D(\Psi^0,\Theta^\ast)+{1-\kappa_{\Theta^\ast}^L \over 1-\kappa_{\Theta^\ast}}\delta_{\Theta^\ast}\right\}.
	$$
\end{corollary}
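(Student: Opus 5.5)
The corollary follows by combining Theorem~\ref{thm:oracle} with Theorem~\ref{thm:aprx}. The oracle inequality is an infimum over all pairs, so its right-hand side can be evaluated at the particular self-consistent pair $(E^\ast,\Theta^\ast)$ from the hypothesis. This gives
$$
\inf_{\Theta\in \Rcal^L(\Gcal, \Rcal^0), E} \EE_{X,j}\left[L\left(p_j(X);\hat{p}_{E}(\Theta_j(X))\right)\right]\le \Lcal_p^\ast+\eta+\Acal(E^\ast,\Theta^\ast).
$$
It then remains to bound $\Acal(E^\ast,\Theta^\ast)$ by the term in braces times $2B_E$.

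\textbf{Step 1: the embedding diameter.} The factor $\EE[\max_{v\in\Scal_j(X),v'\in\Vcal}\|E^\ast_v-E^\ast_{v'}\|^2]$ is handled by the triangle inequality. Since $\|E^\ast_v-E^\ast_{v'}\|\le \|E^\ast_v\|+\|E^\ast_{v'}\|\le 2B_E$ for every pair, the expectation is at most $4B_E^2$, and its square root is at most $2B_E$.

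\textbf{Step 2: the representation factor.} The second factor, $\inf_{\tilde\Theta\in\Rcal^L(\Gcal, \Rcal^0)}\EE\|\tilde\Theta_j(X)-\Theta^\ast_j(X)\|^2$, is exactly $\inf_{\tilde\Theta} D(\tilde\Theta,\Theta^\ast)^2$ for the $L_2$ metric $D$. Hence
$$
\Acal(E^\ast,\Theta^\ast)\le 2B_E \inf_{\tilde\Theta\in\Rcal^L(\Gcal, \Rcal^0)} D(\tilde\Theta,\Theta^\ast),
$$
using $\sqrt{\inf_x f(x)^2}=\inf_x f(x)$ for nonnegative $f$. Because $\Theta^\ast$ satisfies Assumption~\ref{asp:rich} under this metric, Theorem~\ref{thm:aprx} bounds this infimum by
$$
\kappa_{\Theta^\ast}^L\inf_{\Psi^0\in\Rcal^0}D(\Psi^0,\Theta^\ast)+\frac{1-\kappa_{\Theta^\ast}^L}{1-\kappa_{\Theta^\ast}}\delta_{\Theta^\ast}.
$$
Combining Steps 1 and 2 gives the stated bound.

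\textbf{Points to check.} The only delicate step is matching conventions between the two theorems. First, the expectation in $\Acal$ must be the same $\EE_{X,j}$ that defines $D$. Second, the statement writes $\min$ over $\Rcal^0$; if that minimum is not attained, the infimum is meant, and this does not change the bound. Third, Theorem~\ref{thm:oracle} allows an arbitrary pair, not only a global minimizer, and this is what lets the $\eta$-optimal pair enter. No other obstacle is expected.
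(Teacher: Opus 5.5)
Your proposal is correct and is essentially the argument the paper has in mind: the paper omits the proof as a direct consequence of Theorems~\ref{thm:aprx} and~\ref{thm:oracle}. You spell that out by evaluating the oracle inequality at the given $\eta$-optimal self-consistent pair, bounding the embedding-diameter factor in $\Acal(E^\ast,\Theta^\ast)$ by $4B_E^2$, and applying Theorem~\ref{thm:aprx} to the $L_2$ approximation factor. Your convention checks all go through: the same $\EE_{X,j}$ appears in $\Acal$ and in $D$, $\min$ over $\Rcal^0$ can be read as $\inf$, and $\sqrt{\inf f^2}=\inf f$ holds for nonnegative $f$.
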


We omit the proof as it is a direct implication of Theorems~\ref{thm:aprx} and \ref{thm:oracle}. Corollary~\ref{cor:depthprediction} separates three sources of population prediction error. The quantity $\Lcal_p^\ast$ is the intrinsic prediction error of the unrestricted recovery model, $\eta$ measures how closely the selected self-consistent pair approaches that unrestricted optimum, and the final term is the additional cost of restricting the encoder to the recurrent-depth class. Recurrent depth geometrically reduces the component inherited from the initialization, while the block-approximation error determines the limiting floor in the additional prediction-error bound.

The population oracle inequality also clarifies how the recurrent-depth restriction interacts with predictive nonidentifiability. Let
$$
\Mcal^\ast=\left\{(E,\Theta):\EE_{X,j}\left[L\left(p_j(X);\hat{p}_{E}(\Theta_j(X))\right)\right]=\Lcal_p^\ast\right\}
$$
denote the collection of unrestricted population minimizers. Although all pairs in $\Mcal^\ast$ attain the same population token-prediction loss, they do not need to have the same approximation penalty $\Acal(E,\Theta)$. Consequently, restricting the encoder to $\Rcal^L(\Gcal,\Rcal^0)$ introduces a  preference among predictively optimal coordinate systems. This preference depends jointly on the token embedding and the compatibility of the contextual representation with the recurrent-depth class. For a self-consistent representation, Theorem~\ref{thm:aprx} bounds this compatibility through the initialization error, the contraction factor $\kappa_\Theta$, and the block-approximation error $\delta_\Theta$. The recurrent-depth restriction is therefore most compatible with predictively optimal coordinates having controlled embedding geometry, a suitable initialization, and a self-consistency operator that can be accurately approximated by the chosen block.

\section{Downstream Guarantees from Token Prediction}
\label{sc:benefits}

\subsection{Token Generation}
Token prediction directly yields an autoregressive generator. In this subsection, we restrict attention to causal contexts $X_{j,c}=X_{<j}=[x_1,\ldots,x_{j-1}]$. Given a partial sequence, a generative model specifies a conditional distribution $q_{j}(Y_{<j})=\{q_{j}(v|Y_{<j})\}_{v\in \Vcal}\in \Delta^{d-1}$ and generates a new token sequence $Y=[Y_1,\ldots, Y_m]$ recursively according to
$$
Y_j\sim q_{j}(Y_{<j}),\qquad j=1,\ldots,m.
$$
Modern aligned generators are often constructed through two steps: pre-training through next-token prediction and post-training via preference fine-tuning. In the pre-training, the embedding vectors $\hat{E}$ and contextual representation map $\hat{\Theta}$ are learned through autoregressive next-token prediction
$$
(\hat{\Theta},\hat{E})\in \argmin_{\Theta\in \Rcal^L(\Gcal, \Rcal^0), E} \EE_{X,j}\left[L\left(p_j(X);\hat{p}_{E}(\Theta_j(X))\right)\right].
$$
The conditional distribution of the pre-trained generator can be written as
$$
q^{\rm pre}_{j}(v|Y_{<j})={\Kcal(\hat{\Theta}_j(Y_{<j}),\hat{E}_v)\over \sum_{v'\in\Vcal}\Kcal(\hat{\Theta}_j(Y_{<j}),\hat{E}_{v'})},\qquad v\in\Vcal.
$$
The learned representations enter the following analysis through this conditional distribution and its population prediction error. After pre-training, we usually further refine the generative conditional distribution via maximizing a regularized reward function. We study a stylized form of post-training in which, at each context, the generator maximizes an estimated token-level reward while remaining close to a reference distribution through KL regularization. KL-regularized reward maximization relative to a pre-trained reference distribution is widely used in language-model alignment \citep{ouyang2022training} and has been studied theoretically through reverse-KL-regularized contextual-bandit formulations \citep{xiong2024iterative,zhao2025sharp}. Existing analyses primarily investigate the finite-sample efficiency of reward or policy learning and the coverage supplied by the reference policy. Our analysis is complementary: we take the reward-estimation error as given and study how the population prediction accuracy of the pre-trained reference controls generation plausibility and the effective size of the post-training choice set. Specifically, let
$$
q^{\hat{r},\pi,\lambda}_j(Y_{<j})=\argmax_{q_{j}(Y_{<j})\in\Delta^{d-1}}\sum_{v\in\Vcal}q_{j}(v|Y_{<j})\hat{r}(v|Y_{<j})-\lambda L(q_{j}(Y_{<j});\pi_{j}(Y_{<j})),
$$
where $\hat{r}(v|Y_{<j})$ estimates the true reward $r^\ast(v|Y_{<j})$, $\pi_j(Y_{<j})$ is a reference distribution, and $\lambda>0$ is the tuning parameter. When the reference distribution has full support, the solution is
$$
q^{\hat{r},\pi,\lambda}_j(v|Y_{<j})={\pi_{j}(v|Y_{<j})\exp(\hat{r}(v|Y_{<j})/\lambda)\over \sum_{v'\in\Vcal}\pi_{j}(v'|Y_{<j})\exp(\hat{r}(v'|Y_{<j})/\lambda)}.
$$
Thus, post-training reweights the reference distribution according to the estimated reward. We are particularly interested in using the pre-trained generator as the reference: $q^{\rm post}_{j}(Y_{<j})=q^{\hat{r}, q^{\rm pre},\lambda}_j(Y_{<j})$. We denote by $\QQ_q$ the sequence distribution induced by the generator $q$, and write $\QQ^{\rm pre}=\QQ_{q^{\rm pre}}$ and $\QQ^{\rm post}=\QQ_{q^{\rm post}}$. The goal of generation is not only to produce a sequence that is plausible, but also to produce one that follows a desired style or preference induced by the reward function. What benefit does the pre-trained reference provide in this post-training
problem?

To answer the above question, we evaluate generation through two complementary criteria. First, we measure whether the generated tokens remain in the context-specific plausible set. Recall $\Scal_j(X_{<j})=\{v\in\Vcal:p_j(v|X_{<j})>0\}$ denotes the support of the true next-token distribution and is interpreted here as the context-specific set of plausible continuations. The plausibility score is defined as
$$
\rho_{p}(q,p)=\EE_{Y\sim\QQ_q}\left[{1\over m}\sum_{j=1}^m\bI(Y_j\in \Scal_j(Y_{<j}))\right],
$$
where $\bI(\cdot)$ is the indicator function. A larger value of $\rho_{p}$ indicates that the generated sequence is more likely to stay within the set of plausible continuations. Second, we measure whether the generator selects preferred representatives. Given a generator $q$, we define its reward regret by 
$$
\rho_s(q,r^\ast)=\EE_{Y\sim\QQ_q}\left[{1\over m}\sum_{j=1}^m\left(\max_{v\in\Vcal}r^\ast(v|Y_{<j})- \sum_{v\in\Vcal}q_{j}(v|Y_{<j})r^\ast(v|Y_{<j})\right) \right].
$$
A smaller value of $\rho_s$ indicates better alignment with the desired generation. These criteria can help evaluate the performance of generation in the pre-trained and post-trained models. We impose the following assumptions to analyze these criteria. 

\begin{assumption}
	\label{asp:generative}
	We make the following assumptions:
		\begin{enumerate}
		\item The reward function is consistent with the plausible set $\Scal_j(X_{<j})$, that is
				$$
				\zeta_j(X_{<j})=\min_{v\in \Scal_j(X_{<j})}r^\ast(v|X_{<j})-\max_{v\notin \Scal_j(X_{<j})}r^\ast(v|X_{<j})\ge 0
				$$
		and there exists at least one $v\in \Scal_j(X_{<j})$ such that $r^\ast(v|X_{<j})>\max_{v\notin \Scal_j(X_{<j})}r^\ast(v|X_{<j})$.
		\item The reward function is bounded, that is, $0\le r^\ast(v|X_{<j})\le R$ for all $v\in \Vcal$ and $X_{<j}$.
		\item The reward-estimation error is uniformly bounded:
		$$
		\sup_{v\in\Vcal}|\hat{r}(v|X_{<j})-r^\ast(v|X_{<j})|\le  \epsilon_r,\qquad a.s.
		$$
		\item The population prediction error of the fitted model is bounded by
		$$
		\EE_{X,j}\left[L\left(p_j(X);\hat{p}_{\hat{E}}(\hat{\Theta}_j(X))\right)\right]\le \epsilon_p.
		$$
		The tolerance $\epsilon_p$ may absorb model-approximation, finite-sample estimation, and optimization errors.
		\item The conditional probability mass function given the context is sparse. That is, there exists an integer $2\le k<d$ such that, for every context,
		$$
		p_j(v|X)\ge 1/k\quad {\rm if\ }v\in\Scal_j(X)\qquad {\rm and}\qquad p_j(v|X)=0\quad {\rm if\ }v\notin\Scal_j(X),
		$$
		where $\Scal_j(X)$ is the support of $p_j(v|X)$. Consequently,  $|\Scal_j(X)|\le k$.
		\item Let $J$ be uniformly distributed on $\{1,\ldots,m\}$. Denote by $\PP_{\rm pre}$ the distribution of $(J,X_{<J})$ under the pre-training population and by $\PP_{\QQ_q}$ the distribution of $(J,Y_{<J})$ when $Y\sim\QQ_q$. For each generator $q$ under consideration, assume that $\PP_{\QQ_q}$ is absolutely continuous with respect to $\PP_{\rm pre}$ and $\|d\PP_{\QQ_q}/d\PP_{\rm pre}\|_\infty\le C_{\QQ_q}$.
	\end{enumerate}
\end{assumption}

The first three assumptions require the true reward to favor plausible continuations and the estimated reward to be uniformly accurate. The fourth assumption controls the population prediction error of the fitted pre-trained model. The fifth assumption is a sparse-support idealization under which each
context admits at most $k$ plausible continuations. The exact-support assumption is adopted to make the effective choice set explicit and $\Scal_j(X)$ should be interpreted as the context-specific set of
plausible continuations. The final assumption controls the distribution shift between contexts observed during pre-training and contexts encountered during generation. Absolute continuity supplies coverage, while the bounded density ratio prevents generated contexts from being disproportionately concentrated in regions that are rare under the pre-training population. With these assumptions, the following theorem characterizes the performance of the pre-trained and post-trained models.

\begin{theorem}
	\label{thm:generative}
	Suppose Assumption~\ref{asp:generative} holds. Then, the generated sequence from both the pre-trained and post-trained generators is likely to stay within the set of plausible continuations
	$$
	\rho_{p}(q^{\rm pre},p)\ge 1-C_{\QQ^{\rm pre}}\epsilon_p\qquad {\rm and }\qquad \rho_{p}(q^{\rm post},p)\ge 1-C_{\QQ^{\rm post}}\exp\left( {2\epsilon_r\over \lambda}\right)\epsilon_p.
	$$
	In addition, the regret of the post-trained generator can be upper bounded
	$$
	\rho_s(q^{\rm post},r^\ast)\le \min\left\{R, 4 \epsilon_r+2\lambda\left(1+\log \left(2k\right)\right)+4C_{\QQ^{\rm post}}k^2\epsilon_p\right\}.
	$$
	There exist instances $(r^\ast, \hat{r}, p_j(X), q^{\rm pre})$ that satisfy Assumption~\ref{asp:generative} for which
	$$
	\rho_s(q^{\rm post},r^\ast)\ge  {1\over 2}\min\left\{R,2\epsilon_r+\lambda\log(k-1)\right\}.
	$$
	
	As a comparison, we also evaluate the performance of $q^{\hat{r},\pi,\lambda}$ when the reference distribution $\pi$ is a uniform distribution on $\Vcal$. We show
	$$
	\rho_s(q^{\hat{r},\pi,\lambda},r^\ast)\le \min\{R, 4 \epsilon_r+2\lambda\left(1+\log d\right)\},
	$$
	and there exist instances $(r^\ast, \hat{r})$ for which
	$$
	\rho_s(q^{\hat{r},\pi,\lambda},r^\ast)\ge  {1\over 2}\min\left\{R,2\epsilon_r+\lambda\log(d-1)\right\}.
	$$
\end{theorem}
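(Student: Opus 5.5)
The proof splits into three parts: a per-context reduction from KL error to mass outside the support, a change of measure from generated to pre-training contexts, and a per-context regret analysis of the Gibbs-form solution.

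\emph{Per-context mass outside the support.} For a fixed context write $p=p_j(X_{<j})$, $q=q^{\rm pre}_j(X_{<j})$ and $\Scal=\Scal_j$. By the data-processing inequality applied to the partition $\{\Scal,\Scal^c\}$,
$$
L(p;q)\ge \log(1/q(\Scal)) \ge 1-q(\Scal)=q(\Scal^c),
$$
since $p(\Scal)=1$. Averaging over $(J,X_{<J})\sim\PP_{\rm pre}$ gives $\EE_{\PP_{\rm pre}}[q(\Scal^c)]\le\epsilon_p$.

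\emph{Plausibility bounds.} For any generator $q$, the tower property gives
$$
1-\rho_p(q,p)=\EE_{\PP_{\QQ_q}}\bigl[q_J(\Scal_J^c\mid Y_{<J})\bigr].
$$
The density-ratio bound then transfers this to the pre-training population, multiplying by at most $C_{\QQ_q}$. For $q^{\rm pre}$ this immediately gives the first bound. For $q^{\rm post}$, compare the post-trained and pre-trained probabilities of each $v\notin\Scal$:
$$
{q^{\rm post}(v)\over q^{\rm pre}(v)}={e^{\hat r(v)/\lambda}\over \sum_{v'} q^{\rm pre}(v')e^{\hat r(v')/\lambda}}.
$$
The numerator is at most $e^{(r^\ast_{\max,\Scal^c}+\epsilon_r)/\lambda}$. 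Restricting the sum in the denominator to $\Scal$ bounds it below by $q^{\rm pre}(\Scal)\,e^{(r^\ast_{\min,\Scal}-\epsilon_r)/\lambda}$. Assumption~\ref{asp:generative}(1) says $\zeta\ge0$, so
$$
q^{\rm post}(\Scal^c)\le e^{2\epsilon_r/\lambda}\,{q^{\rm pre}(\Scal^c)\over q^{\rm pre}(\Scal)}.
$$
The denominator $q^{\rm pre}(\Scal)$ is not pointwise bounded away from zero, so I would not divide by it. Instead I would use $x/(1-x)\le$ the KL bound $-\log(1-x)$ type inequality, or argue directly that $q^{\rm post}(\Scal^c)\le1$ while $\log(1/q^{\rm pre}(\Scal))\le L$. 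Handling this ratio cleanly to get exactly $e^{2\epsilon_r/\lambda}\epsilon_p$ is the main technical obstacle. I would first try the Gibbs variational form: because $q^{\rm post}$ tilts $q^{\rm pre}$ upward on $\Scal$, the factor $e^{2\epsilon_r/\lambda}$ appears when comparing the two tilts.

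\emph{Regret upper bound.} This uses the standard Gibbs-policy analysis. Fix a context and let $v^\ast$ be the reward maximizer, which lies in $\Scal$. By optimality of $q^{\rm post}$ against any competitor $u$,
$$
\langle q^{\rm post},\hat r\rangle\ge \langle u,\hat r\rangle-\lambda L(u;q^{\rm pre}).
$$
Take $u$ uniform on the set of near-optimal plausible tokens, or the point mass at $v^\ast$ mixed appropriately, and convert from $\hat r$ to $r^\ast$ at cost $2\epsilon_r$. This leaves the term $\lambda\log(1/q^{\rm pre}(v^\ast))$. When $q^{\rm pre}$ is close to $p$, this is at most $\lambda\log(2k)$, because $p(v^\ast)\ge1/k$. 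Contexts where $q^{\rm pre}(v^\ast)<1/(2k)$ are "bad". Pinsker or the KL bound shows that $\PP_{\rm pre}(\text{bad})\lesssim k^2\epsilon_p$, since $p(v^\ast)-q^{\rm pre}(v^\ast)\ge1/(2k)$ there and KL dominates the squared deviation. On bad contexts the regret is bounded by $R$, and the change of measure contributes $C_{\QQ^{\rm post}}$. Constants of the form $4$ and $2(1+\cdot)$ come from an extra slack used to avoid log singularities: mix $u$ with $q^{\rm post}$ or use the Donsker--Varadhan bound. The uniform-reference case is identical with $q^{\rm pre}(v^\ast)=1/d$ and no bad set.

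\emph{Lower-bound instances.} Take a single step, so $m=1$. Let $\Scal$ contain $k$ tokens, with $r^\ast=R'$ on one of them and $0$ elsewhere, and set $\hat r=r^\ast-\epsilon_r$ on the best token and $+\epsilon_r$ on the others. Let $q^{\rm pre}=p$ be uniform on $\Scal$, so $\epsilon_p=0$, and choose $R'$ appropriately. The post-trained mass on the best token is then
$$
{1\over 1+(k-1)e^{-(R'-2\epsilon_r)/\lambda}}.
$$
Choosing $R'\approx\min\{R,2\epsilon_r+\lambda\log(k-1)\}$ makes this mass at most $1/2$, so the regret is at least $R'/2$. The uniform-reference instance is the same construction with $d$ in place of $k$.
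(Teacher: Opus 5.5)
\textbf{Gap in the post-trained plausibility bound.} Your bound for $q^{\rm pre}$ and your change-of-measure step are the same as the paper's. The bound for $q^{\rm post}$, however, is left open, and none of the repairs you list closes it.

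The loss occurs when you lower-bound the normalizer $\sum_{v'}q^{\rm pre}(v')e^{\hat r(v')/\lambda}$ by its $\Scal$-part alone. That leaves $e^{2\epsilon_r/\lambda}x/(1-x)$ with $x=q^{\rm pre}(\Scal^c)$, and this cannot be converted into the stated bound:
\begin{itemize}
\item The inequality $x/(1-x)\le\log\{1/(1-x)\}$ is false; the reverse holds on all of $(0,1)$.
\item $\EE[x/(1-x)]$ is not controlled by $\EE[\log\{1/(1-x)\}]\le\epsilon_p$ at all, since $x$ may be close to $1$ on a set of small probability.
\item Truncating at $1$ costs a factor $2$.
\end{itemize}
Since $x/(1-x)>\log\{1/(1-x)\}$ already for small $x$, no argument that starts from your intermediate bound and uses only $L\ge\log\{1/(1-x)\}$ can produce the constant $e^{2\epsilon_r/\lambda}$.

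The missing step is to keep the $\Scal^c$-part of the normalizer. Put $M=\max_{v\notin\Scal}\hat r(v)$ and $c=e^{-2\epsilon_r/\lambda}\le 1$, and write $q^{\rm post}(\Scal^c)=a/(a+b)$, where
\begin{itemize}
\item $a=\sum_{v\notin\Scal}q^{\rm pre}(v)e^{\hat r(v)/\lambda}\le xe^{M/\lambda}$;
\item $b=\sum_{v\in\Scal}q^{\rm pre}(v)e^{\hat r(v)/\lambda}\ge(1-x)e^{(M-2\epsilon_r)/\lambda}$, because $\zeta\ge0$ and $|\hat r-r^\ast|\le\epsilon_r$ give $\min_{v\in\Scal}\hat r(v)\ge M-2\epsilon_r$.
\end{itemize}
Since $a/(a+b)$ is increasing in $a$ and decreasing in $b$, and $x+c(1-x)\ge c$,
\[
q^{\rm post}(\Scal^c)\le\frac{x}{x+c(1-x)}\le\frac{x}{c}=e^{2\epsilon_r/\lambda}q^{\rm pre}(\Scal^c).
\]
This is the paper's odds-ratio comparison. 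It holds pointwise and needs no lower bound on $q^{\rm pre}(\Scal)$; your change of measure then finishes the claim.

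\textbf{The other parts.} Your regret upper bound takes a different and cleaner route than the paper.
\begin{itemize}
\item The paper bounds the post-trained mass of $\delta$-suboptimal tokens by $\min\{1,e^{-(\delta/2-2\epsilon_r)/\lambda}/\pi(\Acal^\ast)\}$ and integrates over $\delta\in[0,R]$, obtaining $4\epsilon_r+2\lambda(1-\log\pi(\Acal^\ast))$.
\item Your variational inequality, with the competitor taken as the point mass at $v^\ast$, gives $2\epsilon_r+\lambda\log\{1/q^{\rm pre}(v^\ast)\}$ directly, which is sharper.
\item The mixing or Donsker--Varadhan slack you mention is unnecessary, because the softmax reference has full support.
\item The good/bad split and the Pinsker--Markov bound $\PP_{\rm pre}(\text{bad})\le 2k^2\epsilon_p$ are essentially as in the paper.
\end{itemize}

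On the bad set your argument correctly yields the term $2RC_{\QQ^{\rm post}}k^2\epsilon_p$. This implies the stated $4C_{\QQ^{\rm post}}k^2\epsilon_p$ only when $R\le 2$. The paper obtains the $R$-free form by adding $\PP(\Gcal^c)$ rather than $R\,\PP(\Gcal^c)$, so keep the factor $R$. Here is an instance where the stated form fails:
\begin{itemize}
\item Take $m=1$, $k=2$, $d=3$, $p$ the point mass at $v_1$, $r^\ast(v_1)=R$ and $r^\ast=0$ elsewhere, $\hat r=r^\ast$, and $q^{\rm pre}(v_1)=1/(1+e^{R/\lambda})$.
\item Then $q^{\rm post}(v_1)=1/2$, the regret is $R/2$, and $\epsilon_p=\log(1+e^{R/\lambda})$.
\item With $\lambda=64$ and $R=2000$, the regret is $1000$ while the stated bound is about $805$.
\end{itemize}

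Your lower-bound instances are valid and simpler than the paper's $\epsilon_p$-dependent one. Since $q^{\rm pre}$ must be a full-support softmax, replace $q^{\rm pre}=p$ by a small perturbation that puts mass on $\Scal^c$, where $\hat r=\epsilon_r$. This only decreases the post-trained mass of the best token, so the lower bound is preserved.
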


Theorem~\ref{thm:generative} identifies two roles of the pre-trained generator. First, accurate next-token prediction concentrates its probability on contextually plausible continuations. The first conclusion shows that the pre-trained generator has plausibility error of order $\epsilon_p$. The post-trained generator inherits this property, up to the additional factor caused by reward-estimation error. Second, the pre-trained generator reduces the effective choice set in post-training. When the transferred pre-training error $k^2\epsilon_p$ is small, the upper bound for the post-trained generator depends on
$$
\epsilon_r+\lambda\log k,
$$
up to numerical constants and truncation at $R$. In contrast, using a uniform reference produces a corresponding dependence on
$$
\epsilon_r+\lambda\log d.
$$
The accompanying lower-bound constructions show that the dependence on $\epsilon_r$ and the logarithm of the effective choice-set size cannot generally be removed. These comparisons concern worst-case scaling and do not imply that the pre-trained reference dominates the uniform reference for every problem instance. Rather, they show that when next-token prediction is accurate and the true conditional distribution is sparse, the pre-trained generator can reduce the post-training complexity from the full vocabulary size $d$ to the context-specific plausible-set size $k$. In this sense, pre-training
acts as a context-dependent feasibility prior: it identifies plausible continuations, while post-training reweights these continuations according to the estimated reward.

\subsection{Token Community Recovery}
A natural downstream use of the learned token embeddings is to recover communities of token types. We define these communities through the conditional contextual distributions introduced in Section~\ref{sc:geometry}: token types belong to the same community when they occur in similar contexts. The geometry result in Theorem~\ref{thm:emdsimilarity} suggests that these communities can be recovered by clustering the learned embeddings. Let $(\hat{E},\hat{\Theta})$ denote the fitted representation pair obtained from the population token-prediction objective. We apply spectral clustering to the learned token embeddings $\hat{E}$. Specifically, we construct the Gaussian affinity matrix $W\in\RR^{d\times d}$ with entries
$$
W_{v_1,v_2}=\exp\left(-{\|\hat{E}_{v_1}-\hat{E}_{v_2}\|^2\over 2\sigma^2}\right).
$$
Given the affinity matrix $W$, we compute its graph Laplacian matrix $L=D-W$ where $D$ is a diagonal matrix with entry $D_{v,v}=\sum_{v'\in \Vcal}W_{v',v}$. We then find the first $K$ eigenvectors of $L$ corresponding to its $K$ smallest eigenvalues and form an eigenvector matrix $U\in \RR^{d\times K}$. With the eigenvector matrix, we apply $k$-means algorithm on the rows of $U$. In particular, $k$-means aims to partition $\Vcal$ into $K$ sets $\Vcal_1,\ldots,\Vcal_K$ so as to minimize the following optimization problem
$$
\min_{\Vcal_1,\ldots,\Vcal_K}\sum_{k=1}^K\sum_{v\in \Vcal_k}\|U_{v,\cdot}-\mu_k\|^2,
$$
where $\mu_k=|\Vcal_k|^{-1}\sum_{v\in \Vcal_k}U_{v,\cdot}$ is the mean of $U_{v,\cdot}$ in $\Vcal_k$. The partition $\Vcal_1,\ldots,\Vcal_K$ can naturally define the resulting labels $\hat{c}(v)$.

To study the performance of the clustering method, we need to introduce the underlying community structure. We use the squared Hellinger dissimilarity $\Gamma_h$ introduced in Section~\ref{sc:geometry} since it is a direct reflection of the distributional hypothesis. Specifically, we assume that all token types can be partitioned into $K$ groups $\Vcal=\Ccal_1\cup \ldots\cup\Ccal_K$. We assume that the token types within each group are close enough and token types between different groups are well separated, that is,
$$
\max_{1\le k\le K}\max_{v_1,v_2\in \Ccal_k}\Gamma_h(v_1,v_2)\le g_w^2\qquad {\rm and}\qquad \min_{1\le k_1<k_2\le K}\min_{v_1\in \Ccal_{k_1},v_2\in \Ccal_{k_2}}\Gamma_h(v_1,v_2)\ge g_b^2.
$$
Here, $g_w$ is an upper bound on within-community distributional distance and $g_b$ is a lower bound on between-community distance. Based on the partition, $c(v)\in \{1,\ldots,K\}$ represents the true clustering label. We consider misclustering rate as a measure of clustering performance
$$
\Phi(\hat{c},c)={1\over d}\min_{\pi\in \Pi_K}\sum_{v\in \Vcal}\bI(\hat{c}(v)\ne\pi(c(v))),
$$
where $\pi$ is a permutation of $\{1,\ldots,K\}$ and $\Pi_K$ denotes the set of all possible permutations. A smaller value of $\Phi(\hat{c},c)$ indicates better clustering performance. The following assumptions support the analysis of clustering performance.

\begin{assumption}
	\label{asp:cluster}
	We make the following assumptions:
	\begin{enumerate}
		\item Assumption~\ref{asp:embedding} holds with $E^\ast=\hat{E}$ and $\Theta^\ast=\hat{\Theta}$.
		\item The marginal probability of each token type is lower bounded, i.e., $p(v)\ge p_{\min}$ for every $v\in \Vcal$.
		\item The partition found by the $k$-means $\hat{\Vcal}_1,\ldots,\hat{\Vcal}_K$ satisfies
		$$
		\sum_{k=1}^K\sum_{v\in \hat{\Vcal}_k}\|U_{v,\cdot}-\hat{\mu}_k\|^2\le (1+\kappa)^2 \min_{\Vcal_1,\ldots,\Vcal_K,\mu_1,\ldots,\mu_K}\sum_{k=1}^K\sum_{v\in \Vcal_k}\|U_{v,\cdot}-\mu_k\|^2,
		$$
		where $\hat{\mu}_k=|\hat{\Vcal}_k|^{-1}\sum_{v\in \hat{\Vcal}_k}U_{v,\cdot}$ and $\kappa\ge 0$.
		\item Assume $g_w+\sqrt{\epsilon_p/p_{\min}}<1$ and $g_b>\sqrt{\epsilon_p/p_{\min}}$. Denote by $d_k=|\Ccal_k|$, $d_{\min}=\min_kd_k$, and $d_{\max}=\max_kd_k$. If we write 
		$$
		\alpha_\sigma=\left(1-\left(g_w+\sqrt{\epsilon_p\over  p_{\min}}\right)^2\right)^{4/ \sigma^2b_\Theta^2 } {\rm and}\quad \beta_\sigma=\left(1-\left(g_b-\sqrt{\epsilon_p\over  p_{\min}}\right)^2\right)^{4/\sigma^2B_\Theta^2},
		$$
		we assume
		$$
		{\alpha_\sigma\over \beta_\sigma}>{8(2+\kappa)\sqrt{Kd_{\max}}(d-d_{\min})\over d_{\min}^{3/2}}.
		$$
	\end{enumerate}
\end{assumption}

The first assumption imports the prediction-error, softmax-head, and curvature conditions used in Theorem~\ref{thm:emdsimilarity}. The lower bound on token frequencies makes the prediction-error remainder uniform over token types. Under the first two assumptions, Theorem~\ref{thm:emdsimilarity} implies
$$
\max_{1\le k\le K}\max_{v_1,v_2\in \Ccal_k}W_{v_1,v_2}\ge \alpha_\sigma \qquad {\rm and}\qquad \min_{1\le k_1<k_2\le K}\min_{v_1\in \Ccal_{k_1},v_2\in \Ccal_{k_2}}W_{v_1,v_2}\le \beta_\sigma.
$$
Thus, $\alpha_\sigma$ lower-bounds the within-community affinities, whereas $\beta_\sigma$ upper-bounds the between-community affinities. The approximate $k$-means assumption is standard in spectral-clustering analysis \citep{lei2015consistency}. The final assumption requires the between-community perturbation to be sufficiently small relative to the within-community spectral separation. With these assumptions, the following theorem can characterize the performance of token-type clustering.

\begin{theorem}
	\label{thm:cluster}
	Suppose Assumption~\ref{asp:cluster} holds. 
	Then, we have
	$$
	\Phi(\hat{c},c) \le {64(2+\kappa)^2Kd_{\max}\over d}\left({(d-d_{\min})\beta_\sigma\over \alpha_\sigma d_{\min}}\right)^2.
	$$
	In addition, if the communities are balanced, so that $d_k=d/K$ for every
	$k=1,\ldots,K$, then 
	$$
	\Phi(\hat{c},c) \le 64(2+\kappa)^2\left((K-1)\beta_\sigma/\alpha_\sigma\right)^2.
	$$
\end{theorem}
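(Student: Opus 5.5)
We follow the standard perturbation route for spectral clustering \citep{lei2015consistency}. The affinity matrix is compared with an idealized block matrix, the Davis--Kahan theorem controls the eigenvector perturbation, and the approximate $k$-means guarantee converts eigenvector error into a misclustering count.

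\textbf{Step 1: affinity bounds from Theorem~\ref{thm:emdsimilarity}.} By Assumption~\ref{asp:cluster}, Theorem~\ref{thm:emdsimilarity} applies with $E^\ast=\hat{E}$ and $\min\{p(v_1),p(v_2)\}\ge p_{\min}$.

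For $v_1,v_2$ in the same community, the lower inequality gives
$$
\sqrt{1-e^{-b_\Theta^2\|\hat E_{v_1}-\hat E_{v_2}\|^2/8}}\le g_w+\sqrt{\epsilon_p/p_{\min}}.
$$
Solving for $\|\hat E_{v_1}-\hat E_{v_2}\|^2$ and substituting into $W_{v_1,v_2}=\exp(-\|\cdot\|^2/2\sigma^2)$ yields $W_{v_1,v_2}\ge\alpha_\sigma$.

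For different communities, the upper inequality gives $W_{v_1,v_2}\le\beta_\sigma$ in the same way.

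\textbf{Step 2: ideal matrix and its spectrum.} Define the ideal affinity $W^0$ by zeroing all between-community entries of $W$. Then $W^0$ is block diagonal, and so is its Laplacian $L^0$. Each block is a weighted Laplacian whose off-diagonal weights are at least $\alpha_\sigma$.

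Consequently $L^0$ has eigenvalue $0$ with multiplicity exactly $K$, and its null space is spanned by the community indicators, each normalized by $1/\sqrt{d_k}$. By comparison with the complete graph on $d_k$ vertices with weights $\alpha_\sigma$, the $(K+1)$-th smallest eigenvalue of $L^0$ is at least $\alpha_\sigma d_{\min}$.

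\textbf{Step 3: perturbation bound.} The perturbation is $L-L^0=D_\Delta-\Delta$, where $\Delta=W-W^0$ is supported on between-community pairs with entries at most $\beta_\sigma$.

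Row sums of $\Delta$ are at most $(d-d_{\min})\beta_\sigma$, so both $\|\Delta\|$ and $\|D_\Delta\|$ are bounded by this quantity. Hence $\|L-L^0\|\le 2(d-d_{\min})\beta_\sigma$.

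The Davis--Kahan $\sin\Theta$ theorem, in the Frobenius form with a factor $\sqrt{2K}$, then gives an orthogonal matrix $O$ with
$$
\|U-U^0O\|_F\lesssim \sqrt{K}\,\frac{(d-d_{\min})\beta_\sigma}{\alpha_\sigma d_{\min}}.
$$
The final condition in Assumption~\ref{asp:cluster} guarantees that the perturbation is below half the eigengap, so Davis--Kahan is applicable.

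\textbf{Step 4: from eigenvectors to misclustering.} The rows of $U^0O$ take $K$ distinct values, one per community. Distinct values are at distance $\sqrt{1/d_{k_1}+1/d_{k_2}}\ge\sqrt{2/d_{\max}}$ apart.

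Apply Lemma 5.3 of \citet{lei2015consistency} for a $(1+\kappa)$-approximate $k$-means solution. Every node in the set of badly clustered nodes $S$ satisfies $\|\hat\mu_{\hat c(v)}-U^0_{v}O\|\ge \delta/2$, which gives
$$
|S|\,\delta^2/4\le 4(2+\kappa)^2\|U-U^0O\|_F^2 .
$$
Outside $S$, the labels agree with $c$ up to a permutation. This requires $|S|<d_{\min}$ per community, which follows from the ratio condition on $\alpha_\sigma/\beta_\sigma$.

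Combining this with Step 3 and dividing by $d$ gives
$$
\Phi(\hat c,c)\le \frac{64(2+\kappa)^2Kd_{\max}}{d}\Big(\frac{(d-d_{\min})\beta_\sigma}{\alpha_\sigma d_{\min}}\Big)^2,
$$
with the constants tracked through the steps above. The balanced case follows by substituting $d_k=d/K$, so that $(d-d_{\min})/d_{\min}=K-1$ and $Kd_{\max}/d=1$.

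\textbf{Main obstacle.} The delicate step is matching the constants: the Laplacian eigengap $\alpha_\sigma d_{\min}$, the Davis--Kahan factor, and the Lei--Rinaldo counting must combine to exactly $64(2+\kappa)^2$. The same bookkeeping must also verify that the ratio condition in Assumption~\ref{asp:cluster} implies both the Davis--Kahan applicability condition and $|S_k|<d_k$ for every $k$. We would first use the cruder bound $\|L-L^0\|\le 2(d-d_{\min})\beta_\sigma$ and check whether the stated condition suffices.
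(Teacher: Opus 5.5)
Your outline follows the paper's proof step for step:
\begin{itemize}
\item affinity bounds $W\ge\alpha_\sigma$ and $W\le\beta_\sigma$ from Theorem~\ref{thm:emdsimilarity};
\item the block-diagonal ideal Laplacian, whose $(K+1)$st eigenvalue is at least $\alpha_\sigma d_{\min}$;
\item the perturbation bound $\|L-L^0\|\le 2(d-d_{\min})\beta_\sigma$;
\item Davis--Kahan;
\item approximate $k$-means counting followed by a permutation argument.
\end{itemize}
The only open item is the constant, which you flagged yourself, and as written it rests on a Davis--Kahan constant you have not justified.

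\textbf{The Davis--Kahan constant.} The Frobenius-norm Davis--Kahan bound of \citet{yu2015useful}, which the paper uses, gives $\|U-U^0O\|_F\le 2\sqrt{2K}\,\|L-L^0\|/(\alpha_\sigma d_{\min})$, not a factor $\sqrt{2K}$. Hence $\|U-U^0O\|_F^2\le 32K\{(d-d_{\min})\beta_\sigma/(\alpha_\sigma d_{\min})\}^2$. Combined with your counting inequality $|S|\delta^2/4\le 4(2+\kappa)^2\|U-U^0O\|_F^2$, this gives $256$ in place of $64$.

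The ratio condition of Assumption~\ref{asp:cluster} is exactly the statement $64(2+\kappa)^2Kd_{\max}\{(d-d_{\min})\beta_\sigma/(\alpha_\sigma d_{\min})\}^2<d_{\min}$. With $256$, it no longer guarantees $|S|<d_{\min}$. Your two constants err in opposite directions and happen to cancel. The $\sqrt{2K}$ factor can be rescued only by a separate small-perturbation refinement of Davis--Kahan that exploits $\|L-L^0\|<\alpha_\sigma d_{\min}/8$ (which the ratio condition does imply), and you do not supply that argument.

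\textbf{Citing Lemma~5.3 of \citet{lei2015consistency}.} Using it verbatim does not fix this. That lemma is stated for a $(1+\epsilon)$-approximation of the squared objective. The paper's $(1+\kappa)^2$ assumption therefore means $\epsilon=2\kappa+\kappa^2$. The lemma's constant $4+2\epsilon=2+2(1+\kappa)^2$ is then strictly larger than $(2+\kappa)^2$ when $\kappa>0$, and its hypothesis for the permutation is not implied by the stated ratio condition.

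\textbf{How the paper closes it.} The paper avoids both losses by doing the counting by hand.
\begin{itemize}
\item The partition $\{\Ccal_k\}$, with centers given by the rows of $U^0O$, is feasible with objective $\|U-U^0O\|_F^2$. So the $k$-means output satisfies $\sum_v\|U_{v,\cdot}-\hat\mu_v\|^2\le(1+\kappa)^2\|U-U^0O\|_F^2$.
\item The plain triangle inequality in Frobenius norm, rather than $(a+b)^2\le 2a^2+2b^2$, then gives $\sum_v\|U^0_{v,\cdot}O-\hat\mu_v\|^2\le(2+\kappa)^2\|U-U^0O\|_F^2$.
\item Nodes with $\|U^0_{v,\cdot}O-\hat\mu_v\|\ge\frac12\sqrt{2/d_{\max}}$ therefore number at most $2d_{\max}(2+\kappa)^2\|U-U^0O\|_F^2\le 64(2+\kappa)^2Kd_{\max}\{(d-d_{\min})\beta_\sigma/(\alpha_\sigma d_{\min})\}^2$. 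The ratio condition makes this smaller than $d_{\min}$.
\item Every community then keeps a good node. Good nodes of different communities cannot share a center, since their $U^0O$ rows are at least $\sqrt{2/d_{\max}}$ apart. Pigeonhole over at most $K$ labels yields the permutation.
\end{itemize}
With this counting, the standard $2\sqrt{2K}$ Davis--Kahan constant gives exactly the stated $64(2+\kappa)^2$.

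Finally, your remark that Davis--Kahan needs the perturbation to be below half the gap is unnecessary. The Yu--Wang--Samworth version requires only the population gap $\alpha_\sigma d_{\min}>0$.
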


Theorem~\ref{thm:cluster} translates recovery of token geometry into a community-recovery guarantee. The quantity $\alpha_\sigma$ lower-bounds affinities within the true communities, whereas $\beta_\sigma$ upper-bounds affinities between communities. Consequently, the misclustering rate
decreases as the affinity ratio $\alpha_\sigma/\beta_\sigma$ increases. The ratio improves when the population prediction error is small, the token types are sufficiently frequent, and the conditional contextual distributions exhibit strong within-community similarity and between-community separation. Thus, token prediction does more than organize embeddings geometrically: the recovered geometry is sufficiently informative to support consistent community recovery. For balanced communities, the theorem yields consistent community recovery whenever $(K-1)\beta_\sigma/\alpha_\sigma\to0$.

\subsection{Token Classification with Linear Probes}
Finally, we study token classification using a linear probe on the learned contextual representation. Let $\{(X_i,j_i,Y_i):i=1,\ldots,n\}$ be a labeled data set, where $X_{i}$ is the $i$th token sequence, $j_i$ identifies the target position, and $Y_{i}\in \{1,2,\ldots, K\}$ is its class label. The feature $\hat{\Theta}_j(X)$ is constructed from the same observed context $X_{j,c}$ used in token prediction. Thus, although the label is associated with the realized target token, the linear probe receives only its learned contextual representation. For generic centered parameters $W=(W_1,\ldots,W_K)$ and
$b=(b_1,\ldots,b_K)$, define the predicted probability as
$$
P_{W,b,k}^{\hat{\Theta}}(X,j)={\exp(W_{k}^T\hat{\Theta}_{j}(X)+b_k)\over \sum_{k'=1}^K\exp(W_{k'}^T\hat{\Theta}_{j}(X)+b_{k'}) }.
$$
We estimate a ridge-penalized linear probe by minimizing the following empirical risk
$$
(\hat{W},\hat{b}) \in \argmin_{W,b:\sum_kW_k=0,\sum_kb_k=0}\left\{-{1\over n}\sum_{i=1}^n\log P_{W,b,Y_i}^{\hat{\Theta}}(X_i,j_i)+{1\over n}\left(\|W\|_F^2+\|b\|^2\right)\right\}.
$$
With the learned $\hat{W}$ and $\hat{b}$, the resulting classifier is 
$$
h_{\hat{W},\hat{b},\hat{\Theta}}(X,j)=\argmax_{1\le k\le K}\hat{W}_{k}^T\hat{\Theta}_{j}(X)+\hat{b}_k.
$$
One may naturally wonder how efficient the resulting classifier is and what the role of the learned representation is.

To investigate the performance of the resulting classifier, we assign each token type a preference score for each class
$$
s_k(v)={\max\{\PP_{X,j}(Y=k|x_j=v)-\tau,0\}\over 1-\tau},
$$
where $0<\tau<1$ is a preference threshold. Given the preference score, we introduce the preferred set of each class, $\Wcal_k=\{v\in\Vcal: s_k(v)>0\}$, that is, for $v\in \Wcal_k$, $\PP_{X,j}(Y=k|x_j=v)>\tau$. Thus, $v\in\Wcal_k$ when the occurrence of token type $v$ favors class $k$ over the threshold $\tau$, and $s_k(v)$ measures the strength of this preference. The preference need not be unambiguous: a token type may favor one class while occasionally receiving another label. For example, the token type ``Qatar'' strongly favors the location class, although the precise interpretation still depends on its context. Given a context $X_{j,c}$, the conditional distribution $p_j(v|X)$ describes which token types could naturally occur at position $j$. We therefore define the population evidence for class $k$ by
$$
S_k(X_{j,c})=\sum_{v\in \Vcal}s_k(v)p_j(v|X).
$$
This score is large when the context assigns substantial probability to token types that favor class $k$. For example, in the context ``traveled to Jordan,'' other location-indicating token types may also fit the position ``Jordan" and collectively provide evidence for the location class. 

This construction also explains why linear probing can be effective. If token types favoring the same class occur in similar contexts, accurate token-prediction training places their embedding vectors close together. Their contributions to the model-based evidence approximating $S_k(X_{j,c})$ can then be summarized by a common direction in the embedding space, making the logarithm of this evidence approximately linear in $\hat{\Theta}_j(X)$. To formalize this argument, we assume that token types favoring the same class have similar class-conditional contextual distributions:
$$
\max_{v_1,v_2\in \Wcal_k}H(p_{v_1,k},p_{v_2,k})\le g_{w},
$$
where $p_{v,k}(X_{j,c}|x_j=v,Y=k)$ denotes the conditional distribution of the context given the target token and label. Together with accurate token-prediction training, this contextual coherence implies that the embeddings in each preferred set are close. It is worth noting that the preference scores and population evidence scores serve only as proof devices and need not be computed by the practical downstream classifier.

To measure the performance of the resulting classifier, we consider the excess risk of misclassification error
$$
R(\hat{h})=\PP_{X,j,Y}(Y\ne \hat{h}(X,j))-\PP_{X,j,Y}(Y\ne h^\ast(X,j)),
$$
where $h^\ast$ is the optimal Bayes classification rule $h^\ast(X,j)=\argmax_{1\le k\le K}\eta_k(X_{j,c})$. Here, $\eta_k(X_{j,c})=\PP_{X,j}(Y=k|X_{j,c})$. A small value $R(\hat{h})$ indicates that $\hat{h}$ performs as well as the optimal Bayes classification rule. We introduce the following assumptions to study the excess misclassification risk.

\begin{assumption}
	\label{asp:classification}
	We make the following assumptions:
	\begin{enumerate}
		\item Suppose the loss is upper bounded by $\epsilon_p$, i.e.,
		$$
		\EE_{X,j}\left[L\left(p_j(X);\hat{p}_{\hat{E}}(\hat{\Theta}_j(X))\right)\right]\le \epsilon_p.
		$$
		The tolerance $\epsilon_p$ may absorb model-approximation, finite-sample estimation, and optimization errors.
		\item The kernel $\Kcal(\cdot,\cdot)$ is defined as $\log\Kcal(Z, E_v)=Z^TE_v$. The vector $\hat{\Theta}_j(X)$ is bounded, i.e., $\|\hat{\Theta}_j(X)\|\le B_\Theta$. The marginal probability of each token type is lower bounded, i.e., $p(v)\ge p_{\min}$ for every $v\in\Vcal$.
		\item We assume, for each $k$, 
		$$
		\tilde{e}^T\nabla^2 A(e)\tilde{e}\ge b^2_\Theta \|\tilde{e}\|^2,
		$$
		where $e\in {\rm conv}\{\hat{E}_v: v\in \Wcal_k\}$ and $\tilde{e}\in {\rm span}\{\hat{E}_{v_1}-\hat{E}_{v_2}: v_1,v_2\in \Wcal_k\}$.
		\item We assume $\sqrt{1-\tau+\tau g_w^2}+\sqrt{\epsilon_p/p_{\min}}<1$, $B_E=\max_{v\in \Vcal}\|\hat{E}_v\|<\infty$, and $0<S_{\min}\le \sum_{v\in \Wcal_k}s_k(v)\le S_{\max}<\infty$ for every $k$.
		\item Assume there exists a context-dependent evidence scale $\rho(X_{j,c})=\sum_{k=1}^KS_k(X_{j,c})$ such that $\rho(X_{j,c})\ge \rho_0>0$ and 
		$$
		\EE_{X,j}\left[1-\min_{k:S_k(X_{j,c})/\rho(X_{j,c})>0}{\PP_{X,j}(Y=k|X_{j,c})\over S_k(X_{j,c})/\rho(X_{j,c})}\right]\le \delta_s.
		$$
		\item Let $\eta_{(1)}(X_{j,c})$ and $\eta_{(2)}(X_{j,c})$ be the largest and second-largest scores of $\eta_{k}(X_{j,c})$ for $k=1,\ldots, K$. We assume 
		$$
		\PP_{X,j}\left(\eta_{(1)}(X_{j,c})=\eta_{(2)}(X_{j,c})\right)=0
		$$ 
		and
		$$
		\PP_{X,j}\left(0<\eta_{(1)}(X_{j,c})-\eta_{(2)}(X_{j,c})\le t\right)\le C_0t^\beta,
		$$
		where $C_0>0$ and $\beta>0$ are some constants.
		\item The labeled data set $(X_{i}, j_i, Y_i)$ for $1\le i\le n$ used in the downstream analysis is independent of data used to produce $\hat{\Theta}$ and $\hat{E}$. The $(X,j)$-marginal of
		$\PP_{X,j,Y}$ agrees with the context distribution appearing in the population token-prediction loss.
		\item There exists $\sigma_{\min}>0$ such that for any $e\in \RR^s$ and $a\in \RR$, $\EE_{X,j}(e^T\hat{\Theta}_j(X)+a)^2\ge \sigma_{\min}(\|e\|^2+a^2)$.
		\item The centered population logistic risk $\EE_{X,j,Y}(-\log P_{W,b,Y}^{\hat{\Theta}}(X,j))$ attains its minimum over $\{(W,b):\sum_{k=1}^K W_k=0, \sum_{k=1}^K b_k=0\}$ at $W^\ast$ and $b^\ast$, where $\|W^\ast\|_F^2+\|b^\ast\|^2\le {B^\ast}^2$ for some $B^\ast>0$.
	\end{enumerate}
\end{assumption}

The first four assumptions, together with the within-preferred-set contextual coherence introduced above, concern the embedding vectors of token types. As in Theorem~\ref{thm:cluster}, they imply that token types in each preferred set have embeddings with diameter at most $D$, while the bounds on the embedding norms and preference masses ensure that the resulting oracle linear probe is well defined and bounded. The fifth assumption is a separate compatibility condition between the pre-training evidence and the downstream labels. The quantity inside the expectation is the smallest residual mass needed to express the Bayes posterior as a mixture containing the normalized evidence distribution. Thus, $\delta_s$ measures the compatibility between the evidence induced by token prediction and the downstream Bayes rule. The sixth assumption imposes uniqueness of the Bayes class and a standard margin condition \citep{mammen1999smooth,audibert2007fast}. The seventh assumption allows the downstream analysis to condition on the pre-trained representation while the eighth assumption ensures identifiability of the centered logistic parameters. The last assumption provides a bounded population minimizer and excludes population separation \citep{albert1984existence,bach2010selfconcordant}. Together with bounded representations, the last two conditions provide the curvature needed for finite-sample estimation. The following theorem combines the linear-probe approximation enabled by pre-training, its compatibility with the downstream task, and the error of estimating the probe from finitely many labels.

\begin{theorem}
	\label{thm:classification}
	Suppose Assumption~\ref{asp:classification} holds. If $\gamma>0$ and $n\ge n_0
	$,
	then, with probability at least $1-e^{-\gamma}$,
	$$
	R(h_{\hat{W},\hat{b},\hat{\Theta}})\le 2^{4\beta+7\over\beta+2}C_0^{1\over \beta+2}\left({C_{c,1}+C_{c,2}\gamma+C_{c,4}\over n}+ {2\epsilon_p\over \rho_0}+{D^4B_\Theta^4\over 16}+C_{c,3}\delta_s\right)^{\beta+1\over \beta+2},
	$$
	where $D$ is defined by
	$$
	D^2= -{8\over b_\Theta^2}\log\left(1-\left(\sqrt{1-\tau+\tau g_w^2}+\sqrt{\epsilon_p\over  p_{\min}}\right)^2\right).
	$$
	Here, $n_0$, $C_{c,1}$, $C_{c,2}$, $C_{c,3}$, and $C_{c,4}$ are constants defined in the proof.
\end{theorem}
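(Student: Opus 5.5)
The proof will reduce excess misclassification risk to an excess logistic (KL) risk through the margin condition, and then split that KL risk into a finite-sample term, a pre-training error term, a linearization term, and a compatibility term. Write $\eta(X_{j,c})$ for the Bayes posterior vector and $P^{\hat\Theta}_{W,b}$ for the probe's predicted distribution.

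\textbf{Step 1: from misclassification risk to KL risk.} Under the margin condition in Assumption~\ref{asp:classification}(6), the standard argument of \citet{audibert2007fast} gives, for any plug-in rule based on a distribution $q$,
$$
R(h_q)\le \EE\big[(\eta_{(1)}-\eta_{(2)})\bI(\text{wrong argmax})\big].
$$
A wrong argmax forces $\|\eta-q\|_1\ge \eta_{(1)}-\eta_{(2)}$. Splitting on the event $\{\eta_{(1)}-\eta_{(2)}\le t\}$ and optimizing over $t$ yields
$$
R\lesssim C_0^{1/(\beta+2)}\,\big(\EE\|\eta-q\|_1^2\big)^{(\beta+1)/(\beta+2)}.
$$
Pinsker then bounds $\EE\|\eta-q\|_1^2$ by $2\,\EE L(\eta;q)$. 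This produces the exponent $(\beta+1)/(\beta+2)$ and the power-of-two constant in the statement. It therefore suffices to bound
$$
\EE L\big(\eta;P^{\hat\Theta}_{\hat W,\hat b}\big)=\Big[\EE L\big(\eta;P_{\hat W,\hat b}\big)-\EE L\big(\eta;P_{W^\ast,b^\ast}\big)\Big]+\EE L\big(\eta;P_{W^\ast,b^\ast}\big).
$$
The first bracket is the estimation error and the second is the approximation error.

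\textbf{Step 2: estimation error.} The bracket equals the excess population logistic risk of the ridge estimator. Conditional on $\hat\Theta$ (Assumption 7), the loss is Lipschitz and smooth on bounded features ($\|\hat\Theta\|\le B_\Theta$). Assumptions 8--9 give local strong convexity around $(W^\ast,b^\ast)$, with curvature lower bounded in terms of $\sigma_{\min}$ and of the minimal softmax probability on a ball of radius $\asymp B^\ast$. A fast-rate argument for penalized strongly convex M-estimators then gives excess risk $(C_{c,1}+C_{c,2}\gamma+C_{c,4})/n$ with probability $1-e^{-\gamma}$ once $n\ge n_0$. Candidate tools are a localized Bernstein/self-concordance bound as in \citet{bach2010selfconcordant}, or stability of ridge-regularized ERM plus a concentration step. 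The ridge bias $\|W^\ast\|^2/n\le {B^\ast}^2/n$ is absorbed into $C_{c,4}$.

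\textbf{Step 3: approximation error via an oracle probe.} Since $(W^\ast,b^\ast)$ minimizes the population logistic risk, $\EE L(\eta;P_{W^\ast,b^\ast})\le \EE L(\eta;P_{W^o,b^o})$ for any oracle probe, which I construct explicitly.

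Step 3a (embedding clustering). Within each preferred set $\Wcal_k$, relate the contextual distributions to the class-conditional ones. For $v\in\Wcal_k$, $p_v=\PP(Y=k\mid v)\,p_{v,k}+(1-\PP(Y=k\mid v))\,p_{v,\neg k}$ with $\PP(Y=k\mid v)>\tau$. The Hellinger affinity of such mixtures then gives $H^2(p_{v_1},p_{v_2})\le 1-\tau+\tau g_w^2$. Applying the lower bound of Theorem~\ref{thm:emdsimilarity}, with the curvature condition restricted to $\Wcal_k$ (Assumption 3), yields $\|\hat E_{v_1}-\hat E_{v_2}\|\le D$ with $D$ exactly as stated.

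Step 3b (oracle weights). Let $\bar E_k$ be the $s_k$-weighted mean of $\hat E_v$ over $\Wcal_k$, and define the model evidence
$$
\hat S_k=\sum_v s_k(v)\,\hat p_{\hat E}(v\mid\hat\Theta)=\frac{\sum_{v\in\Wcal_k} s_k(v)\,e^{\hat E_v^T\hat\Theta}}{Z}.
$$
Since $\|\hat E_v-\bar E_k\|\le D$ and $\|\hat\Theta\|\le B_\Theta$, a second-order (Jensen) bound on log-sum-exp shows that $\log\hat S_k$ differs from the linear score $\bar E_k^T\hat\Theta+\log(\textstyle\sum s_k)-\log Z$ by at most $D^2B_\Theta^2/4$. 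The common term $\log Z$ cancels under the class softmax. Set $W^o_k=\bar E_k$ and $b^o_k=\log\sum_{v} s_k(v)$, then center both; centering is harmless and boundedness follows from $B_E$ and $S_{\min},S_{\max}$. Squaring the $D^2B_\Theta^2/4$ log-ratio error into a KL bound produces the $D^4B_\Theta^4/16$ term.

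Step 3c (remaining error). Replacing $\hat p$ by $p$ in the evidence costs $\EE\|p-\hat p\|_1/\rho_0$, which a Pinsker/KL bound controls by $2\epsilon_p/\rho_0$. Finally, comparing $\eta$ with the normalized true evidence $S/\rho$: by Assumption 5, $\eta=(1-r)\,S/\rho+r\,\nu$ with expected residual mass $\EE r\le\delta_s$, so the KL contribution is at most $C_{c,3}\delta_s$. Here $C_{c,3}$ is a log-ratio bound coming from the boundedness of the oracle logits.

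\textbf{Main obstacle.} The hardest part is Step 3: the bounds are naturally in total-variation or log-ratio form, but they must be assembled additively inside a single KL quantity. Converting the compatibility residual and the $\epsilon_p$ term into KL requires bounded logits, and keeping the linearization error at order $D^4$ (rather than $D^2$) requires a careful second-order expansion of the softmax around $\bar E_k$. If the KL route fails, I would instead run Step 1 directly with a squared-$\ell_1$ or Hellinger discrepancy and use the triangle inequality across the four sources.
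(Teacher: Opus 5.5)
Your architecture matches the paper's: the margin lemma plus Pinsker, an estimation/approximation split, the Hellinger-mixture bound fed into Theorem~\ref{thm:emdsimilarity} to get the diameter $D$, the oracle probe with $W^o_k$ the $s_k$-weighted mean embedding and $b^o_k=\log\sum_v s_k(v)$, Jensen/Hoeffding for the linearization, and KL convexity for $\delta_s$. The genuine gap is in Step~3c, at exactly the obstacle you flag. Your claim that replacing $\hat p$ by $p$ "costs $\EE\|p-\hat p\|_1/\rho_0$, which a Pinsker/KL bound controls by $2\epsilon_p/\rho_0$" is false. Pinsker gives $\EE\|p-\hat p\|_1\le\sqrt{2\epsilon_p}$, so any total-variation route yields a $\sqrt{\epsilon_p}$ term. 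Moreover, KL has no triangle inequality, so a TV perturbation of its first argument cannot simply be added to your Step~3b log-ratio bound.

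The missing idea is to stay at the KL level. Define the Markov kernel $T(k\mid v)=s_k(v)$ for $1\le k\le K$ and $T(0\mid v)=1-\sum_k s_k(v)$; this is valid because $\sum_k s_k(v)\le 1$. Then $Tp=(S_1,\dots,S_K,1-\rho)$ and $T\hat p=(\hat S_1,\dots,\hat S_K,1-\hat\rho)$. The chain rule gives $L(Tp;T\hat p)=\rho\,L(q^S;q^{\hat S})+\mathrm{kl}(\rho\,\|\,\hat\rho)$ with $q^S=S/\rho$ and $q^{\hat S}=\hat S/\hat\rho$. Data processing then gives $\EE L(q^S;q^{\hat S})\le \epsilon_p/\rho_0$, which is linear in $\epsilon_p$. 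Combine this with your linearization through the exact identity
$$
L(q^S;q^{W^o})=L(q^{\hat S};q^{W^o})+L(q^S;q^{\hat S})+\sum_{k}(q^S_k-q^{\hat S}_k)\Delta S_k ,
$$
where $q^{W^o}_k\propto q^{\hat S}_k e^{-\Delta S_k}$ and $\Delta S_k\in[0,c]$. Bound $L(q^{\hat S};q^{W^o})\le c^2/8$ by Hoeffding's lemma, and apply Pinsker only to the cross term, which is at most $\frac c2\sqrt{2L(q^S;q^{\hat S})}$. This yields $\EE L(q^S;q^{W^o})\le(\sqrt{\epsilon_p/\rho_0}+c/(2\sqrt2))^2\le 2\epsilon_p/\rho_0+c^2/4$. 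That is the source of the factor $2$ and, with $c=D^2B_\Theta^2/2$, of $D^4B_\Theta^4/16$; your sharper $c$ only helps. Your compatibility step then goes through as written.

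Your fallback does not avoid this. The probe is fitted by logistic loss, so the only bridge to the oracle is $\EE L(\eta;P_{W^\ast,b^\ast})\le \EE L(\eta;P_{W^o,b^o})$, and you need a KL bound for the oracle regardless. Converting squared-$\ell_1$ bounds back into KL through bounded logits (a $\chi^2$ bound) introduces factors $1/\min_k q^{W^o}_k$ and $1/\rho_0^2$, so it cannot give the stated coefficients.

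Two smaller points. First, the first display of Step~1 is reversed. On a wrong argmax, $\eta_{(1)}-\eta_{(2)}\le \eta_{h^\ast}-\eta_{h_q}\le\|\eta-q\|_1$, and it is $\eta_{h^\ast}-\eta_{h_q}$ that must be integrated; the rate you state is nonetheless correct. Second, in Step~2, comparing with the unpenalized minimizer $(W^\ast,b^\ast)$ is a legitimate variant: it replaces the paper's oracle-norm term $C_{c,4}/n$ by ${B^\ast}^2/n$. However, stability from a ridge penalty of size $1/n$ alone gives only an $O(1)$ bound. The curvature must come from $\sigma_{\min}$ and the minimal class probability, which requires a localization argument keeping $(\hat W,\hat b)$ in the ball of radius $\asymp B^\ast$. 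The paper does this with a localized Rademacher bound, McDiarmid's inequality, and a convexity/contradiction step.
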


Theorem~\ref{thm:classification} separates the excess risk into three population approximation errors and one downstream estimation error. The term $2\epsilon_p/\rho_0$ measures the error incurred when the true conditional token distribution in the class-evidence score is replaced by the fitted prediction head. The term $D^4B_\Theta^4/16$ is a geometric linearization error: it is small when the embeddings within each preferred set are sufficiently concentrated for their contributions to be represented by a common linear direction. The term $C_{c,3}\delta_s$ measures compatibility between the normalized evidence scores and the downstream Bayes posterior. Finally, the estimation error is the cost of estimating
the linear probe from $n$ labeled observations. Consequently, the three population approximation errors determine the representation-induced error floor. Thus, when the population prediction, geometric linearization, and downstream-compatibility errors are small, token prediction can organize the discrete context space into a representation in which class-specific evidence is approximately linear, leaving a low-dimensional estimation problem for the labeled samples.

\section{Numerical Illustration}
\label{sc:numerical}

To illustrate the phenomena characterized in the preceding sections, we conduct a controlled simulation experiment. Since the simulation setting is at a small scale by design, we can exactly evaluate all population quantities of interest, such as conditional probability distributions of context and true reward regret. These population quantities allow us to compare our estimates with the ground truth and assess whether the observed phenomena are consistent with the theoretical investigation. 

In the simulation experiment, the vocabulary contains 16 semantic groups , $\Vcal_{s,1},\ldots, \Vcal_{s,16}$, arranged into 4 communities, $\Vcal_{c,1},\ldots,\Vcal_{c,4}$, and each semantic group includes three exchangeable token types. The three exchangeable token types have the same contextual distribution but different marginal frequencies. In total, we have 48 token types. The token sequence follows a stationary third-order process, so the distribution of the next token depends on the preceding three tokens through an order-sensitive nonlinear rule. In the conditional distribution of the next token, our construction can ensure a sparse plausible set $\Scal_j(X)$, which includes six semantic groups or eighteen token types. The context of a token is defined as its preceding three tokens. The resulting contextual population contains $16^3=4096$ possible semantic contexts and $48^3=110592$ possible observed-token contexts, allowing us to compute $p_j(X)$, the contextual geometry $\Gamma_h$, community labels, and downstream optimal Bayes misclassification error exactly. In the next-token prediction training, the model has access only to sampled contexts and next tokens, but not semantic groups, communities, or latent construction variables.

In the next-token prediction training, we use $5\times 10^5$ context–next-token pairs as training data set and separate $2\times 10^4$ pairs as validation data set. Similar to GPT, we add positional embeddings and process them using a stack of causal self-attention Transformer layers. The dimension of learned representation is 16 and each Transformer layer contains two attention heads and a feed-forward network of dimension 64. The encoder repeatedly applies a shared block containing either one Transformer layer, with repeated $L\in\{1,2,4,8\}$ times, or two Transformer layers, with repeated $L\in\{1,2,4\}$ times. The prediction head is a simple softmax decoder with no additional hidden layers. The input and output token embeddings are tied, so the learned matrix of initial representation is also used by the prediction head.  We evaluate the population prediction error $\epsilon_p$ and all subsequent quantities by exact enumeration. Further details of the data-generation process and model-fitting procedure can be found in Appendix~\ref{app:numerical}.

\begin{figure}[h!]
	\centering
	\includegraphics[width=\linewidth]{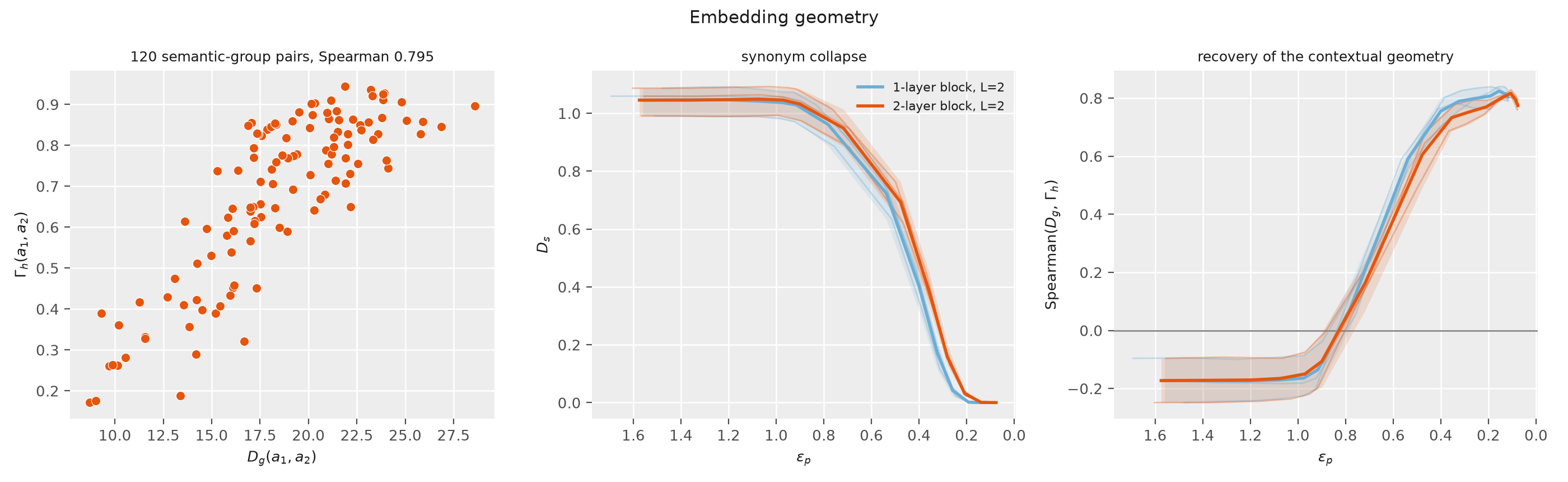}
	\caption{Recovery of the semantic geometry of token types. Left: the embedding distance $D_g(a_1,a_2)$ between semantic groups against the contextual dissimilarity $\Gamma_h(a_1,a_2)$, over all $120$ pairs of the $16$ groups. Middle: the synonym distance ratio $D_s$. Right: the Spearman correlation between $D_g$ and $\Gamma_h$. In the middle and right panels the horizontal axis is the population prediction error $\epsilon_p$, plotted in decreasing order so that training proceeds from left to right. }
	\label{fg:geometry}
\end{figure}

We first study whether the next-token prediction can help recover semantic geometry of the token types. As indicated by Theorems~\ref{thm:emdsimilarity} and \ref{thm:cluster}, a natural measure of semantic dissimilarity is the Hellinger distance between conditional contextual distributions $\Gamma_h(v_1,v_2)$. If two token types belong to the same semantic group, i.e., $v_1,v_2\in \Vcal_{s,a}$ for $1\le a\le 16$, then our construction suggests $\Gamma_h(v_1,v_2)=0$. We therefore examine whether embeddings of synonymous token types within the same semantic group collapse and whether distances between group embeddings recover the dissimilarities between distinct semantic groups. Specifically, we consider the following two criteria: synonym distance ratio and embedding distance between semantic groups
$$
D_{s}={\sum_{a=1}^{16}\sum_{v_1,v_2\in \Vcal_{s,a}}\|E_{v_1}-E_{v_2}\|^2/48\over M\{\|E_{v_1}-E_{v_2}\|^2\}_{v_1\in  \Vcal_{s,a_1}, v_2\in  \Vcal_{s,a_2},a_1\ne a_2}}\quad {\rm and}\quad D_g(a_1,a_2)=\left\|{1\over 3}\sum_{v\in \Vcal_{s,a_1}}E_{v}-{1\over 3}\sum_{v\in \Vcal_{s,a_2}}E_{v}\right\|^2,
$$
where $M\{\cdot\}$ represents the median of a set of numbers. A reasonable embedding can indicate a small $D_s$ and suggests that the level of $D_g(a_1,a_2)$ reflects the level of $\Gamma_h(a_1,a_2)=\Gamma_h(v_1,v_2)$ when $v_1\in  \Vcal_{s,a_1}$ and $v_2\in  \Vcal_{s,a_2}$. The results summarized in Figure~\ref{fg:geometry} show a high correlation between embedding distance and the Hellinger distance between conditional contextual distributions, indicating that the embeddings recover the semantic geometry well. In addition, embedding quality depends strongly on how well the fitted model approximates the population distribution, indicated by $\epsilon_p$, suggesting a well-fitted model is necessary for the embedding to recover the semantic geometry.

\begin{figure}[h!]
	\centering
	\includegraphics[width=\linewidth]{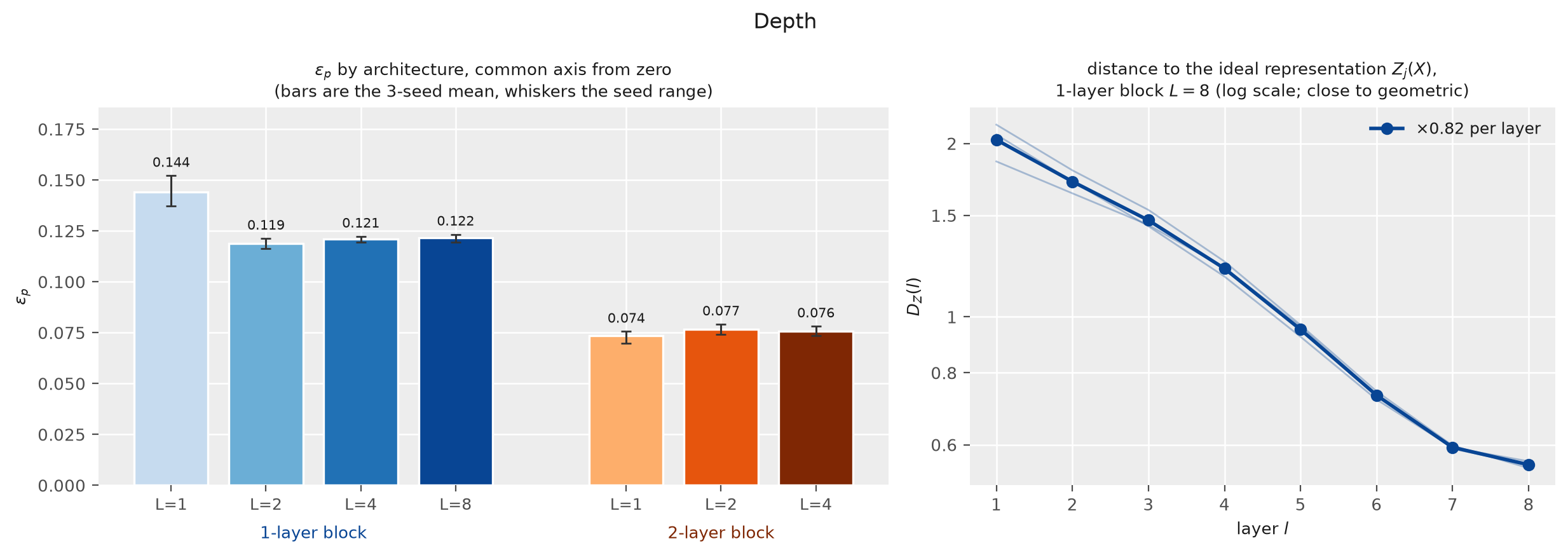}
	\caption{Role of architecture in representation maps. Left: $\epsilon_p$ for all seven architectures on a common axis anchored at zero. Right: the normalized distance $D_Z(l)$ to the ideal representation along the layers of the 1-layer block with $L=8$, on a logarithmic vertical scale; $l=0$ is omitted because $\Psi^0_j(X)$ does not depend on $X$ by construction.}
	\label{fg:depth}
\end{figure}

We then investigate the role of architecture in representation maps and evaluate whether repeated applications of the same block can accurately approximate the ideal representation. To examine whether the intermediate layers of the transformer move toward a self-consistent representation, we define the ideal representation for a given recovery head as
$$
Z_j(X)=\argmin_{\|Z\|\le 4}L(p_j(X);\hat{p}_{\hat{E}}(Z)).
$$
The ideal representation $Z_j(X)$ is the most flexible norm-constrained representation when the recovery head is given. Given the $l$th layer $\Psi^l_j(X)$ of the representation, we evaluate the normalized distance from the ideal representation
$$
D_Z(l)={\EE_{X,j}\|\Psi^l_j(X)-Z_j(X)\|^2\over \EE_{X,j}\|Z_j(X)- \EE_{X,j}Z_j(X)\|^2}.
$$
Theorem~\ref{thm:aprx} bounds the approximation by $\kappa_\Theta^L D(\Psi^0,\Theta) + \delta_\Theta/(1-\kappa_\Theta)$, so depth buys geometric decay of the first term only until the block-quality floor is reached. The two panels of Figure~\ref{fg:depth} separate these terms. The right panel exhibits the geometric decay directly: $D_Z(l)$ falls by a factor of 3.9 over $l=1,\ldots,8$, close to a constant ratio of $0.82$ per layer. The left panel shows the empirical saturation level. Repeating the 1-layer block improves $\epsilon_p$ from $0.144$ to $0.119$ at $L=2$ and no further, while the more expressive 2-layer block is already at its saturation level after a single application, at $0.074$. Reducing the single-block approximation error is therefore the effective lever.

\begin{figure}[h!]
	\centering
	\includegraphics[width=0.9\linewidth]{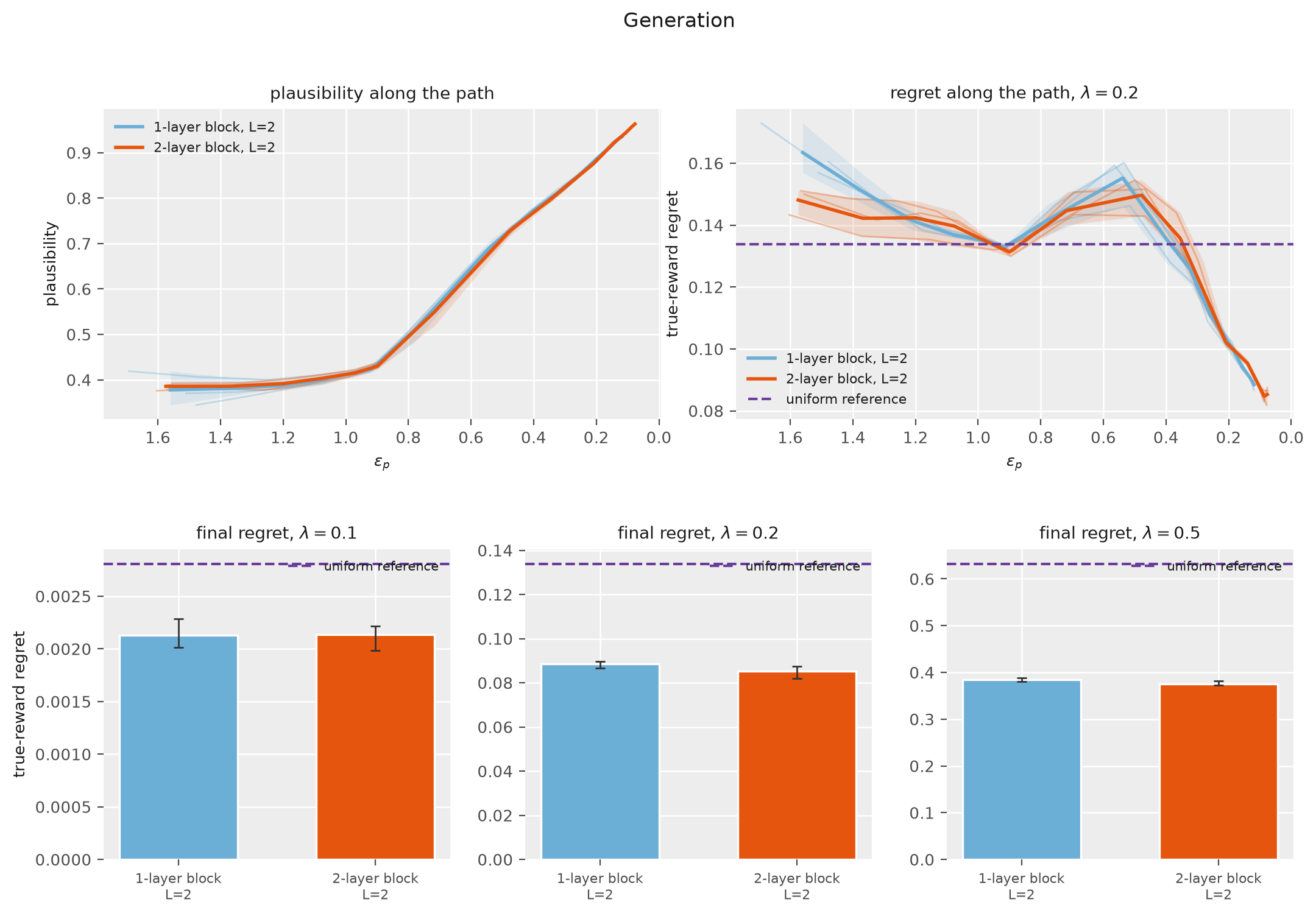}
	\caption{Performance on the generation task. Top left: plausibility, the mass the generator places on the population plausible set $\mathcal{S}_j(X)$. Top right: true-reward regret at post-training temperature $\lambda=0.2$, with the regret of the uniformly-referenced post-trained generator marked. Both are plotted against $\epsilon_p$ in decreasing order along the training path. Bottom: final regret at the three temperatures $\lambda\in\{0.1,0.2,0.5\}$, each compared with the corresponding uniformly referenced generator (note the differing vertical scales).}
	\label{fg:generation}
\end{figure}

We next evaluate the performance of generation in the generative task with the following two criteria: the pre-trained model's plausibility and the post-trained model's regret. In the post-training, the pre-trained generator is tilted using a noisy estimator of the reward. The top left panel in Figure~\ref{fg:generation} suggests that a smaller pre-training loss is accompanied by a greater tendency to remain within the population’s plausible continuation set. Similarly, as the top right panel in Figure~\ref{fg:generation} indicates, the resulting generator after post-training can achieve a small regret when the pre-training prediction error is small. In addition, we compare the resulting post-trained generator with another post-trained generator when the reference distribution is a uniform distribution in the bottom panels of Figure~\ref{fg:generation}. The comparison suggests that pre-training can substantially reduce the regret of the resulting post-trained generator.

\begin{figure}[h!]
	\centering
	\includegraphics[width=\linewidth]{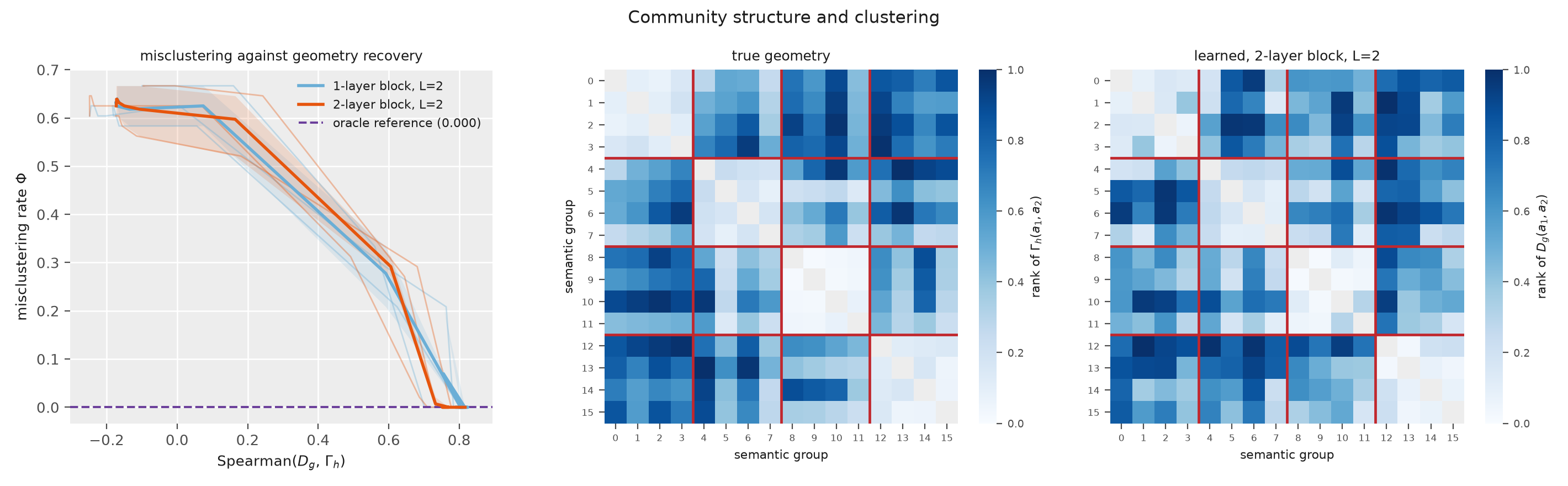}
	\caption{Recovery of the community structure. Left: the misclustering rate $\Phi$ of spectral clustering on the $48$ token embeddings, against the Spearman correlation between $D_g$ and $\Gamma_h$, so that clustering accuracy is read against the quality of the recovered geometry; the oracle reference is the same pipeline applied to the exact $\Gamma_h$. Middle and right: the exact contextual geometry $\Gamma_h$ and the learned embedding geometry $D_g$ over the $16$ semantic groups. Each is displayed on its own rank scale; red lines mark the four true communities.}
	\label{fg:clustering}
\end{figure}

We then study whether we can recover the community structure of token types by the learned embedding vectors. Specifically, we apply spectral clustering to the learned embedding vectors and adopt the misclustering rate to evaluate the performance. As illustrated in the left panel of Figure~\ref{fg:clustering}, we can better detect the token type communities when we can better recover semantic geometry. The side-by-side rank heatmaps in Figure~\ref{fg:clustering} show that the block structure of the semantic geometry is visible in the learned embedding geometry. This provides a direct numerical illustration of the mechanism underlying Theorem~\ref{thm:cluster}: accurate token prediction organizes the embedding space in a form that supports community recovery.

\begin{figure}[h!]
	\centering
	\includegraphics[width=0.93\linewidth]{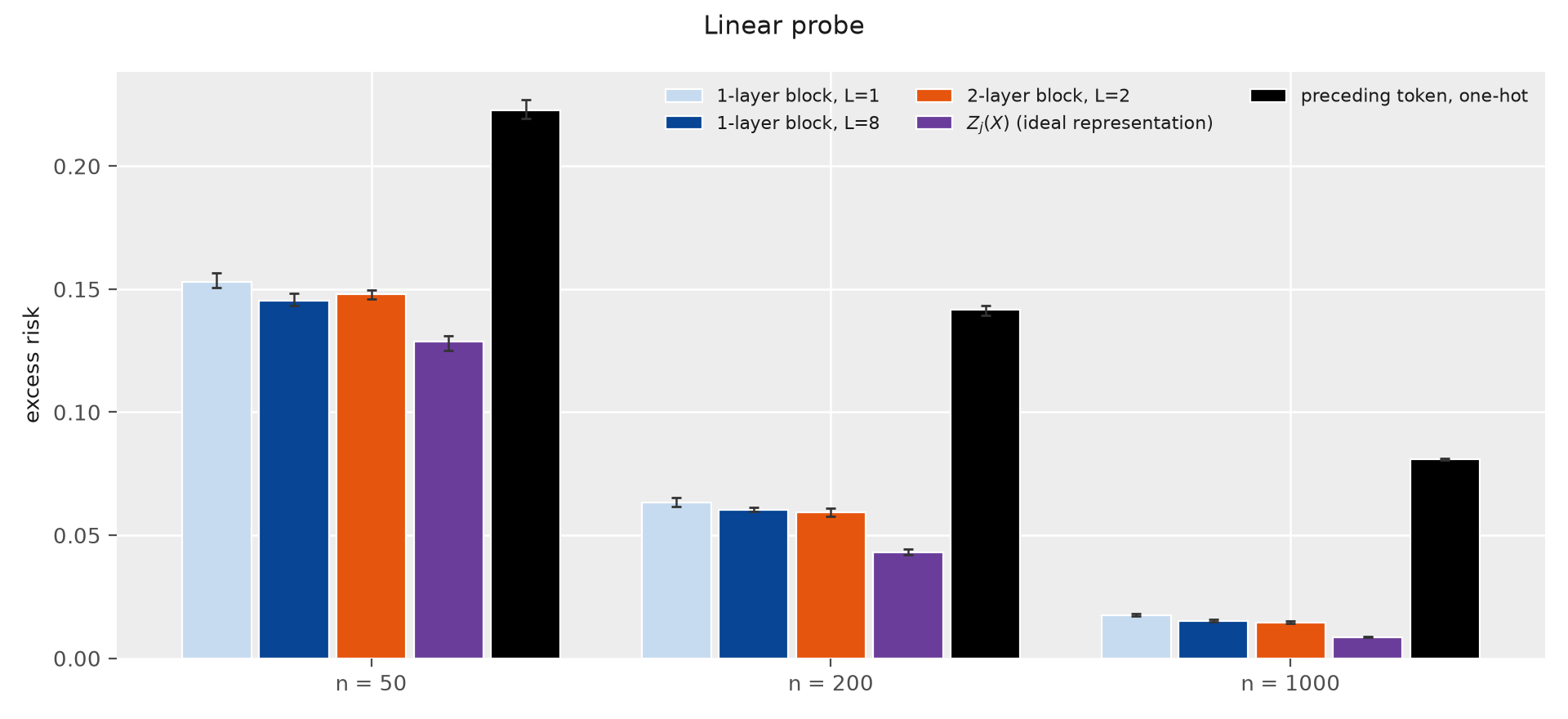}
	\caption{Excess risk of the linear probe at three downstream sample sizes. The ideal representation $Z_j(X)$ is computed for the decoder of the fitted 1-layer block with $L=8$. The preceding-token one-hot representation uses only the observed token immediately before the target and carries no learned contextual information. }
	\label{fig:numerical-probe}
\end{figure}

Finally, we use a linear probe to predict the semantic-community label of the next token. This label may differ from the community of the observed token immediately preceding the target. We consider downstream sample sizes $n\in\{50,200,1000\}$ and compare five representations: the one-layer block with $L=1$, the one-layer block with $L=8$, the two-layer block with $L=2$, the decoder-optimal representation $Z_j(X)$ for the fitted one-layer block with $L=8$, and a one-hot representation of the token immediately preceding the target. As expected, increasing the labeled sample size reduces excess risk, and the decoder-optimal representation generally yields the most accurate classifier.

\section{Concluding Remarks}
\label{sc:conclusion}

In this paper, we develop a statistical framework to clarify how token prediction learns representations and why these representations can be useful in downstream analysis. Our central finding is that token prediction can recover the geometry of a distribution that it is not directly trained to predict. Although the prediction objective estimates the distribution of a target token given its context, accurate prediction also organizes token embeddings according to the conditional distributions of contexts associated with different token types. This result reveals two coupled representations: the token embeddings recover contextual similarity among token types, while the contextual representation serves as a latent coordinate of the conditional token distribution within the model determined by these embeddings. We further introduce a self-consistency principle to characterize how repeated representation blocks can approximate this contextual representation and favor predictively equivalent representation systems that are compatible with recurrent construction. Finally, we show that prediction accuracy and the resulting representation geometry support token generation, token community recovery, and token classification. Taken together, these results characterize token prediction as a mechanism for learning the conditional-distribution structure that makes representations useful beyond the original prediction objective.

The framework also assigns different statistical roles to the encoder and prediction head, providing an interpretation of the asymmetric encoder–head designs commonly used in practice \citep{he2022masked,radford2018improving}. The prediction head maps the contextual representation to a conditional token distribution and therefore determines how contextual representations interact with token embeddings. Its structure consequently shapes the geometry that can be identified through token prediction. A simple prediction head, such as an inner-product softmax head, has limited capacity to absorb the complexity of the conditional token distribution and makes the resulting representation geometry directly interpretable. The encoder, by contrast, must construct the contextual representation from a discrete and potentially complex token sequence. The self-consistency analysis explains how this complex map can be approximated through repeated applications of expressive representation blocks. Our results therefore suggest that a simple prediction head and a flexible encoder play complementary roles: the head specifies a meaningful representation geometry, while the encoder approximates the map from observed contexts into that geometry. This interpretation does not imply that the simplest possible head is always optimal, but it highlights that the complexity of the prediction head affects what information must be carried by the learned representations.

The analysis also highlights the fundamental role of the pre-training distribution. The geometry-recovery bounds deteriorate with the marginal probabilities of the token types, reflecting the fact that accurate average prediction does not provide reliable geometric information about tokens that are rarely observed or predicted. Poorly estimated token geometry can subsequently weaken community recovery and the geometric linearization underlying a downstream linear probe. Similarly, the generation analysis requires adequate coverage of the contexts encountered after generation begins. These limitations cannot generally be overcome by a more elaborate architecture alone: the training distribution must contain sufficient information about both the relevant token types and the contexts in which the model will be used. Effective token-prediction-based representation learning therefore depends jointly on architectural design, prediction accuracy, and coverage of the relevant token and context spaces.

Most of our analysis uses the inner-product softmax head $\log\Kcal(Z,E_v)=Z^T E_v$. Many of the underlying arguments can be extended to a log-separable kernel of the form
$$
\log\Kcal(Z, E_v)=k_Z(Z)+k_E(E_v)+K_Z(Z)^TK_E(E_v),
$$
where $k_Z$, $k_E$, $K_Z$, and $K_E$ are suitable functions. After normalization over the vocabulary, $k_Z(Z)$ cancels, while $k_E(E_v)$ acts as a token-specific bias. The transformed quantities $K_Z(\Theta_j(X))$ and $K_E(E_v)$ then play the roles of the contextual representation and token embedding in the recovery head. The resulting geometry is naturally characterized in these transformed coordinates. Relating it back to distances between the original $E_v$ or $\Theta_j(X)$ requires additional regularity, such as bi-Lipschitz properties of $K_E$ and $K_Z$. Extending the downstream guarantees also requires corresponding boundedness, curvature, and compatibility conditions. Developing such extensions would clarify how the complexity of the prediction head changes the information encoded in the original representation space.

The present analysis is primarily conducted at the population level and assumes that the relevant population optimization problems can be solved. This perspective isolates the structural consequences of token prediction, but it does not characterize the amount of pre-training data required to recover the population geometry or the effect of optimization on the selected representation. Finite-sample errors may be especially important for rare token types, while optimization may introduce an additional implicit preference among the many representation systems having similar prediction loss. A complete statistical account should therefore combine the population mechanisms studied here with finite-sample pre-training theory, optimization dynamics, and model misspecification. The controlled numerical experiment illustrates the predicted mechanisms, but evaluating their quantitative relevance in large pre-trained language models is also an important direction for future work.

Finally, token prediction belongs to the broader family of self-supervised learning methods. Another major strategy constructs representations through data augmentation, including contrastive and non-contrastive Siamese methods \citep{chen2020simple,he2020momentum,zbontar2021barlow}. Recent theoretical studies suggest that augmentation-based methods can learn semantic representations by collapsing distinctions induced by meaning-preserving transformations while retaining meaningful differences among samples \citep{wang2023self,wang2025linear}. The present analysis identifies a related mechanism for token prediction: token types having indistinguishable conditional contextual distributions are encouraged to collapse, while differences between contextual distributions are preserved in the learned geometry. Token prediction and data augmentation therefore provide two different routes toward the same broad representation-learning principles of equivalence collapse and semantic separation. Understanding whether these principles also characterize other pretext tasks may provide a more unified statistical theory of self-supervised representation learning.

\section*{Acknowledgment}
This work is supported by grants from the National Science Foundation (DBI-2243257 and DMS-2515171). Generative AI tools, including OpenAI’s ChatGPT and Anthropic’s Claude Code, were used for brainstorming, language editing and coding assistance. The author reviewed all outputs and takes full responsibility for the manuscript.

\bibliographystyle{plainnat}
\bibliography{MaskedRepresentation}

\begin{thebibliography}{41}
\providecommand{\natexlab}[1]{#1}
\providecommand{\url}[1]{\texttt{#1}}
\expandafter\ifx\csname urlstyle\endcsname\relax
  \providecommand{\doi}[1]{doi: #1}\else
  \providecommand{\doi}{doi: \begingroup \urlstyle{rm}\Url}\fi

\bibitem[Albert and Anderson(1984)]{albert1984existence}
A.~Albert and J.~A. Anderson.
\newblock On the existence of maximum likelihood estimates in logistic
  regression models.
\newblock \emph{Biometrika}, 71\penalty0 (1):\penalty0 1--10, 1984.

\bibitem[Audibert and Tsybakov(2007)]{audibert2007fast}
J.~Audibert and A.~B. Tsybakov.
\newblock Fast learning rates for plug-in classifiers.
\newblock \emph{The Annals of Statistics}, 35\penalty0 (2):\penalty0 608--633,
  2007.

\bibitem[Bach(2010)]{bach2010selfconcordant}
F.~Bach.
\newblock Self-concordant analysis for logistic regression.
\newblock \emph{Electronic Journal of Statistics}, 4:\penalty0 384--414, 2010.

\bibitem[Bai et~al.(2019)Bai, Kolter, and Koltun]{bai2019deep}
S.~Bai, J.~Z. Kolter, and V.~Koltun.
\newblock Deep equilibrium models.
\newblock In \emph{Advances in Neural Information Processing Systems},
  volume~32, 2019.

\bibitem[Bartlett and Mendelson(2002)]{bartlett2002rademacher}
P.~L. Bartlett and S.~Mendelson.
\newblock Rademacher and {Gaussian} complexities: risk bounds and structural
  results.
\newblock \emph{Journal of Machine Learning Research}, 3:\penalty0 463--482,
  2002.

\bibitem[Bengio et~al.(2013)Bengio, Courville, and
  Vincent]{bengio2013representation}
Y.~Bengio, A.~Courville, and P.~Vincent.
\newblock Representation learning: A review and new perspectives.
\newblock \emph{IEEE Transactions on Pattern Analysis and Machine
  Intelligence}, 35\penalty0 (8):\penalty0 1798--1828, 2013.

\bibitem[Boucheron et~al.(2013)Boucheron, Lugosi, and Massart]{boucheron2013}
S.~Boucheron, G.~Lugosi, and P.~Massart.
\newblock \emph{Concentration Inequalities: A Nonasymptotic Theory of
  Independence}.
\newblock Oxford University Press, 2013.
\newblock ISBN 9780199535255.

\bibitem[Brown et~al.(2020)Brown, Mann, Ryder, Subbiah, Kaplan, Dhariwal,
  Neelakantan, Shyam, Sastry, Askell, Agarwal, Herbert-Voss, Krueger, Henighan,
  Child, Ramesh, Ziegler, Wu, Winter, Hesse, Chen, Sigler, Litwin, Gray, Chess,
  Clark, Berner, McCandlish, Radford, Sutskever, and Amodei]{brown2020language}
T.~B. Brown, B.~Mann, N.~Ryder, M.~Subbiah, J.~Kaplan, P.~Dhariwal,
  A.~Neelakantan, P.~Shyam, G.~Sastry, A.~Askell, S.~Agarwal, A.~Herbert-Voss,
  G.~Krueger, T.~Henighan, R.~Child, A.~Ramesh, D.~M. Ziegler, J.~Wu,
  C.~Winter, C.~Hesse, M.~Chen, E.~Sigler, M.~Litwin, S.~Gray, B.~Chess,
  J.~Clark, C.~Berner, S.~McCandlish, A.~Radford, I.~Sutskever, and D.~Amodei.
\newblock Language models are few-shot learners.
\newblock In \emph{Advances in Neural Information Processing Systems},
  volume~33, pages 1877--1901, 2020.

\bibitem[Chen et~al.(2026)Chen, Huang, Golowich, Malladi, Block, Ash,
  Krishnamurthy, and Foster]{chen2026the}
F.~Chen, A.~Huang, N.~Golowich, S.~Malladi, A.~Block, J.~T. Ash,
  A.~Krishnamurthy, and D.~J. Foster.
\newblock The coverage principle: How pre-training enables post-training.
\newblock In \emph{The Fourteenth International Conference on Learning
  Representations}, 2026.

\bibitem[Chen et~al.(2020)Chen, Kornblith, Norouzi, and Hinton]{chen2020simple}
T.~Chen, S.~Kornblith, M.~Norouzi, and G.~Hinton.
\newblock A simple framework for contrastive learning of visual
  representations.
\newblock In \emph{International Conference on Machine Learning}, pages
  1597--1607. PMLR, 2020.

\bibitem[Cover and Thomas(2006)]{cover2006elements}
T.~M. Cover and J.~A. Thomas.
\newblock \emph{Elements of Information Theory}.
\newblock Wiley-Interscience, Hoboken, NJ, 2nd edition, 2006.

\bibitem[Dehghani et~al.(2019)Dehghani, Gouws, Vinyals, Uszkoreit, and
  Kaiser]{dehghani2019universal}
M.~Dehghani, S.~Gouws, O.~Vinyals, J.~Uszkoreit, and {\L}.~Kaiser.
\newblock Universal transformers.
\newblock In \emph{International Conference on Learning Representations}, 2019.

\bibitem[Devlin et~al.(2019)Devlin, Chang, Lee, and Toutanova]{kenton2019bert}
J.~Devlin, M.~Chang, K.~Lee, and K.~Toutanova.
\newblock {BERT}: Pre-training of deep bidirectional transformers for language
  understanding.
\newblock In \emph{Proceedings of NAACL-HLT}, pages 4171--4186, 2019.

\bibitem[Efron(2023)]{efron2023exponential}
B.~Efron.
\newblock \emph{Exponential families in theory and practice}.
\newblock Cambridge University Press, 2023.

\bibitem[He et~al.(2020)He, Fan, Wu, Xie, and Girshick]{he2020momentum}
K.~He, H.~Fan, Y.~Wu, S.~Xie, and R.~Girshick.
\newblock Momentum contrast for unsupervised visual representation learning.
\newblock In \emph{Proceedings of the IEEE/CVF conference on computer vision
  and pattern recognition}, pages 9729--9738, 2020.

\bibitem[He et~al.(2022)He, Chen, Xie, Li, Doll{\'a}r, and
  Girshick]{he2022masked}
K.~He, X.~Chen, S.~Xie, Y.~Li, P.~Doll{\'a}r, and R.~Girshick.
\newblock Masked autoencoders are scalable vision learners.
\newblock In \emph{Proceedings of the IEEE/CVF Conference on Computer Vision
  and Pattern Recognition}, pages 16000--16009, 2022.

\bibitem[Inan et~al.(2017)Inan, Khosravi, and Socher]{inan2017tying}
H.~Inan, K.~Khosravi, and R.~Socher.
\newblock Tying word vectors and word classifiers: A loss framework for
  language modeling.
\newblock In \emph{International Conference on Learning Representations}, 2017.

\bibitem[Lee et~al.(2021)Lee, Lei, Saunshi, and Zhuo]{lee2021predicting}
J.~D. Lee, Q.~Lei, N.~Saunshi, and J.~Zhuo.
\newblock Predicting what you already know helps: Provable self-supervised
  learning.
\newblock In \emph{Advances in Neural Information Processing Systems},
  volume~34, pages 3094--3106, 2021.

\bibitem[Lei and Rinaldo(2015)]{lei2015consistency}
J.~Lei and A.~Rinaldo.
\newblock Consistency of spectral clustering in stochastic block models.
\newblock \emph{The Annals of Statistics}, pages 215--237, 2015.

\bibitem[Lenci and Sahlgren(2023)]{lenci2023distributional}
A.~Lenci and M.~Sahlgren.
\newblock \emph{Distributional semantics}.
\newblock Cambridge University Press, 2023.

\bibitem[Liu et~al.(2022)Liu, Hsu, Ravikumar, and Risteski]{liu2022masked}
B.~Liu, D.~Hsu, P.~Ravikumar, and A.~Risteski.
\newblock Masked prediction tasks: A parameter identifiability view.
\newblock In \emph{Advances in Neural Information Processing Systems},
  volume~35, pages 33171--33184, 2022.

\bibitem[Liu et~al.(2023)Liu, Xie, Li, and Ma]{liu2023same}
H.~Liu, S.~Xie, Z.~Li, and T.~Ma.
\newblock Same pre-training loss, better downstream: Implicit bias matters for
  language models.
\newblock In \emph{Proceedings of the 40th International Conference on Machine
  Learning}, volume 202 of \emph{Proceedings of Machine Learning Research},
  pages 21756--21779. PMLR, 2023.

\bibitem[Mammen and Tsybakov(1999)]{mammen1999smooth}
E.~Mammen and A.~B. Tsybakov.
\newblock Smooth discrimination analysis.
\newblock \emph{The Annals of Statistics}, 27\penalty0 (6):\penalty0
  1808--1829, 1999.

\bibitem[Maurer(2016)]{maurer2016vector}
A.~Maurer.
\newblock A vector-contraction inequality for rademacher complexities.
\newblock In \emph{International Conference on Algorithmic Learning Theory},
  pages 3--17. Springer, 2016.

\bibitem[Ouyang et~al.(2022)Ouyang, Wu, Jiang, Almeida, Wainwright, Mishkin,
  Zhang, Agarwal, Slama, Ray, Schulman, Hilton, Kelton, Miller, Simens, Askell,
  Welinder, Christiano, Leike, and Lowe]{ouyang2022training}
L.~Ouyang, J.~Wu, X.~Jiang, D.~Almeida, C.~L. Wainwright, P.~Mishkin, C.~Zhang,
  S.~Agarwal, K.~Slama, A.~Ray, J.~Schulman, J.~Hilton, F.~Kelton, L.~Miller,
  M.~Simens, A.~Askell, P.~Welinder, P.~Christiano, J.~Leike, and R.~Lowe.
\newblock Training language models to follow instructions with human feedback.
\newblock In \emph{Advances in Neural Information Processing Systems}, 2022.

\bibitem[Press and Wolf(2017)]{press2017using}
O.~Press and L.~Wolf.
\newblock Using the output embedding to improve language models.
\newblock In \emph{Proceedings of the 15th Conference of the European Chapter
  of the Association for Computational Linguistics: Volume 2, Short Papers},
  pages 157--163, Valencia, Spain, 2017. Association for Computational
  Linguistics.

\bibitem[Radford et~al.(2018)Radford, Narasimhan, Salimans, and
  Sutskever]{radford2018improving}
A.~Radford, K.~Narasimhan, T.~Salimans, and I.~Sutskever.
\newblock Improving language understanding by generative pre-training.
\newblock \emph{{OpenAI}}, 2018.

\bibitem[Raffel et~al.(2020)Raffel, Shazeer, Roberts, Lee, Narang, Matena,
  Zhou, Li, and Liu]{raffel2020exploring}
C.~Raffel, N.~Shazeer, A.~Roberts, K.~Lee, S.~Narang, M.~Matena, Y.~Zhou,
  W.~Li, and P.~J. Liu.
\newblock Exploring the limits of transfer learning with a unified text-to-text
  transformer.
\newblock \emph{Journal of Machine Learning Research}, 21\penalty0
  (140):\penalty0 1--67, 2020.

\bibitem[Saunshi et~al.(2021)Saunshi, Malladi, and Arora]{saunshi2021a}
N.~Saunshi, S.~Malladi, and S.~Arora.
\newblock A mathematical exploration of why language models help solve
  downstream tasks.
\newblock In \emph{International Conference on Learning Representations}, 2021.

\bibitem[Trauger and Tewari(2025)]{trauger2025nexttokenpredictionllmsend}
J.~Trauger and A.~Tewari.
\newblock On next-token prediction in {LLMs}: How end goals determine the
  consistency of decoding algorithms.
\newblock \emph{arXiv preprint arXiv:2505.11183}, 2025.

\bibitem[Tsybakov(2009)]{Tsybakov2009}
A.~B. Tsybakov.
\newblock \emph{Introduction to Nonparametric Estimation}.
\newblock Springer Series in Statistics. Springer, Dordrecht, 2009.

\bibitem[Wang(2023)]{wang2023self}
S.~Wang.
\newblock Self-supervised metric learning in multi-view data: A downstream task
  perspective.
\newblock \emph{Journal of the American Statistical Association}, 118\penalty0
  (544):\penalty0 2454--2467, 2023.

\bibitem[Wang(2025{\natexlab{a}})]{wang2025augmentation}
S.~Wang.
\newblock Augmentation invariant manifold learning.
\newblock \emph{Journal of the Royal Statistical Society Series B: Statistical
  Methodology}, 87\penalty0 (4):\penalty0 978--1000, 2025{\natexlab{a}}.

\bibitem[Wang(2025{\natexlab{b}})]{wang2025linear}
S.~Wang.
\newblock Linear separation capacity of self-supervised representation
  learning.
\newblock \emph{Journal of Machine Learning Research}, 26\penalty0
  (194):\penalty0 1--48, 2025{\natexlab{b}}.

\bibitem[Wei et~al.(2021)Wei, Xie, and Ma]{wei2021pretrained}
C.~Wei, S.~Xie, and T.~Ma.
\newblock Why do pretrained language models help in downstream tasks? an
  analysis of head and prompt tuning.
\newblock \emph{Advances in Neural Information Processing Systems},
  34:\penalty0 16158--16170, 2021.

\bibitem[Wu et~al.(2023)Wu, Lee, and Ge]{wu2023connecting}
C.~Wu, H.~Lee, and R.~Ge.
\newblock Connecting pre-trained language model and downstream task via
  properties of representation.
\newblock \emph{Advances in Neural Information Processing Systems},
  36:\penalty0 47216--47238, 2023.

\bibitem[Xiong et~al.(2024)Xiong, Dong, Ye, Wang, Zhong, Ji, Jiang, and
  Zhang]{xiong2024iterative}
W.~Xiong, H.~Dong, C.~Ye, Z.~Wang, H.~Zhong, H.~Ji, N.~Jiang, and T.~Zhang.
\newblock Iterative preference learning from human feedback: Bridging theory
  and practice for {RLHF} under {KL}-constraint.
\newblock In \emph{The Forty-first International Conference on Machine
  Learning}, 2024.

\bibitem[Yu et~al.(2015)Yu, Wang, and Samworth]{yu2015useful}
Y.~Yu, T.~Wang, and R.~J. Samworth.
\newblock A useful variant of the {Davis--Kahan} theorem for statisticians.
\newblock \emph{Biometrika}, 102\penalty0 (2):\penalty0 315--323, 2015.

\bibitem[Zbontar et~al.(2021)Zbontar, Jing, Misra, LeCun, and
  Deny]{zbontar2021barlow}
J.~Zbontar, L.~Jing, I.~Misra, Y.~LeCun, and S.~Deny.
\newblock {Barlow} {Twins}: Self-supervised learning via redundancy reduction.
\newblock In \emph{International Conference on Machine Learning}, pages
  12310--12320. PMLR, 2021.

\bibitem[Zhao et~al.(2025)Zhao, Ye, Gu, and Zhang]{zhao2025sharp}
H.~Zhao, C.~Ye, Q.~Gu, and T.~Zhang.
\newblock Sharp analysis for {KL}-regularized contextual bandits and {RLHF}.
\newblock In \emph{Advances in Neural Information Processing Systems}, 2025.

\bibitem[Zhao et~al.(2024)Zhao, Behnia, Vakilian, and
  Thrampoulidis]{zhao2024implicit}
Y.~Zhao, T.~Behnia, V.~Vakilian, and C.~Thrampoulidis.
\newblock Implicit geometry of next-token prediction: From language sparsity
  patterns to model representations.
\newblock In \emph{First Conference on Language Modeling}, 2024.

\end{thebibliography}

\begin{appendices}
\section{Proof}
\subsection{Proof of Theorem~\ref{thm:emdsimilarity}}
	Because $p_j(v|X)$ is estimated by $\hat{p}_E(v|\Theta_j(X))$, our estimator for $p_v(X_{j,c}|v)$ is 
	$$
	\hat{p}_{E,v}(X_{j,c}|v)={\hat{p}_E(v|\Theta_j(X))\over \hat{p}(v)}p(X_{j,c}),
	$$
	where $\hat{p}(v)=\EE_{X,j}(\hat{p}_E(v|\Theta_j(X)))$.
	In particular, if we assume $\Kcal(\Theta_j(X),E_v)=\exp\left(E_v^T\Theta_j(X)\right)$, then $\hat{p}_{E,v}(X_{j,c}|v)$ can be simplified as
	$$
	\hat{p}_{E,v}(X_{j,c}|v)=e^{E_v^T\Theta_j(X)-A(E_v)}\tilde{p}(X_{j,c}),
	$$
	where 
	$$
	A(E_v)=\log\sum_{X,j}e^{E_v^T\Theta_j(X)}\tilde{p}(X_{j,c})\quad {\rm and}\quad \tilde{p}(X_{j,c})={p(X_{j,c})\over \sum_{v\in\Vcal}e^{E_v^T\Theta_j(X)}}.
	$$
	So $\hat{p}_{E_v}(X_{j,c}|v)$ is an exponential family of probability distributions \citep{efron2023exponential}. The Hellinger distance between probability distributions in an exponential family is
	$$
	H^2(\hat{p}_{E,v_1}(X_{j,c}|v_1),\hat{p}_{E,v_2}(X_{j,c}|v_2))=1-\exp\left(A\left({E_{v_1}+E_{v_2}\over 2}\right)-{A(E_{v_1})+A(E_{v_2})\over 2}\right).
	$$
	Taylor’s theorem suggests
	\begin{align*}
		A(E_{v_1})=&A\left({E_{v_1}+E_{v_2}\over 2}\right)+\nabla A\left({E_{v_1}+E_{v_2}\over 2}\right)^T\left({E_{v_1}-E_{v_2}\over 2}\right)\\
		&+\int_0^1 (1-t)\left({E_{v_1}-E_{v_2}\over 2}\right)^T\nabla^2 A\left({E_{v_1}+E_{v_2}\over 2}+t\left({E_{v_1}-E_{v_2}\over 2}\right)\right)\left({E_{v_1}-E_{v_2}\over 2}\right)dt
	\end{align*}
	and
	\begin{align*}
		A(E_{v_2})=&A\left({E_{v_1}+E_{v_2}\over 2}\right)+\nabla A\left({E_{v_1}+E_{v_2}\over 2}\right)^T\left({E_{v_2}-E_{v_1}\over 2}\right)\\
		&+\int_0^1 (1-t)\left({E_{v_2}-E_{v_1}\over 2}\right)^T\nabla^2 A\left({E_{v_1}+E_{v_2}\over 2}+t\left({E_{v_2}-E_{v_1}\over 2}\right)\right)\left({E_{v_2}-E_{v_1}\over 2}\right)dt,
	\end{align*}
	so we have
	\begin{align*}
		&{A(E_{v_1})+A(E_{v_2})\over 2}-A\left({E_{v_1}+E_{v_2}\over 2}\right)\\
		=&{1\over 2}\int_0^1 (1-t)\left({E_{v_1}-E_{v_2}\over 2}\right)^T\nabla^2 A\left({E_{v_1}+E_{v_2}\over 2}+t\left({E_{v_1}-E_{v_2}\over 2}\right)\right)\left({E_{v_1}-E_{v_2}\over 2}\right)dt\\
		&+{1\over 2}\int_0^1 (1-t)\left({E_{v_2}-E_{v_1}\over 2}\right)^T\nabla^2 A\left({E_{v_1}+E_{v_2}\over 2}+t\left({E_{v_2}-E_{v_1}\over 2}\right)\right)\left({E_{v_2}-E_{v_1}\over 2}\right)dt.
	\end{align*}
	The assumption suggests for any vector $e$, we have
	$$
	b_\Theta^2\|e\|^2\le e^T\nabla^2 A(E_v)e\le B_\Theta^2\|e\|^2.
	$$
	Thus, we obtain
	$$
	{b_\Theta^2\over 8} \left\|E_{v_2}-E_{v_1}\right\|^2\le {A(E_{v_1})+A(E_{v_2})\over 2}-A\left({E_{v_1}+E_{v_2}\over 2}\right)\le {B_\Theta^2\over 8} \left\|E_{v_2}-E_{v_1}\right\|^2.
	$$
	This leads to
	$$
	1-\exp\left(-{b_\Theta^2\over 8} \left\|E_{v_2}-E_{v_1}\right\|^2\right)\le H^2(\hat{p}_{E, v_1}(X_{j,c}|v_1),\hat{p}_{E, v_2}(X_{j,c}|v_2))
	$$
	and 
	$$
	H^2(\hat{p}_{E, v_1}(X_{j,c}|v_1),\hat{p}_{E, v_2}(X_{j,c}|v_2))\le 1-\exp\left(-{B_\Theta^2\over 8} \left\|E_{v_2}-E_{v_1}\right\|^2\right).
	$$
	
	In addition, 
	$\EE_{X,j}\left[L\left(p_j(X);\hat{p}_{E^\ast}(\Theta^\ast_j(X))\right)\right]\le \epsilon_p$ suggests 
	$$
	\sum_{v\in\Vcal}p(v)\log{p(v)\over \hat{p}(v)}+\sum_{v\in\Vcal}p(v)L(p_v(X_{j,c}|v);\hat{p}_{E^\ast,v}(X_{j,c}|v))\le \epsilon_p.
	$$
	Since $p(v_1),p(v_2)\ge \min\{p(v_1),p(v_2)\}$, we have 
	$$
	2\sum_{v\in \{v_1,v_2\}} H^2(p_v(X_{j,c}|v),\hat{p}_{E^\ast,v}(X_{j,c}|v)) \le \sum_{v\in \{v_1,v_2\}}L(p_v(X_{j,c}|v);\hat{p}_{E^\ast,v}(X_{j,c}|v))\le {\epsilon_p\over \min\{p(v_1),p(v_2)\}}.
	$$
	Thus,
	$$
	\sum_{v\in \{v_1,v_2\}} H(p_v(X_{j,c}|v),\hat{p}_{E^\ast,v}(X_{j,c}|v))\le \sqrt{2\sum_{v\in \{v_1,v_2\}} H^2(p_v(X_{j,c}|v),\hat{p}_{E^\ast,v}(X_{j,c}|v))}\le \sqrt{\epsilon_p\over \min\{p(v_1),p(v_2)\}}.
	$$
	Therefore, 
	$$
	H(p_{v_1}(X_{j,c}|v_1),p_{v_2}(X_{j,c}|v_2))\le \sqrt{1-\exp\left(-{B_\Theta^2\over 8} \left\|E^\ast_{v_2}-E^\ast_{v_1}\right\|^2\right)}+\sqrt{\epsilon_p\over \min\{p(v_1),p(v_2)\}},
	$$
	and 
	$$
	H(p_{v_1}(X_{j,c}|v_1),p_{v_2}(X_{j,c}|v_2))\ge \left(\sqrt{1-\exp\left(-{b_\Theta^2\over 8} \left\|E^\ast_{v_2}-E^\ast_{v_1}\right\|^2\right)}-\sqrt{\epsilon_p\over \min\{p(v_1),p(v_2)\}}\right)_+.
	$$

\subsection{Proof of Theorem~\ref{thm:aprx}}
	For any given initial representation $\Psi^0\in \Rcal^0$, we write
	$$
	\Psi^{l,g,\Psi^0}=\underbrace{\Tcal_{g}(\ldots \Tcal_{g}(\Tcal_{g}}_{l\ {\rm  times}}(\Psi^0))),\qquad g\in\Gcal, l=1,\ldots,L.
	$$ 
	Suppose $g^\ast_\Theta$ is the following function
	$$
	g^\ast_\Theta=\argmin_{g\in\Gcal}\max_{\Psi}D(\Tcal_{g_\Theta}(\Psi),\Tcal_{g}(\Psi)).
	$$
	Since $D$ is metric, we have
	$$
	D(\Psi^{l,g^\ast_\Theta,\Psi^0},\Theta)\le D(\Psi^{l,g^\ast_\Theta,\Psi^0},\Tcal_{g_\Theta}(\Psi^{l-1,g^\ast_\Theta,\Psi^0}))+D(\Tcal_{g_\Theta}(\Psi^{l-1,g^\ast_\Theta,\Psi^0}),\Theta).
	$$
	The choice of $g^\ast_\Theta$ suggests 
	$$
	D(\Psi^{l,g^\ast_\Theta,\Psi^0},\Tcal_{g_\Theta}(\Psi^{l-1,g^\ast_\Theta,\Psi^0}))=D(\Tcal_{g_\Theta^\ast}(\Psi^{l-1,g^\ast_\Theta,\Psi^0}),\Tcal_{g_\Theta}(\Psi^{l-1,g^\ast_\Theta,\Psi^0}))\le \delta_\Theta.
	$$
	Because $\Tcal_{g_\Theta}$ is a contraction and $\Theta$ is a self-consistent representation map, we have 
	$$
	D(\Tcal_{g_\Theta}(\Psi^{l-1,g^\ast_\Theta,\Psi^0}),\Theta)=D(\Tcal_{g_\Theta}(\Psi^{l-1,g^\ast_\Theta,\Psi^0}),\Tcal_{g_\Theta}(\Theta))\le \kappa_\Theta D(\Psi^{l-1,g^\ast_\Theta,\Psi^0},\Theta).
	$$
	Putting above three inequalities together yields
	$$
	D(\Psi^{l,g^\ast_\Theta,\Psi^0},\Theta)\le \kappa_\Theta D(\Psi^{l-1,g^\ast_\Theta,\Psi^0},\Theta)+\delta_\Theta.
	$$
	Applying this inequality $L$ times, we obtain
	\begin{align*}
		D(\Psi^{L,g^\ast_\Theta,\Psi^0},\Theta)&\le \kappa_\Theta D(\Psi^{L-1,g^\ast_\Theta,\Psi^0},\Theta)+\delta_\Theta\\
		&\le \kappa_\Theta\left(\kappa_\Theta D(\Psi^{L-2,g^\ast_\Theta,\Psi^0},\Theta)+\delta_\Theta\right)+\delta_\Theta\\
		&\le \kappa_\Theta^L D(\Psi^{0},\Theta)+(\kappa_\Theta^{L-1}+\ldots+1)\delta_\Theta\\
		&\le \kappa_\Theta^L D(\Psi^{0},\Theta)+{1-\kappa_\Theta^L \over 1-\kappa_\Theta}\delta_\Theta.
	\end{align*}
	We can complete the proof when we take minimum over all possible  initial representation $\Psi^0\in \Rcal^0$.

\subsection{Proof of Theorem~\ref{thm:oracle}}
	Suppose $\Theta^\ast$ is the representation such that there exists an approximator $
	\hat{\Theta}\in  \Rcal^L(\Gcal, \Rcal^0)$.
	By the definition of Kullback-Leibler divergence, we have
	\begin{align*}
		L\left(p_j(X);\hat{p}_{E^\ast}(\Theta^\ast_j(X))\right)&=\sum_{v\in\Scal_j(X)}p_j(v|X)\log\left(p_j(v|X)/\hat{p}_{E^\ast}(v|\Theta^\ast_j(X))\right)\\
		&=\sum_{v\in\Scal_j(X)}p_j(v|X)\left[\log\left(p_j(v|X)\right)-\log\left({\Kcal(\Theta^\ast_j(X),E^\ast_v)\over \sum_{v'\in\Vcal}\Kcal(\Theta^\ast_j(X),E^\ast_{v'})}\right)\right].
	\end{align*}
	Similarly, if $\tilde{\Theta}$ is an approximator of $\Theta^\ast$, we can also have 
	$$
	L\left(p_j(X);\hat{p}_{E^\ast}(\tilde{\Theta}_j(X))\right)=\sum_{v\in\Scal_j(X)}p_j(v|X)\left[\log\left(p_j(v|X)\right)-\log\left({\Kcal(\tilde{\Theta}_j(X),E^\ast_v)\over \sum_{v'\in\Vcal}\Kcal(\tilde{\Theta}_j(X),E^\ast_{v'})}\right)\right].
	$$
	Therefore, we have
	\begin{align*}
		&L\left(p_j(X);\hat{p}_{E^\ast}(\tilde{\Theta}_j(X))\right)-L\left(p_j(X);\hat{p}_{E^\ast}(\Theta^\ast_j(X))\right)\\
		=&\sum_{v\in\Scal_j(X)}p_j(v|X)\left(\log\left({\Kcal(\Theta^\ast_j(X),E^\ast_v)\over \sum_{v'\in\Vcal}\Kcal(\Theta^\ast_j(X),E^\ast_{v'})}\right)-\log\left({\Kcal(\tilde{\Theta}_j(X),E^\ast_v)\over \sum_{v'\in\Vcal}\Kcal(\tilde{\Theta}_j(X),E^\ast_{v'})}\right)\right)\\
		=&\sum_{v\in\Scal_j(X)}p_j(v|X)\log\left({\Kcal(\Theta^\ast_j(X),E^\ast_v)\over \Kcal(\tilde{\Theta}_j(X),E^\ast_v)}\right)-\log\left({\sum_{v'\in\Vcal}\Kcal(\Theta^\ast_j(X),E^\ast_{v'})\over \sum_{v'\in\Vcal}\Kcal(\tilde{\Theta}_j(X),E^\ast_{v'})}\right)\\
		\le& \max_{v\in\Scal_j(X)}\log\left({\Kcal(\Theta^\ast_j(X),E^\ast_v)\over \Kcal(\tilde{\Theta}_j(X),E^\ast_v)}\right)-\log\left({\sum_{v'\in\Vcal}\Kcal(\Theta^\ast_j(X),E^\ast_{v'})\over \sum_{v'\in\Vcal}\Kcal(\tilde{\Theta}_j(X),E^\ast_{v'})}\right).
	\end{align*}
	Since
	$$
	\log\left({\sum_{v'\in\Vcal}\Kcal(\Theta^\ast_j(X),E^\ast_{v'})\over \sum_{v'\in\Vcal}\Kcal(\tilde{\Theta}_j(X),E^\ast_{v'})}\right)\ge \log\left(\min_{v'\in\Vcal}{\Kcal(\Theta^\ast_j(X),E^\ast_{v'})\over \Kcal(\tilde{\Theta}_j(X),E^\ast_{v'})}\right),
	$$
	we obtain 
	\begin{align*}
		&L\left(p_j(X);\hat{p}_{E^\ast}(\tilde{\Theta}_j(X))\right)-L\left(p_j(X);\hat{p}_{E^\ast}(\Theta^\ast_j(X))\right)\\
		\le& \max_{v\in\Scal_j(X)}\log\left({\Kcal(\Theta^\ast_j(X),E^\ast_v)\over \Kcal(\tilde{\Theta}_j(X),E^\ast_v)}\right)-\min_{v'\in\Vcal}\log\left({\Kcal(\Theta^\ast_j(X),E^\ast_{v'})\over \Kcal(\tilde{\Theta}_j(X),E^\ast_{v'})}\right).
	\end{align*}
	Because 
	\begin{align*}
		&\log\Kcal(\tilde{\Theta}_j(X),E^\ast_v)-\log\Kcal(\Theta^\ast_j(X),E^\ast_v)\\
		=&\left(\tilde{\Theta}_j(X)-\Theta^\ast_j(X)\right)^TE^\ast_v,
	\end{align*}
	we can know 
	\begin{align*}
		&L\left(p_j(X);\hat{p}_{E^\ast}(\tilde{\Theta}_j(X))\right)-L\left(p_j(X);\hat{p}_{E^\ast}(\Theta^\ast_j(X))\right)\\
		\le& \max_{v\in\Scal_j(X),v'\in\Vcal}\left(\tilde{\Theta}_j(X)-\Theta^\ast_j(X)\right)^T\left(E^\ast_v-E^\ast_{v'}\right)\\
		\le & \left\|\tilde{\Theta}_j(X)-\Theta^\ast_j(X)\right\|\max_{v\in\Scal_j(X),v'\in\Vcal}\left\|E^\ast_v-E^\ast_{v'}\right\|
	\end{align*}
	Since
	\begin{align*}
		&\EE_{X,j}\left[\left\|\tilde{\Theta}_j(X)-\Theta^\ast_j(X)\right\|\max_{v\in\Scal_j(X),v'\in\Vcal}\left\|E^\ast_v-E^\ast_{v'}\right\|\right]\\
		\le& \sqrt{\EE_{X,j}\left[\max_{v\in\Scal_j(X),v'\in\Vcal}\left\|E^\ast_v-E^\ast_{v'}\right\|^2\right]\EE_{X,j}\left[\left\|\tilde{\Theta}_j(X)-\Theta^\ast_j(X)\right\|^2\right]},
	\end{align*}
	we define the approximation error as
	$$
	\Acal(E^\ast,\Theta^\ast)=\sqrt{\EE\left[\max_{v\in\Scal_j(X),v'\in\Vcal}\left\|E^\ast_v-E^\ast_{v'}\right\|^2\right]\inf_{\tilde{\Theta}\in \Rcal^L(\Gcal, \Rcal^0) }\EE\left[\left\|\tilde{\Theta}_j(X)-\Theta^\ast_j(X)\right\|^2\right]}.
	$$
	Therefore, we have 
	\begin{align*}
		&\min_{\Theta\in \Rcal^L(\Gcal, \Rcal^0), E} \EE_{X,j}\left[L\left(p_j(X);\hat{p}_{E}(\Theta_j(X))\right)\right]\\
		\le& \min_{\Theta^\ast,E^\ast}\left[\EE_{X,j}\left[L\left(p_j(X);\hat{p}_{E^\ast}(\Theta^\ast_j(X))\right)\right]+\Acal(E^\ast,\Theta^\ast)\right] 
	\end{align*}

\subsection{Proof of Theorem~\ref{thm:generative}}
	We first work on $\rho_{p}$. Since we have 
	$$
	\sum_{v\notin \Scal_j(X_{<j})}q^{\rm pre}_j(v|X_{<j})\le -\log\left(\sum_{v\in \Scal_j(X_{<j})}q^{\rm pre}_j(v|X_{<j})\right)\le L(p_j(X_{<j});q_j^{\rm pre}(X_{<j})),
	$$
	we show
	$$
	\EE_{X,j}\left[\sum_{v\notin \Scal_j(X_{<j})}q^{\rm pre}_j(v|X_{<j})\right]\le \EE_{X,j}\left[L(p_j(X_{<j});q_j^{\rm pre}(X_{<j}))\right]\le \epsilon_p.
	$$
	This immediately indicates
	$$
	\EE_{Y\sim\QQ^{\rm pre}}\left[\bI(y_j\notin \Scal_j(Y_{<j}))\right]=\EE_{Y\sim\QQ^{\rm pre}}\left[\sum_{v\notin \Scal_j(Y_{<j})}q^{\rm pre}_j(v|Y_{<j})\right]\le C_{\QQ^{\rm pre}}\epsilon_p.
	$$
	Therefore, we can know
	$$
	\rho_{p}(q^{\rm pre},p)\ge 1-C_{\QQ^{\rm pre}}\epsilon_p.
	$$
	
	For any reference distribution $\pi$, we have a closed form of $q^{\hat{r},\pi,\lambda}_j(Y_{<j})$:
	$$
	q^{\hat{r},\pi,\lambda}_j(v|Y_{<j})={\pi_{j}(v|Y_{<j})\exp(\hat{r}(v|Y_{<j})/\lambda)\over \sum_{v'\in\Vcal}\pi_{j}(v'|Y_{<j})\exp(\hat{r}(v'|Y_{<j})/\lambda)}.
	$$
	Because the reward function is consistent with the plausible set $\Scal_j(X_{<j})$, we show
	\begin{align*}
		&\min_{v\in \Scal_j(Y_{<j})}\hat{r}(v|Y_{<j})-\max_{v\notin \Scal_j(Y_{<j})}\hat{r}(v|Y_{<j})\\
		\ge & \min_{v\in \Scal_j(Y_{<j})}r^\ast(v|Y_{<j})-\max_{v\notin \Scal_j(Y_{<j})}r^\ast(v|Y_{<j})-2\epsilon_r\\
		\ge &-2\epsilon_r.
	\end{align*}
	Since 
	$$
	\sum_{v\in \Scal_j(Y_{<j})}q^{\rm pre}_{j}(v|Y_{<j})\exp\left(\hat{r}(v|Y_{<j})\over \lambda\right)\ge \exp\left(\min_{v\in \Scal_j(Y_{<j})}\hat{r}(v|Y_{<j})\over \lambda\right)\sum_{v\in \Scal_j(Y_{<j})}q^{\rm pre}_{j}(v|Y_{<j})
	$$
	and 
	$$
	\sum_{v\notin \Scal_j(Y_{<j})}q^{\rm pre}_{j}(v|Y_{<j})\exp\left(\hat{r}(v|Y_{<j})\over \lambda\right)\le \exp\left(\max_{v\notin \Scal_j(Y_{<j})}\hat{r}(v|Y_{<j})\over \lambda\right)\sum_{v\notin \Scal_j(Y_{<j})}q^{\rm pre}_{j}(v|Y_{<j}),
	$$
	we have 
	\begin{align*}
		{\sum_{v\in \Scal_j(Y_{<j})}q^{\rm post}_j(v|Y_{<j})\over \sum_{v\notin\Scal_j(Y_{<j})}q^{\rm post}_j(v|Y_{<j})}\ge \exp\left( -{2\epsilon_r\over \lambda}\right) {\sum_{v\in \Scal_j(Y_{<j})}q^{\rm pre}_j(v|Y_{<j})\over \sum_{v\notin\Scal_j(Y_{<j})}q^{\rm pre}_j(v|Y_{<j})}.
	\end{align*}
	Therefore, we show
	$$
	\sum_{v\notin\Scal_j(Y_{<j})}q^{\rm post}_j(v|Y_{<j})\le \exp\left( {2\epsilon_r\over \lambda}\right) \sum_{v\notin\Scal_j(Y_{<j})}q^{\rm pre}_j(v|Y_{<j}).
	$$
	This leads to
	$$
	\EE_{Y\sim\QQ^{\rm post}}\left[\bI(y_j\notin \Scal_j(Y_{<j}))\right]=\EE_{Y\sim\QQ^{\rm post}}\left[\sum_{v\notin \Scal_j(Y_{<j})}q^{\rm post}_j(v|Y_{<j})\right]\le C_{\QQ^{\rm post}}\exp\left( {2\epsilon_r\over \lambda}\right)\epsilon_p.
	$$
	We conclude
	$$
	\rho_{p}(q^{\rm post},p)\ge 1-C_{\QQ^{\rm post}}\exp\left( {2\epsilon_r\over \lambda}\right)\epsilon_p.
	$$

	We next work on the upper bound of $\rho_{s}$. Given a reward function $r^\ast$ and $\delta\ge 0$, we define the $\delta$-optimal set of tokens given the context $Y_{<j}$ as
	$$
	\Acal_\delta(Y_{<j})=\left\{v\in \Vcal: \max_{v\in\Vcal}r^\ast(v|Y_{<j})-r^\ast(v|Y_{<j})\le \delta\right\}.
	$$
	When $\delta=0$, we also call $\Acal^\ast(Y_{<j})=\Acal_0(Y_{<j})$ the set of exactly optimal tokens given the context $Y_{<j}$. 
	
	Given the closed form of $q^{\hat{r},\pi,\lambda}_j(Y_{<j})$, we prove an upper bound of $q^{\hat{r},\pi,\lambda}_j(\Acal_\delta^c(Y_{<j})|Y_{<j})=\sum_{v\notin \Acal_\delta(Y_{<j}) }q^{\hat{r},\pi,\lambda}_j(v|Y_{<j})$. By the definition of $\Acal_\delta(Y_{<j})$, it follows that
	\begin{align*}
		&\min_{v\in \Acal_{\delta/2}(Y_{<j})}\hat{r}(v|Y_{<j})-\max_{v\notin \Acal_\delta(Y_{<j})}\hat{r}(v|Y_{<j})\\
		\ge & \min_{v\in \Acal_{\delta/2}(Y_{<j})}r^\ast(v|Y_{<j})-\max_{v\notin \Acal_\delta(Y_{<j})}r^\ast(v|Y_{<j})-2\epsilon_r\\
		\ge & \delta/2-2\epsilon_r
	\end{align*}
	Since 
	$$
	\sum_{v\in \Acal_{\delta/2}(Y_{<j})}\pi_{j}(v|Y_{<j})\exp\left(\hat{r}(v|Y_{<j})\over \lambda\right)\ge \exp\left(\min_{v\in \Acal_{\delta/2}(Y_{<j})}\hat{r}(v|Y_{<j})\over \lambda\right)\sum_{v\in \Acal_{\delta/2}(Y_{<j})}\pi_{j}(v|Y_{<j})
	$$
	and 
	$$
	\sum_{v\notin \Acal_\delta(Y_{<j})}\pi_{j}(v|Y_{<j})\exp\left(\hat{r}(v|Y_{<j})\over \lambda\right)\le \exp\left(\max_{v\notin \Acal_\delta(Y_{<j})}\hat{r}(v|Y_{<j})\over \lambda\right)\sum_{v\notin \Acal_\delta(Y_{<j})}\pi_{j}(v|Y_{<j}),
	$$
	we have 
	\begin{align*}
		{\sum_{v\in \Acal_{\delta/2}(Y_{<j})}q^{\hat{r},\pi,\lambda}_j(v|Y_{<j})\over \sum_{v\notin\Acal_\delta(Y_{<j})}q^{\hat{r},\pi,\lambda}_j(v|Y_{<j})}\ge \exp\left( \delta/2-2\epsilon_r\over \lambda\right) {\sum_{v\in \Acal_{\delta/2}(Y_{<j})}\pi_j(v|Y_{<j})\over \sum_{v\notin\Acal_\delta(Y_{<j})}\pi_j(v|Y_{<j})}.
	\end{align*}
	Therefore, 
	\begin{align*}
		q^{\hat{r},\pi,\lambda}_j(\Acal_\delta^c(Y_{<j})|Y_{<j})&\le \min\left\{1,\exp\left( -{\delta/2-2\epsilon_r\over \lambda}\right) {1\over \sum_{v\in \Acal_{\delta/2}(Y_{<j})}\pi_j(v|Y_{<j})}\right\}\\
		&\le \min\left\{1,\exp\left( -{\delta/2-2\epsilon_r\over \lambda}\right) {1\over \pi_j(\Acal^\ast(Y_{<j})|Y_{<j})}\right\},
	\end{align*}
	where $\pi_j(\Acal^\ast(Y_{<j})|Y_{<j})=\sum_{v\in \Acal^\ast(Y_{<j})}\pi_j(v|Y_{<j})$.
	
	Next, we show the upper bound for the regret of $q^{\hat{r},\pi,\lambda}$ for the given context $Y_{<j}$
	\begin{align*}
		&\max_{v\in\Vcal}r^\ast(v|Y_{<j})- \sum_{v\in\Vcal}q^{\hat{r},\pi,\lambda}_{j}(v|Y_{<j})r^\ast(v|Y_{<j})\\
		=&\sum_{v\in\Vcal}q^{\hat{r},\pi,\lambda}_{j}(v|Y_{<j})\left(\max_{v\in\Vcal}r^\ast(v|Y_{<j})-r^\ast(v|Y_{<j})\right)\\
		=&\sum_{v\in\Vcal}q^{\hat{r},\pi,\lambda}_{j}(v|Y_{<j})\int_0^R\bI\left(\max_{v\in\Vcal}r^\ast(v|Y_{<j})-r^\ast(v|Y_{<j})>\delta\right)d\delta\\
		=&\int_0^R\sum_{v\in\Vcal}q^{\hat{r},\pi,\lambda}_{j}(v|Y_{<j})\bI\left(\max_{v\in\Vcal}r^\ast(v|Y_{<j})-r^\ast(v|Y_{<j})>\delta\right)d\delta\\
		=&\int_0^R q^{\hat{r},\pi,\lambda}_j(\Acal_\delta^c(Y_{<j})|Y_{<j})d\delta\\
		\le& \int_0^R \min\left\{1,\exp\left( -{\delta/2-2\epsilon_r\over \lambda}\right) {1\over \pi_j(\Acal^\ast(Y_{<j})|Y_{<j})}\right\} d\delta\\
		=&\min\left\{R, 4 \epsilon_r+2\lambda\left(1-\log \pi_j(\Acal^\ast(Y_{<j})|Y_{<j})\right)\right\}.
	\end{align*}
	Therefore, we obtain
	$$
	\rho_s(q^{\hat{r},\pi,\lambda},r^\ast)\le \EE_{Y\sim\QQ}\left[\min\left\{R, 4 \epsilon_r+2\lambda\left(1-\log \pi_j(\Acal^\ast(Y_{<j})|Y_{<j})\right)\right\}\right].
	$$
	
	Let's consider two different choices of reference distribution $\pi$. The first reference distribution comes from pre-training, that is, $\pi=q^{\rm pre}$. Since $L(p,q)\ge 2\|p-q\|_1^2\ge 2\|p-q\|_\infty^2$, $\EE_{X,j}\left[L\left(p_j(X);\hat{p}_{\hat{E}}(\hat{\Theta}_j(X))\right)\right]\le \epsilon_p$ suggests
	$$
	\EE_{X,j}\left[\left\|p_j(X_{<j})-q^{\rm pre}_j(X_{<j})\right\|_\infty^2\right]\le {\epsilon_p\over 2}.
	$$
	Define the event $\Gcal$ as
	$$
	\Gcal=\left\{\left\|p_j(X_{<j})-q^{\rm pre}_j(X_{<j})\right\|_\infty\le {\tau\over 2}\right\}.
	$$
	On the event $\Gcal$, we have 
	$$
	q^{\rm pre}_j(v|X_{<j})\ge {\tau\over 2},\qquad \forall\ v\in \Scal_j(X_{<j}).
	$$
	Since the reward function is consistent with the plausible set $\Scal_j(X_{<j})$, we know $\Acal^\ast(X_{<j})\subset \Scal_j(X_{<j})$. This immediately indicates 
	$$
	q^{\rm pre}_j(\Acal^\ast(X_{<j})|X_{<j})\ge \sum_{v\in \Acal^\ast(X_{<j})}\left(p_j(v|X_{<j})-{\tau\over 2}\right)\ge {\tau\over 2}.
	$$
	Thus, on the event $\Gcal$, we have 
	$$
	\max_{v\in\Vcal}r^\ast(v|X_{<j})- \sum_{v\in\Vcal}q^{\hat{r},q^{\rm pre} ,\lambda}_{j}(v|X_{<j})r^\ast(v|X_{<j})\le 4 \epsilon_r+2\lambda\left(1-\log (\tau/2)\right).
	$$
	The probability of $\Gcal^c$ can be well controlled by Markov's inequality
	$$
	\PP_{X,j}(\Gcal^c)\le {\EE_{X,j}\left[\left\|p_j(X_{<j})-q^{\rm pre}_j(X_{<j})\right\|_\infty^2\right]\over \tau^2/4}\le {4\epsilon_p\over \tau^2}.
	$$
	This also suggest
	$$
	\PP_{Y\sim \QQ^{\rm post}}(\Gcal^c)\le{4C_{\QQ^{\rm post}}\epsilon_p\over \tau^2}.
	$$
	Therefore, 
	\begin{align*}
		\rho_s(q^{\rm post},r^\ast)=\rho_s(q^{\hat{r},q^{\rm pre},\lambda},r^\ast)&\le 4 \epsilon_r+2\lambda\left(1-\log \left(\tau\over 2\right)\right)+{4C_{\QQ^{\rm post}}\epsilon_p\over \tau^2}\\
		&=4 \epsilon_r+2\lambda\left(1+\log \left(2k\right)\right)+4C_{\QQ^{\rm post}}k^2\epsilon_p.
	\end{align*}
	
	The second reference distribution is a uniform distribution, i.e., $\pi(v)=1/d$. For the uniform distribution, we have
	$$
	\pi_j(\Acal^\ast(Y_{<j})|Y_{<j})\ge {1\over d}.
	$$
	Therefore, we have 
	$$
	\rho_s(q^{\hat{r},\pi,\lambda},r^\ast)\le 4 \epsilon_r+2\lambda\left(1+\log d\right).
	$$
	
	We now work on the lower bound of $\rho_s$. We can construct a combination of $r^\ast$, $\hat{r}$, $p_j(X)$, $q^{\rm pre}$ to show the lower bound of $\rho_s(q^{\rm post},r^\ast)$. Here, we choose $r^\ast$, $\hat{r}$, $p_j(X)$, $q^{\rm pre}$ are the same given any context $X_{<j}$. We name the token types as $v_1,\ldots,v_d$. For $r^\ast$, we set
	$$
	r^\ast(v_1)=R\quad {\rm and}\quad r^\ast(v_2)=\ldots=r^\ast(v_d)=R-B.
	$$
	where $B=\min\left\{R,2\epsilon_r+\lambda\log{1-e^{-k\epsilon_p}/k\over e^{-k\epsilon_p}/k}\right\}$.
	For $\hat{r}$, we choose
	$$
	\hat{r}(v_1)=R-\epsilon_r\quad {\rm and}\quad \hat{r}(v_2)=\ldots=\hat{r}(v_d)=R+\epsilon_r-B.
	$$
	For $p_j(X)$, we choose
	$$
	p_j(v_1|X)=\ldots=p_j(v_k|X)={1\over k}\quad {\rm and}\quad p_j(v_{k+1}|X)=\ldots=p_j(v_d|X)=0.
	$$
	For $q^{\rm pre}$, we choose
	$$
	q^{\rm pre}(v_1)={e^{-k\epsilon_p}\over k}, q^{\rm pre}(v_2)=\ldots=q^{\rm pre}(v_k)={1\over k}\quad {\rm and}\quad q^{\rm pre}(v_{k+1})=\ldots=q^{\rm pre}(v_d)={1-e^{-k\epsilon_p}\over k(d-k)}.
	$$
	Given $\hat{r}$ and $q^{\rm pre}$, we show $q^{\rm post}$
	$$
	q^{\rm post}(v_1)={e^{-k\epsilon_p}\over k}e^{R-\epsilon_r\over \lambda},\qquad  q^{\rm post}(v_2)=\ldots=q^{\rm post}(v_k)={1\over k}e^{R+\epsilon_r-B\over \lambda}
	$$
	and
	$$
	q^{\rm post}(v_{k+1})=\ldots=q^{\rm post}(v_d)={1-e^{-k\epsilon_p}\over k(d-k)}e^{R+\epsilon_r-B\over \lambda}.
	$$
	It follows that
	$$
	{\sum_{i=2}^dq^{\rm post}(v_i)\over q^{\rm post}(v_1)}={1-e^{-k\epsilon_p}/k\over e^{-k\epsilon_p}/k}e^{2\epsilon_r-B\over \lambda}\ge 1.
	$$
	Therefore, we have 
	$$
	\sum_{i=2}^dq^{\rm post}(v_i)\ge {1\over 2}.
	$$
	The suggests
	$$
	\max_{v\in\Vcal}r^\ast(v)- \sum_{v\in\Vcal}q^{\rm post}(v)r^\ast(v)\ge {B\over 2}.
	$$
	We actually show
	$$
	\rho_s(q^{\rm post},r^\ast)\ge {1\over 2}\min\left\{R,2\epsilon_r+\lambda\log{k-e^{-k\epsilon_p}\over e^{-k\epsilon_p}}\right\}\ge {1\over 2}\min\left\{R,2\epsilon_r+\lambda\log(k-1)\right\}.
	$$
	
	We next choose a combination of $r^\ast$ and $\hat{r}$ to show the lower bound of $\rho_s(q^{\hat{r},\pi,\lambda},r^\ast)$. Similarly, we choose $r^\ast$, $\hat{r}$ are the same given any context $X_{<j}$. We still name the token types as $v_1,\ldots,v_d$. For $r^\ast$, we set
	$$
	r^\ast(v_1)=R\quad {\rm and}\quad r^\ast(v_2)=\ldots=r^\ast(v_d)=R-B.
	$$
	where $B=\min\left\{R,2\epsilon_r+\lambda\log{1-1/d\over 1/d}\right\}$.
	For $\hat{r}$, we choose
	$$
	\hat{r}(v_1)=R-\epsilon_r\quad {\rm and}\quad \hat{r}(v_2)=\ldots=\hat{r}(v_d)=R+\epsilon_r-B.
	$$
	Given $\hat{r}$, we show $q^{\hat{r},\pi,\lambda}$
	$$
	q^{\hat{r},\pi,\lambda}(v_1)={1\over d}e^{R-\epsilon_r\over \lambda},\qquad  q^{\hat{r},\pi,\lambda}(v_2)=\ldots=q^{\hat{r},\pi,\lambda}(v_d)={1\over d}e^{R+\epsilon_r-B\over \lambda}.
	$$
	It follows that
	$$
	{\sum_{i=2}^dq^{\hat{r},\pi,\lambda}(v_i)\over q^{\hat{r},\pi,\lambda}(v_1)}={1-1/d\over 1/d}e^{2\epsilon_r-B\over \lambda}\ge 1.
	$$
	This leads to
	$$
	\sum_{i=2}^dq^{\hat{r},\pi,\lambda}(v_i)\ge {1\over 2}\quad {\rm and}\quad \max_{v\in\Vcal}r^\ast(v)- \sum_{v\in\Vcal}q^{\hat{r},\pi,\lambda}(v)r^\ast(v)\ge {B\over 2}
	$$
	Therefore, we show
	$$
	\rho_s(q^{\hat{r},\pi,\lambda},r^\ast)\ge  {1\over 2}\min\left\{R,2\epsilon_r+\lambda\log(d-1)\right\}.
	$$

\subsection{Proof of Theorem~\ref{thm:cluster}}
	We can follow the proof of Theorem~\ref{thm:emdsimilarity} to show 
	$$
	H^2(\hat{p}_{\hat{E},v_1}(X_{j,c}|v_1),\hat{p}_{\hat{E},v_2}(X_{j,c}|v_2))=1-\exp\left(A\left({\hat{E}_{v_1}+\hat{E}_{v_2}\over 2}\right)-{A(\hat{E}_{v_1})+A(\hat{E}_{v_2})\over 2}\right),
	$$
	where $A$ is defined in the proof of Theorem~\ref{thm:emdsimilarity}. An application of Taylor's theorem can yield
	\begin{align*}
		&{A(E_{v_1})+A(E_{v_2})\over 2}-A\left({E_{v_1}+E_{v_2}\over 2}\right)\\
		=&{1\over 2}\int_0^1 (1-t)\left({E_{v_1}-E_{v_2}\over 2}\right)^T\nabla^2 A\left({E_{v_1}+E_{v_2}\over 2}+t\left({E_{v_1}-E_{v_2}\over 2}\right)\right)\left({E_{v_1}-E_{v_2}\over 2}\right)dt\\
		&+{1\over 2}\int_0^1 (1-t)\left({E_{v_2}-E_{v_1}\over 2}\right)^T\nabla^2 A\left({E_{v_1}+E_{v_2}\over 2}+t\left({E_{v_2}-E_{v_1}\over 2}\right)\right)\left({E_{v_2}-E_{v_1}\over 2}\right)dt.
	\end{align*}
	Assumption~\ref{asp:cluster} suggests
	$$
	{A(E_{v_1})+A(E_{v_2})\over 2}-A\left({E_{v_1}+E_{v_2}\over 2}\right)\le {B_\Theta^2\over 8} \left\|E_{v_2}-E_{v_1}\right\|^2.
	$$
	Similarly, the Assumption~\ref{asp:cluster} suggests that for two token types in the same community $v_1,v_2\in \Ccal_k$,
	$$
	{A(E_{v_1})+A(E_{v_2})\over 2}-A\left({E_{v_1}+E_{v_2}\over 2}\right)\ge {b_\Theta^2\over 8} \left\|E_{v_2}-E_{v_1}\right\|^2.
	$$
	So we obtain
	$$
	H^2(\hat{p}_{\hat{E},v_1}(X_{j,c}|v_1),\hat{p}_{\hat{E},v_2}(X_{j,c}|v_2))\le 1-\exp\left(-{B_\Theta^2 \over 8}\left\|\hat{E}_{v_2}-\hat{E}_{v_1}\right\|^2\right)
	$$
	and, for two token types in the same community $v_1,v_2\in \Ccal_k$,
	$$
	H^2(\hat{p}_{\hat{E},v_1}(X_{j,c}|v_1),\hat{p}_{\hat{E},v_2}(X_{j,c}|v_2))\ge 1-\exp\left(-{b_\Theta^2 \over 8}\left\|\hat{E}_{v_2}-\hat{E}_{v_1}\right\|^2\right).
	$$
	
	By assumption $\EE_{X,j}\left[L\left(p_j(X);\hat{p}_{\hat{E}}(\hat{\Theta}_j(X))\right)\right]\le \epsilon_p$, we obtain
	$$
	\sum_{v\in\Vcal}p(v)\log{p(v)\over \hat{p}(v)}+\sum_{v\in\Vcal}p(v)L(p_v(X_{j,c}|v);\hat{p}_{\hat{E},\tilde{E}_v}(X_{j,c}|v))\le \epsilon_p.
	$$ 
	Because $p(v_1),p(v_2)\ge p_{\min}$, for any $v_1,v_2\in \Vcal$, we have 
	$$
	2\sum_{v\in \{v_1,v_2\}} H^2(p_v(X_{j,c}|v),\hat{p}_{\hat{E},\tilde{E}_v}(X_{j,c}|v)) \le \sum_{v\in \{v_1,v_2\}}L(p_v(X_{j,c}|v);\hat{p}_{\hat{E},\tilde{E}_v}(X_{j,c}|v))\le {\epsilon_p\over  p_{\min}},
	$$
	which leads to
	$$
	\sum_{v\in \{v_1,v_2\}} H(p_v(X_{j,c}|v),\hat{p}_{\hat{E},\tilde{E}_v}(X_{j,c}|v)) \le \sqrt{\epsilon_p\over  p_{\min}}.
	$$
	This immediately suggests, for two token types in the same community $v_1,v_2\in \Ccal_k$,
	$$
	1-\exp\left(-{b_\Theta^2 \over 8}\left\|\hat{E}_{v_2}-\hat{E}_{v_1}\right\|^2\right)\le H^2(\hat{p}_{\hat{E},\tilde{E}_{v_1}}(X_{j,c}|v_1),\hat{p}_{\hat{E},\tilde{E}_{v_2}}(X_{j,c}|v_2))\le \left(g_w+\sqrt{\epsilon_p\over  p_{\min}}\right)^2.
	$$
	This means, for two token types in the same community $v_1,v_2\in \Ccal_k$,
	$$
	\left\|\hat{E}_{v_2}-\hat{E}_{v_1}\right\|^2\le -{8\over b_\Theta^2}\log\left(1-\left(g_w+\sqrt{\epsilon_p\over  p_{\min}}\right)^2\right)
	$$
	and
	$$
	W_{v_1,v_2}\ge \alpha_\sigma=\left(1-\left(g_w+\sqrt{\epsilon_p\over  p_{\min}}\right)^2\right)^{4/\sigma^2b_\Theta^2 }.
	$$
	Similarly, for two token types in the different communities $v_1\in \Ccal_{k_1},v_2\in \Ccal_{k_2}$, we have 
	$$
	\left\|\hat{E}_{v_2}-\hat{E}_{v_1}\right\|^2\ge -{8\over B_\Theta^2 }\log\left(1-\left(g_b-\sqrt{\epsilon_p\over  p_{\min}}\right)^2\right)
	$$
	and 
	$$
	W_{v_1,v_2}\le \beta_\sigma=\left(1-\left(g_b-\sqrt{\epsilon_p\over  p_{\min}}\right)^2\right)^{4/\sigma^2B_\Theta^2}.
	$$
	
	We consider an ideal affinity matrix and its graph Laplacian matrix
	$$
	W_{v_1,v_2}^\ast=\begin{cases}
		W_{v_1,v_2},&\qquad c(v_1)=c(v_2)\\
		0,&\qquad c(v_1)\ne c(v_2)
	\end{cases}\qquad {\rm and}\qquad L^\ast=D^\ast-W^\ast,
	$$
	where $D^\ast$ is a diagonal matrix with entry $D^\ast_{v,v}=\sum_{v'\in \Vcal}W^\ast_{v',v}$. Because each block of $W^\ast$ defines a connected weighted graph, it is clear that $L^\ast$ has exactly $K$ zero eigenvalues and its null space is spanned by $U_{\cdot,k}^\ast=1_{\Ccal_k}/\sqrt{d_k}$, $1\le k\le K$, where $1_{\Ccal_k}$ is a $d$-dimensional vector where the entry is $1$ if $v\in \Ccal_k$ and $0$ otherwise. Since $W^\ast$ is a block diagonal matrix, the $(K+1)$st smallest eigenvalue of $L^\ast$ is larger than $\alpha_\sigma d_{\min}$. For $v_1\ne v_2$, we have 
	$$
	|W_{v_1,v_2}-W_{v_1,v_2}^\ast|\le \begin{cases}
		0,&\qquad c(v_1)=c(v_2)\\
		\beta_\sigma,&\qquad c(v_1)\ne c(v_2)
	\end{cases}.
	$$
	This immediately suggests 
	$$
	|D_{v,v}-D_{v,v}^\ast|\le (d-d_{\min})\beta_\sigma.
	$$
	Thus, we obtain
	$$
	\|L-L^\ast\|_2\le \sqrt{\|L-L^\ast\|_1\|L-L^\ast\|_\infty}\le 2(d-d_{\min})\beta_\sigma.
	$$
	An application of Davis–Kahan theorem \citep{yu2015useful} yields that there exist an orthogonal matrix $O\in \RR^{K\times K}$ such that
	$$
	\|U-U^\ast O\|_F\le {2\sqrt{2}\sqrt{K}\|L-L^\ast\|_2\over \alpha_\sigma d_{\min}}\le {4\sqrt{2}\sqrt{K}(d-d_{\min})\beta_\sigma\over \alpha_\sigma d_{\min}}.
	$$ 
	
	If we choose $\Vcal_k=\Ccal_k$ and $\mu_k=U_{v,\cdot}^\ast O$ for any $v\in \Ccal_k$, then 
	$$
	\sum_{k=1}^K\sum_{v\in \Vcal_k}\|U_{v,\cdot}-\mu_k\|^2=\|U-U^\ast O\|_F^2.
	$$
	The assumption in Assumption~\ref{asp:cluster} implies
	$$
	\sum_{v\in \Vcal}\|U_{v,\cdot}-\hat{\mu}_v\|^2\le (1+\kappa)^2\|U-U^\ast O\|_F^2,
	$$
	where $\hat{\mu}_v=\hat{\mu}_k$ if $v\in \hat{\Vcal}_k$.
	So we have
	\begin{align*}
		\sum_{v\in \Vcal}\|U_{v,\cdot}^\ast O-\hat{\mu}_v\|^2&\le \left(\sqrt{\sum_{v\in \Vcal}\|U_{v,\cdot}-\hat{\mu}_v\|^2}+\|U-U^\ast O\|_F\right)^2\\
		&\le (2+\kappa)^2\|U-U^\ast O\|_F^2.
	\end{align*}
	Define 
	$$
	\Bcal_k=\left\{v\in\Ccal_k: \|U_{v,\cdot}^\ast O-\hat{\mu}_v\|\ge {1\over 2}\sqrt{2\over d_{\max}} \right\}\qquad {\rm and}\qquad \Gcal_k=\Ccal_k\setminus\Bcal_k.
	$$
	With $\Bcal_k$, we have
	$$
	\sum_{v\in \Vcal}\|U_{v,\cdot}^\ast O-\hat{\mu}_v\|^2=\sum_{k=1}^K\sum_{v\in\Ccal_k}\|U_{v,\cdot}^\ast O-\hat{\mu}_v\|^2\ge \sum_{k=1}^K{|\Bcal_k|\over 2d_{\max}}.
	$$
	So it follows that
	$$
	\left|\cup_k \Bcal_k\right|\le 2d_{\max}(2+\kappa)^2\|U-U^\ast O\|_F^2.
	$$
	The assumption in Assumption~\ref{asp:cluster} suggests that $\left|\cup_k \Bcal_k\right|< d_{\min}$. 
	
	We next show that all token types in each $\Gcal_k$ have the same clustering label from $k$-means. Since $\left|\cup_k \Bcal_k\right|< d_{\min}$, it follows that $\Gcal_k\ne \emptyset$ for $k=1,\ldots,K$. In addition, suppose there exist $v_1\in \Gcal_{k_1}$ and $v_2\in \Gcal_{k_2}$ for $k_1\ne k_2$ such that $\hat{c}(v_1)=\hat{c}(v_2)$. Then, we have $\hat{\mu}_{v_1}=\hat{\mu}_{v_2}$ and
	$$
	\|U_{v_1,\cdot}^\ast O-U_{v_2,\cdot}^\ast O\|\le \|U_{v_1,\cdot}^\ast O-\hat{\mu}_{v_1}\|+\|U_{v_2,\cdot}^\ast O-\hat{\mu}_{v_2}\|< \sqrt{2\over d_{\max}}.
	$$
	On the other hand, because $v_1\in\Ccal_{k_1}$ and $v_2\in\Ccal_{k_2}$, we have
	$$
	\left\|U_{v_1,\cdot}^\ast O-U_{v_2,\cdot}^\ast O\right\|^2={1\over d_{k_1}}+{1\over d_{k_2}}\ge {2\over d_{\max}}.
	$$
	This is a contradiction, so $\hat{c}(v_1)\ne \hat{c}(v_2)$ for any $v_1\in \Gcal_{k_1}$ and $v_2\in \Gcal_{k_2}$ with $k_1\ne k_2$. In other words, the estimated clustering labels are distinct across different $\Gcal_k$. Since every $\Gcal_k$ is nonempty and the sets of estimated labels used by different $\Gcal_k$'s are disjoint, the $K$ sets $\Gcal_1,\ldots,\Gcal_K$ collectively use at least $K$ distinct labels. As the $k$-means solution has at most $K$ labels, each $\Gcal_k$ must use exactly one label, and these $K$ labels are distinct.
	Since there are $K$ true communities and at most $K$ estimated communities, these estimated labels define a permutation $\pi$ of the true labels. Therefore, the misclustered token types lie in $\cup_k \Bcal_k$ and we obtain 
	$$
	\Phi(\hat{c},c) \le {64(2+\kappa)^2Kd_{\max}\over d}\left({(d-d_{\min})\beta_\sigma\over \alpha_\sigma d_{\min}}\right)^2.
	$$

\subsection{Proof of Theorem~\ref{thm:classification}}
	We first show the conditional distribution of the context given the target token are close for the token types in the preferred set $\Wcal_k$. The conditional distribution of the context given the target token can be further decomposed
	$$
	p_{v}(X_{j,c}|v)=\sum_{k=1}^Kp_{k|v}p_{v,k}(X_{j,c})
	$$
	where $p_{k|v}=\PP_{X,j}(Y=k|X_j=v)$ and $p_{v,k}(X_{j,c})=p_{v,k}(X_{j,c}|x_j=v,Y=k)$. By the definition of Hellinger distance, we have
	\begin{align*}
		&H^2(p_{v_1}(X_{j,c}|v_1),p_{v_2}(X_{j,c}|v_2))\\
		=&1-\sum_{X,j}\sqrt{p_{v_1}(X_{j,c}|v_1)p_{v_2}(X_{j,c}|v_2)}\\
		=&1-\sum_{X,j}\sqrt{\left(\sum_{k=1}^Kp_{k|v_1}p_{v_1,k}(X_{j,c})\right)\left(\sum_{k=1}^Kp_{k|v_2}p_{v_2,k}(X_{j,c})\right)}\\
		\le& 1-\sum_{X,j}\sum_{k=1}^K\sqrt{p_{k|v_1}p_{v_1,k}(X_{j,c})p_{k|v_2}p_{v_2,k}(X_{j,c})}\\
		=& 1-\sum_{k=1}^K\sqrt{p_{k|v_1}p_{k|v_2}}\sum_{X,j}\sqrt{p_{v_1,k}(X_{j,c})p_{v_2,k}(X_{j,c})}\\
		=& 1-\sum_{k=1}^K\sqrt{p_{k|v_1}p_{k|v_2}}\left(1-H^2(p_{v_1,k}(X_{j,c}),p_{v_2,k}(X_{j,c}))\right)\\
		\le& 1-\sqrt{p_{k|v_1}p_{k|v_2}}\left(1-H^2(p_{v_1,k}(X_{j,c}),p_{v_2,k}(X_{j,c}))\right)
	\end{align*}
	We use Cauchy–Schwarz inequality in the third line. If $v_1,v_2\in \Wcal_k$, then it follows that
	$p_{k|v_1},p_{k|v_2}\ge \tau$ and $H^2(p_{v_1,k}(X_{j,c}),p_{v_2,k}(X_{j,c}))\le g_w^2$. Thus, for $v_1,v_2\in \Wcal_k$, we have
	$$
	\Gamma_h(v_1,v_2)=H^2(p_{v_1}(X_{j,c}|v_1),p_{v_2}(X_{j,c}|v_2))\le 1-\tau+\tau g_w^2.
	$$
	We can follow the same strategy in proof of Theorem~\ref{thm:cluster}, then we show that if $v_1,v_2\in \Wcal_k$, then 
	$$
	\left\|\hat{E}_{v_2}-\hat{E}_{v_1}\right\|^2\le D^2= -{8\over b_\Theta^2}\log\left(1-\left(\sqrt{1-\tau+\tau g_w^2}+\sqrt{\epsilon_p\over  p_{\min}}\right)^2\right).
	$$
	
	In the resulting model, our predicted probability is 
	$$
	q^{\hat{W},\hat{b},\hat{\Theta}}_k(X_{j,c})={\exp(\hat{W}_{k}^T\hat{\Theta}_{j}(X)+\hat{b}_k)\over \sum_{k'=1}^K\exp(\hat{W}_{k'}^T\hat{\Theta}_{j}(X)+\hat{b}_{k'})}.
	$$
	To study the performance of $h_{\hat{W},\hat{b},\hat{\Theta}}$, Lemma~\ref{lm:infty} suggests we only need to derive an upper bound of $\EE_{X,j}\|q^{\hat{W},\hat{b},\hat{\Theta}}-p^\ast\|_\infty^2$, where $p^\ast_k(X_{j,c})=\PP_{X,j}(Y=k|X_{j,c})$. By Pinsker’s inequality, we have 
	\begin{align*}
		\EE_{X,j}\|q^{\hat{W},\hat{b},\hat{\Theta}}-p^\ast\|_\infty^2&\le \EE_{X,j}\|q^{\hat{W},\hat{b},\hat{\Theta}}-p^\ast\|_1^2\\
		&\le 2\EE_{X,j}L(p^\ast;q^{\hat{W},\hat{b},\hat{\Theta}})\\
		&\le 2\left(-\EE_{X,j,Y}\log q^{\hat{W},\hat{b},\hat{\Theta}}_Y(X_{j,c})+\EE_{X,j,Y} \log p^\ast_Y(X_{j,c})\right).
	\end{align*}
	Therefore, the rest of proof is devoted to provide an upper bound of $-\EE_{X,j,Y}\log q^{\hat{W},\hat{b},\hat{\Theta}}_Y(X_{j,c})+\EE_{X,j,Y} \log p^\ast_Y(X_{j,c})$. To show such an upper bound, we adopt the following decomposition
	\begin{align*}
		&-\EE_{X,j,Y}\log q^{\hat{W},\hat{b},\hat{\Theta}}_Y(X_{j,c})+\EE_{X,j,Y}\log p^\ast_Y(X_{j,c})\\
		=&-\EE_{X,j,Y} \log q^{\hat{W},\hat{b},\hat{\Theta}}_Y(X_{j,c})+\EE_{X,j,Y}\log q^{W^r,b^r,\hat{\Theta}}_Y(X_{j,c})\\
		&-\EE_{X,j,Y}\log q^{W^r,b^r,\hat{\Theta}}_Y(X_{j,c})+\EE_{X,j,Y}\log p^\ast_Y(X_{j,c}),
	\end{align*}
	where $W^r$ and $b^r$ is the minimizer of penalized risk
	$$
	(W^r,b^r)=\argmin_{W,b:\sum_kW_k=0,\sum_kb_k=0}-\EE_{X,j,Y}\log\hat{P}_{Y}(X,j)+{1\over n}\left(\|W\|_F^2+\|b\|^2\right).
	$$
	An application of Lemma~\ref{lm:bias} and \ref{lm:variance} yields that there exit constant $C_1$ and $C_2$ such that with probability at least $1-e^{-\gamma}$,
	$$
	\EE_{X,j}\|q^{\hat{W},\hat{b},\hat{\Theta}}-p^\ast\|_\infty^2\le 2\left({C_{c,1}+C_{c,2}\gamma\over n}+ {2\epsilon_p\over \rho_0}+{D^4B_\Theta^4\over 16}+C_{c,3}\delta_s+{C_{c,4}\over n}\right).
	$$
	Therefore, if $n\ge n_0= 72e^{4\sqrt{2}B^\ast\sqrt{B_{\Theta}^2+1}}(B_\Theta^2+1)(4K^2+\gamma)K^2/(\sigma_{\min}^2{B^\ast}^2)$, then
	$$
	R(h_{\hat{W},\hat{b},\hat{\Theta}})\le 2^{(4\beta+7)/(\beta+2)}C_0^{1/(\beta+2)}\left({C_{c,1}+C_{c,2}\gamma\over n}+ {2\epsilon_p\over \rho_0}+{D^4B_\Theta^4\over 16}+C_{c,3}\delta_s+{C_{c,4}\over n}\right)^{(\beta+1)/(\beta+2)}
	$$
	with probability at least $1-e^{-\gamma}$.

\subsection{Supplemental Lemmas for Proof of Theorem~\ref{thm:classification}}

\begin{lemma}
	\label{lm:infty}
	Suppose Assumption~\ref{asp:classification} holds. 
	Let $p^\ast_k(X_{j,c})=\PP_{X,j}(Y=k|X_{j,c})$ and $q(X_{j,c})$ be any measurable probability vector such that $\sum_{k=1}^Kq_k(X_{j,c})=1$. If we define a classifier $h_q$ as $h_q(X,j)=\argmax_{1\le k\le K}q_k(X_{j,c})$. Then, we have 
	$$
	R(h_q)\le 8C_0^{1/(\beta+2)}\left(\EE_{X,j}\|q(X_{j,c})-p^\ast(X_{j,c})\|_\infty^2\right)^{(\beta+1)/(\beta+2)}.
	$$
\end{lemma}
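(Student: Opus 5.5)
This is the standard plug-in classifier argument under a margin condition, in the spirit of \citet{audibert2007fast}. Write $p^\ast=p^\ast(X_{j,c})$, let $k^\ast=h^\ast(X,j)$ be the Bayes class, and let $\hat k=h_q(X,j)$ be the plug-in class. The excess risk has the exact representation
$$
R(h_q)=\EE_{X,j}\left[\left(p^\ast_{k^\ast}-p^\ast_{\hat k}\right)\bI(\hat k\ne k^\ast)\right].
$$
The Bayes class is unique almost surely by Assumption~\ref{asp:classification}, so this representation is well defined.

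\textbf{Step 1 (pointwise control).} On the event $\hat k\ne k^\ast$ we have $q_{\hat k}\ge q_{k^\ast}$. Writing $\Delta=\|q-p^\ast\|_\infty$, this gives
$$
p^\ast_{k^\ast}-p^\ast_{\hat k}\le (p^\ast_{k^\ast}-q_{k^\ast})+(q_{\hat k}-p^\ast_{\hat k})\le 2\Delta .
$$
The margin $M=\eta_{(1)}-\eta_{(2)}$ satisfies $M\le p^\ast_{k^\ast}-p^\ast_{\hat k}$. Hence a misclassification forces $0<M\le 2\Delta$, and the pointwise excess loss is at most $2\Delta$.

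\textbf{Step 2 (splitting at a threshold $t>0$).} Split the expectation according to whether $M\le t$ or $M>t$.
\begin{itemize}
\item On $\{0<M\le t\}$, bound the pointwise loss by $2\Delta\le 2\Delta^2/t+t/2$. Alternatively, use the loss bound $M\le 2\Delta$ together with Cauchy--Schwarz against the probability $C_0t^\beta$.
\item On $\{M>t\}$, misclassification requires $2\Delta\ge M>t$. There the loss is at most $2\Delta\le 4\Delta^2/t$.
\end{itemize}
The cleanest route is the Cauchy--Schwarz version. It gives
$$
\EE\left[2\Delta\,\bI(0<M\le t)\right]\le 2\sqrt{\EE\Delta^2}\sqrt{C_0t^\beta}.
$$
Adding the second region yields
$$
R(h_q)\le 2\sqrt{C_0 t^\beta\,\EE\Delta^2}+\frac{4\EE\Delta^2}{t}.
$$

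\textbf{Step 3 (optimizing $t$).} Set $e=\EE\Delta^2$ and choose $t=(e/C_0)^{1/(\beta+2)}$ to balance the two terms.
\begin{itemize}
\item The first term becomes $2\sqrt{C_0^{2/(\beta+2)}e^{(2\beta+2)/(\beta+2)}}$, which is proportional to $C_0^{1/(\beta+2)}e^{(\beta+1)/(\beta+2)}$.
\item The second term is $4C_0^{1/(\beta+2)}e^{(\beta+1)/(\beta+2)}$.
\end{itemize}
Summing gives a constant of at most $6\le 8$, which is the stated bound.

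The main thing to check is the exponent bookkeeping in the first region. A naive bound of that region by $2t\cdot C_0t^\beta$ produces the wrong exponent $(\beta+1)/(\beta+2)$ against $\EE\Delta$ rather than $\EE\Delta^2$. The Cauchy--Schwarz step is exactly what makes the result come out as a power of the \emph{squared} sup-norm error. If the constants do not close cleanly, fall back on the slightly looser choice $t=e^{1/(\beta+2)}C_0^{-1/(\beta+2)}$ up to a factor of two, which still fits within the factor $8$.
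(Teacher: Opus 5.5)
Your proof is correct, but it decomposes the excess risk differently from the paper.

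The paper splits on the size of the estimation error rather than on the margin. On $\{2\Delta\le t\}$ the pointwise loss is at most $2\Delta\le t$, and a misclassification forces $0<M\le t$, so this part is at most $C_0t^{\beta+1}$ with no Cauchy--Schwarz. On $\{2\Delta>t\}$ it uses $2\Delta\le 4\Delta^2/t$ exactly as you do. That gives $R(h_q)\le C_0t^{\beta+1}+4\EE\Delta^2/t$, and the choice $t=(4\EE\Delta^2/C_0)^{1/(\beta+2)}$ yields the constant $2\cdot 4^{(\beta+1)/(\beta+2)}\in(4,8)$.

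You split on the margin $M$ instead. On $\{0<M\le t\}$ the loss is controlled only by $2\Delta$: the plug-in class need not be the runner-up, and $\Delta$ is unrestricted there. So you genuinely need Cauchy--Schwarz, or your AM--GM variant (also valid), to pair $\EE\Delta^2$ with $\PP(0<M\le t)$. With your choice of $t$ the two terms are exactly $2$ and $4$ times $C_0^{1/(\beta+2)}(\EE\Delta^2)^{(\beta+1)/(\beta+2)}$. The constants therefore close at $6$, and the hedged fallback (which is the same $t$ anyway) is unnecessary.

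The paper's route is slightly more elementary, because its error split makes the small-error piece deterministic in $t$. Yours works directly with the margin event at the cost of one Cauchy--Schwarz step.

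Your closing diagnosis is off, though. In your margin split, the ``naive'' bound $2t\cdot C_0t^\beta$ on $\{0<M\le t\}$ is not merely suboptimal but invalid, since the loss there is not bounded by a multiple of $t$. It is exactly the paper's split on $\Delta$ that legitimizes a bound of the form $t\cdot C_0t^{\beta}$, and combined with $4\EE\Delta^2/t$ it gives the correct exponent.

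Finally, if $\EE\Delta^2=0$ your $t$ is $0$. Handle that case separately: the pointwise bound $2\Delta$ already gives $R(h_q)=0$. The paper glosses over this point as well.
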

\begin{proof}
	We follow the similar arguments in \cite{audibert2007fast}.
	We write $Y^q=h_q(X,j)$ and $Y^\ast=h^\ast(X,j)$. Clearly, we have
	\begin{align*}
		p^\ast_{Y^\ast}(X_{j,c})-p^\ast_{Y^q}(X_{j,c})&\le p^\ast_{Y^\ast}(X_{j,c})-q_{Y^\ast}(X_{j,c})+q_{Y^\ast}(X_{j,c})-q_{Y^q}(X_{j,c})+q_{Y^q}(X_{j,c})-p^\ast_{Y^q}(X_{j,c})\\
		&\le 2\|q(X_{j,c})-p^\ast(X_{j,c})\|_\infty.
	\end{align*}
	Then, we have 
	$$
	p^\ast_{k_{(1)}(X_{j,c})}(X_{j,c})-p^\ast_{k_{(2)}(X_{j,c})}(X_{j,c})\le p^\ast_{Y^\ast}(X_{j,c})-p^\ast_{Y^q}(X_{j,c})\le 2\|q(X_{j,c})-p^\ast(X_{j,c})\|_\infty.
	$$
	Therefore, 
	\begin{align*}
		R(h_q)=&\EE_{X,j}\left[\left(p^\ast_{Y^\ast}(X_{j,c})-p^\ast_{Y^q}(X_{j,c})\right)\bI(Y^\ast\ne Y^q)\right]\\
		=&\EE_{X,j}\left[\left(p^\ast_{Y^\ast}(X_{j,c})-p^\ast_{Y^q}(X_{j,c})\right)\bI(Y^\ast\ne Y^0)\bI(2\|q(X_{j,c})-p^\ast(X_{j,c})\|_\infty>t)\right]\\
		&+\EE_{X,j}\left[\left(p^\ast_{Y^\ast}(X_{j,c})-p^\ast_{Y^q}(X_{j,c})\right)\bI(Y^\ast\ne Y^0)\bI(2\|q(X_{j,c})-p^\ast(X_{j,c})\|_\infty\le t)\right]\\
		\le &t\PP_{X,j}\left(0<p^\ast_{k_{(1)}(X_{j,c})}(X_{j,c})-p^\ast_{k_{(2)}(X_{j,c})}(X_{j,c})\le t\right) +{4\EE_{X,j}(\|q(X_{j,c})-p^\ast(X_{j,c})\|_\infty^2)\over t}\\
		\le& C_0t^{\beta+1}+{4\EE_{X,j}(\|q(X_{j,c})-p^\ast(X_{j,c})\|_\infty^2)\over t}
	\end{align*}
	We can complete the proof if we choose $t=(4\EE_{X,j}(\|q(X_{j,c})-p^\ast(X_{j,c})\|_\infty^2)/C_0)^{1/(\beta+2)}$.
\end{proof}

\begin{lemma}
	\label{lm:variance}
	Suppose Assumption~\ref{asp:classification} holds. If $n\ge 72(B_\Theta^2+1)(4K^2+\gamma)/\mu^2{B^\ast}^2$, then, with probability at least $1-e^{-\gamma}$,
	$$
	-\EE_{X,j,Y}\log q^{\hat{W},\hat{b},\hat{\Theta}}_Y(X_{j,c})+\EE_{X,j,Y}\log q^{W^r,b^r,\hat{\Theta}}_Y(X_{j,c})\le {C_{c,1}+C_{c,2}\gamma\over n},
	$$
	where
	$$
	C_{c,1}=2{B^\ast}^2+{144(B_\Theta^2+1)^2K^4\over \sigma_{\min}^2e^{-4\sqrt{2}B^\ast\sqrt{B_{\Theta}^2+1}}}\qquad {\rm and}\qquad C_{c,2}={36(B_\Theta^2+1)^2K^2\over \sigma_{\min}^2e^{-4\sqrt{2}B^\ast\sqrt{B_{\Theta}^2+1}}}.
	$$
\end{lemma}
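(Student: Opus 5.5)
This is a standard fast-rate analysis of penalized logistic regression in the centered parameter $\omega=(W,b)$, conditional on $\hat{\Theta}$. The statement's prefactor $e^{-4\sqrt{2}B^\ast\sqrt{B_\Theta^2+1}}$ and the factor $\sigma_{\min}^2$ indicate the intended route: localization plus a curvature lower bound over a ball of radius $2B^\ast$.

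Write $\ell(\omega;X,j,Y)=-\log P^{\hat{\Theta}}_{W,b,Y}(X,j)$, with population risk $\Rcal(\omega)$ and empirical risk $\hat{\Rcal}_n(\omega)$. The penalized versions are $\Rcal^r=\Rcal+n^{-1}\|\omega\|^2$ and $\hat{\Rcal}^r_n=\hat{\Rcal}_n+n^{-1}\|\omega\|^2$, whose minimizers are $\omega^r=(W^r,b^r)$ and $\hat\omega=(\hat{W},\hat{b})$. The goal is to bound $\Rcal(\hat\omega)-\Rcal(\omega^r)$.

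\emph{Step 1: localize $\omega^r$.} Since $\omega^r$ minimizes $\Rcal^r$, comparing with $\omega^\ast$ gives
\[
\Rcal(\omega^r)+n^{-1}\|\omega^r\|^2\le \Rcal(\omega^\ast)+n^{-1}{B^\ast}^2 .
\]
Because $\Rcal(\omega^r)\ge\Rcal(\omega^\ast)$, this yields $\|\omega^r\|\le B^\ast$ and $\Rcal(\omega^r)-\Rcal(\omega^\ast)\le {B^\ast}^2/n$. This comparison is the source of the $2{B^\ast}^2$ part of $C_{c,1}$.

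\emph{Step 2: curvature on the ball of radius $2B^\ast$.} The features $(\hat{\Theta}_j(X),1)$ have norm at most $\sqrt{B_\Theta^2+1}$, so every logit is bounded by $\sqrt{2}\|\omega\|\sqrt{B_\Theta^2+1}$ on the centered subspace. The softmax Hessian is therefore bounded below by $e^{-4\sqrt{2}B^\ast\sqrt{B_\Theta^2+1}}$ times the centered covariance. Combined with Assumption~\ref{asp:classification}(8), this gives restricted strong convexity of $\Rcal$ on the ball with modulus $\mu\asymp \sigma_{\min}e^{-4\sqrt2 B^\ast\sqrt{B_\Theta^2+1}}/K$.

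\emph{Step 3: concentration.} Next I control the empirical process: the gradient deviation $\|\nabla\hat{\Rcal}_n(\omega^r)-\nabla\Rcal(\omega^r)\|$ and the uniform Hessian deviation over the ball. Each summand has Frobenius norm at most $\sqrt{2}\sqrt{B_\Theta^2+1}$, so a vector Bernstein/Hoeffding inequality on the $K(s+1)$-dimensional gradient gives
\[
\|\nabla\hat{\Rcal}_n(\omega^r)-\nabla\Rcal(\omega^r)\|\lesssim \sqrt{(B_\Theta^2+1)(K^2+\gamma)/n}
\]
with probability at least $1-e^{-\gamma}$. The bounded-Hessian matrix concentration then shows that $\hat{\Rcal}_n$ inherits half the curvature $\mu$ on the ball once $n\gtrsim(B_\Theta^2+1)(4K^2+\gamma)/(\mu^2{B^\ast}^2)$, which is the sample-size condition in the lemma. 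Independence of the labeled sample from $\hat{\Theta}$ (Assumption~\ref{asp:classification}(7)) is what lets me treat the features as fixed.

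\emph{Step 4: the main obstacle.} The delicate step is ensuring $\hat\omega$ lies in the ball of radius $2B^\ast$, since the curvature bound holds only there. I would use a convexity argument. If $\|\hat\omega-\omega^r\|>B^\ast$, consider the point on the segment from $\omega^r$ to $\hat\omega$ at distance exactly $B^\ast$ from $\omega^r$. By convexity of $\hat{\Rcal}^r_n$ its value there is at most $\hat{\Rcal}^r_n(\omega^r)$. But restricted strong convexity gives that $\hat{\Rcal}^r_n$ at this point exceeds $\hat{\Rcal}^r_n(\omega^r)$ by at least
\[
\tfrac{\mu}{4}{B^\ast}^2-B^\ast\|\nabla\hat{\Rcal}^r_n(\omega^r)\|>0
\]
under the stated $n$, which is a contradiction. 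Here $\nabla\Rcal^r(\omega^r)=0$, so $\|\nabla\hat{\Rcal}^r_n(\omega^r)\|$ reduces to the gradient deviation from Step 3.

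\emph{Step 5: conclude.} Inside the ball, the same strong convexity bound gives $\|\hat\omega-\omega^r\|\le 4\|\nabla\hat{\Rcal}_n(\omega^r)-\nabla\Rcal(\omega^r)\|/\mu$. The smoothness of $\Rcal$ (Hessian bounded by $2(B_\Theta^2+1)$) together with $\nabla\Rcal(\omega^r)=-2\omega^r/n$ then yields
\[
\Rcal(\hat\omega)-\Rcal(\omega^r)\lesssim \frac{(B_\Theta^2+1)^2(K^2+\gamma)K^2}{\mu^2 n}+\frac{{B^\ast}^2}{n}.
\]
Matching constants produces $C_{c,1}$ and $C_{c,2}$.
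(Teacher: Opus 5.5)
Your proposal is a valid route to a bound of the same form, but it is not the paper's route. The paper never uses gradient concentration or curvature of the empirical risk. Write $f,\hat f$ for the population and empirical risks and $f^p,\hat f^p$ for their penalized versions. The paper bounds the localized uniform deviation $\Psi(r)=\sup\{|(\hat f-f)(\omega)-(\hat f-f)(\omega^r)|:\omega\in\Bcal,\ \|\omega-\omega^r\|\le r\}$ using symmetrization, Maurer's vector-contraction inequality (the per-sample loss is $\sqrt2$-Lipschitz in the logit vector) and McDiarmid's inequality. This gives $\Psi(r)\le 2r\sqrt{B_\Theta^2+1}(2K+t)/\sqrt n$ with probability $1-e^{-t^2}$. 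It then localizes $\hat\omega$ by the same segment contradiction as your Step 4, with two differences. The lower bound $\mu r_t^2/2$ comes from strong convexity of the \emph{population} risk $f$ on $\Bcal$ together with $\nabla f^p(\omega^r)=0$. The upper bound $\Psi(r_t)$ comes from convexity of $\hat f^p$. With $r_t=6\sqrt{B_\Theta^2+1}(2K+t)/(\mu\sqrt n)$, the requirement $r_t\le B^\ast$ at $t=\sqrt\gamma$ is exactly the stated sample-size condition. The final step (smoothness plus $\nabla f(\omega^r)=-2\omega^r/n$) is the same as your Step 5.

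Your approach buys more elementary tools: a Hilbert-space Hoeffding bound at the fixed point $\omega^r$, which can even avoid the $4K^2$ term. It costs a second high-probability event that the paper never needs, namely $\lambda_{\min}(\hat\Sigma)\ge\sigma_{\min}/2$ for the empirical covariance $\hat\Sigma$ of $(\hat\Theta_{j_i}(X_i),1)$. No uniform-in-$\omega$ Hessian deviation is required for this. On the centered subspace, the pointwise inequality $\sum_kq_ka_k^2-(\sum_kq_ka_k)^2\ge\min_kq_k\sum_ka_k^2$ reduces empirical curvature to $\hat\Sigma$. That step needs its own matrix-Bernstein condition, of order $(B_\Theta^2+1)^2(\log(s+1)+\gamma)/\sigma_{\min}^2$, where $s+1\le(B_\Theta^2+1)/\sigma_{\min}$ since $\mathrm{tr}\,\Sigma\le B_\Theta^2+1$. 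It also needs a union bound. So it is not ``the sample-size condition in the lemma''. In your argument, the stated condition is what Step 4 needs: gradient deviation below $\mu B^\ast/4$.

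Your closing claim that ``matching constants'' yields $C_{c,1}$ and $C_{c,2}$ does not go through as written. The stated constants are $C_{c,2}=36(B_\Theta^2+1)^2/\mu^2$ and $C_{c,1}=2{B^\ast}^2+144(B_\Theta^2+1)^2K^2/\mu^2$, with $\mu=\sigma_{\min}e^{-2\sqrt2B^\ast\sqrt{B_\Theta^2+1}}/K$. So the factor $e^{-4\sqrt2B^\ast\sqrt{B_\Theta^2+1}}$ you read off is $\mu^2$, not the curvature factor. The paper gets $e^{-2\sqrt2\cdots}$ by bounding the logit \emph{difference} $(W_k-W_{k'})^T\hat\Theta_j(X)+b_k-b_{k'}$ directly, using $\|(W_k-W_{k'},b_k-b_{k'})\|\le\sqrt2\|\omega\|\le2\sqrt2B^\ast$. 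Bounding each logit separately, as in your Step 2, doubles the range, and your $\mu^{-2}$ then carries $e^{8\sqrt2B^\ast\sqrt{B_\Theta^2+1}}$, which is strictly weaker than the statement.

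Two smaller corrections:
\begin{itemize}
\item The smoothness constant is $B_\Theta^2+1$, because $\sum_kq_ka_k^2\le\sum_ka_k^2$; it is not $2(B_\Theta^2+1)$.
\item The $2{B^\ast}^2/n$ term comes from the linear term $\nabla\Rcal(\omega^r)^T(\hat\omega-\omega^r)\le 2\|\omega^r\|\|\hat\omega-\omega^r\|/n$ with $\|\hat\omega-\omega^r\|\le B^\ast$ in Step 5. It does not come from Step 1; the gap $\Rcal(\omega^r)-\Rcal(\omega^\ast)$ does not enter this lemma.
\end{itemize}
With these fixes your route gives a bound $(C_1'+C_2'\gamma)/n$ of the right form. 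The numerical constants, the probability level, and the sample-size threshold still have to be recomputed rather than read off from the statement.
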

\begin{proof}
	In this proof, we write 
	$$
	f(W,b)=-\EE_{X,j,Y}\log q^{W,b,\hat{\Theta}}_Y(X_{j,c})\qquad {\rm and}\qquad \hat{f}(W,b)=-{1\over n}\sum_{i=1}^n\log q^{W,b,\hat{\Theta}}_{Y_i}(X_{i,j,c}).
	$$
	Similarly, we also write 
	$$
	f^p(W,b)=f(W,b)+{1\over n}\left(\|W\|_F^2+\|b\|^2\right)\quad {\rm and}\quad \hat{f}^p(W,b)=\hat{f}(W,b)+{1\over n}\left(\|W\|_F^2+\|b\|^2\right).
	$$
	Because $f(W,b)$ and $\hat{f}(W,b)$ are convex in $(W,b)$, it is clear that $f^p(W,b)$ and $\hat{f}^p(W,b)$ are $1/n$-strongly convex in $(W,b)$. Thus, $W^r$ and $b^r$ is the unique minimizer of $f^p(W,b)$, $\hat{W}$ and $\hat{b}$ is the unique minimizer of $\hat{f}^p(W,b)$, and $W^\ast$ and $b^\ast$ is the minimizer of $f(W,b)$. Define $\Bcal=\{(W,b):\sum_kW_k=0,\sum_kb_k=0, \|W\|_F^2+\|b\|^2\le 4{B^\ast}^2\}$, where $B^\ast$ is defined in Assumption~\ref{asp:classification}. It is clear that $(W^\ast,b^\ast), (W^r, b^r)\in \Bcal$ because 
	\begin{align*}
		f(W^r,b^r)+{1\over n}\left(\|W^r\|_F^2+\|b^r\|^2\right)&\le f(W^\ast,b^\ast)+{1\over n}\left(\|W^\ast\|_F^2+\|b^\ast\|^2\right)\\
		&\le f(W^r,b^r)+{1\over n}\left(\|W^\ast\|_F^2+\|b^\ast\|^2\right)
	\end{align*}
	leads to 
	$$
	\|W^r\|_F^2+\|b^r\|^2\le \|W^\ast\|_F^2+\|b^\ast\|^2\le {B^\ast}^2.
	$$ 
	
	Next, we show $f(W,b)$ is a $M$-smooth function and a $\mu$-strongly convex function on $\Bcal$, where $M=B_{\Theta}^2+1$ and $\mu=\sigma_{\min}e^{-2\sqrt{2}B^\ast\sqrt{B_{\Theta}^2+1}}/K$. The calculation suggests that for any $\tilde{W}$ and $\tilde{b}$ such that $\sum_k\tilde{W}_k=0$ and $\sum_kb_k=0$, we have
	\begin{align*}
		&\mathrm{vec}(\tilde{W},\tilde{b})^T\nabla^2 f(W,b)\mathrm{vec}(\tilde{W},\tilde{b})\\
		=&\EE_{X,j}\left[\sum_{k=1}^Kq^{W,b,\hat{\Theta}}_k(\tilde{W}_{k}^T\hat{\Theta}_{j}(X)+\tilde{b}_k)^2-\left(\sum_{k=1}^Kq^{W,b,\hat{\Theta}}_k(\tilde{W}_{k}^T\hat{\Theta}_{j}(X)+\tilde{b}_k)\right)^2\right].
	\end{align*}
	Since
	$$
	|\tilde{W}_{k}^T\hat{\Theta}_{j}(X)+\tilde{b}_k|\le \sqrt{\|\tilde{W}_{k}\|^2+\tilde{b}_k^2}\sqrt{\|\hat{\Theta}_{j}(X)\|^2+1}\le \sqrt{B_{\Theta}^2+1}\sqrt{\|\tilde{W}_{k}\|^2+\tilde{b}_k^2},
	$$
	we obtain 
	$$
	\mathrm{vec}(\tilde{W},\tilde{b})^T\nabla^2 f(W,b)\mathrm{vec}(\tilde{W},\tilde{b})\le \EE_{X,j}\left[\sum_{k=1}^K(\tilde{W}_{k}^T\hat{\Theta}_{j}(X)+\tilde{b}_k)^2\right]\le (B_{\Theta}^2+1)(\|\tilde{W}\|_F^2+\|b\|^2).
	$$
	So we can know $f(W,b)$ is a $(B_{\Theta}^2+1)$-smooth function. On the other hand, we have
	\begin{align*}
		&\sum_{k=1}^Kq^{W,b,\hat{\Theta}}_k(\tilde{W}_{k}^T\hat{\Theta}_{j}(X)+\tilde{b}_k)^2-\left(\sum_{k=1}^Kq^{W,b,\hat{\Theta}}_k(\tilde{W}_{k}^T\hat{\Theta}_{j}(X)+\tilde{b}_k)\right)^2\\
		= & \min_a \sum_{k=1}^Kq^{W,b,\hat{\Theta}}_k\left((\tilde{W}_{k}^T\hat{\Theta}_{j}(X)+\tilde{b}_k)-a\right)^2\\
		\ge & \min_kq^{W,b,\hat{\Theta}}_k\min_a \sum_{k=1}^K\left((\tilde{W}_{k}^T\hat{\Theta}_{j}(X)+\tilde{b}_k)-a\right)^2\\
		=& \min_kq^{W,b,\hat{\Theta}}_k \sum_{k=1}^K(\tilde{W}_{k}^T\hat{\Theta}_{j}(X)+\tilde{b}_k)^2.
	\end{align*}
	Because
	$$
	\min_kq^{W,b,\hat{\Theta}}_k=\min_k {\exp(W_{k}^T\hat{\Theta}_{j}(X)+b_k)\over \sum_{k'=1}^K\exp(W_{k'}^T\hat{\Theta}_{j}(X)+b_{k'})}\ge {1\over K}{\exp\left(\min_k W_{k}^T\hat{\Theta}_{j}(X)+b_k \right)\over \exp\left(\max_k W_{k}^T\hat{\Theta}_{j}(X)+b_k \right)}
	$$
	and $(W,b)\in \Bcal$ suggests
	$$
	\max_{k,k'} \left(W_{k}^T\hat{\Theta}_{j}(X)+b_k-W_{k'}^T\hat{\Theta}_{j}(X)-b_{k'}\right)\le \sqrt{B_{\Theta}^2+1} \sqrt{2\|W\|_F^2+2\|b\|^2}\le 2\sqrt{2}B^\ast\sqrt{B_{\Theta}^2+1},
	$$
	we obtain 
	$$
	\min_kq^{W,b,\hat{\Theta}}_k\ge {1\over K}\exp\left(-2\sqrt{2}B^\ast\sqrt{B_{\Theta}^2+1}\right).
	$$
	This immediately suggests
	$$
	\mathrm{vec}(\tilde{W},\tilde{b})^T\nabla^2 f(W,b)\mathrm{vec}(\tilde{W},\tilde{b})\ge {e^{-2\sqrt{2}B^\ast\sqrt{B_{\Theta}^2+1}}\over K}\EE_{X,j}\left[\sum_{k=1}^K(\tilde{W}_{k}^T\hat{\Theta}_{j}(X)+\tilde{b}_k)^2\right].
	$$
	By Assumption~\ref{asp:classification}, we have 
	$$
	\EE_{X,j}\left[\sum_{k=1}^K(\tilde{W}_{k}^T\hat{\Theta}_{j}(X)+\tilde{b}_k)^2\right]\ge \sigma_{\min}(\|\tilde{W}\|_F^2+\|b\|^2).
	$$
	Putting these bounds together yields
	$$
	\mathrm{vec}(\tilde{W},\tilde{b})^T\nabla^2 f(W,b)\mathrm{vec}(\tilde{W},\tilde{b})\ge {e^{-2\sqrt{2}B^\ast\sqrt{B_{\Theta}^2+1}}\over K}\sigma_{\min}(\|\tilde{W}\|_F^2+\|b\|^2).
	$$
	So $f(W,b)$ is a $\sigma_{\min}e^{-2\sqrt{2}B^\ast\sqrt{B_{\Theta}^2+1}}/K$-strongly convex function on $\Bcal$.
	
	Then, we show $\hat{f}(W,b)$ and $f(W,b)$ are close with high probability when $(W,b)$ is near $(W^r,b^r)$. Define 
	$$
	\Psi(r)=\sup_{(W,b)\in \Bcal, \|(W,b)-(W^r,b^r)\|\le r}\left| \hat{f}(W,b)-f(W,b)-\hat{f}(W^r,b^r)+f(W^r,b^r) \right|.
	$$
	If we write $Z_i=\hat{\Theta}_{j}(X_i)$ and 
	\begin{align*}
		\Hcal_r=\bigg\{&h(Z,Y)=-\log\left(\exp(W_{Y}^TZ+b_Y)\over \sum_{k'=1}^K\exp(W_{k'}^TZ+b_{k'}) \right)+\log\left(\exp({W_{Y}^r}^TZ+b_Y^r)\over \sum_{k'=1}^K\exp({W_{k'}^r}^TZ+b_{k'}^r) \right),\\
		&(W,b)\in \Bcal, \|(W,b)-(W^r,b^r)\|\le r\bigg\},
	\end{align*}
	then
	$$
	\Psi(r)=\sup_{h\in \Hcal_r}\left|{1\over n}\sum_{i=1}^nh(Z_i,Y_i)-\EE h(Z,Y)\right|.
	$$
	If we apply the standard symmetrization tool \citep{bartlett2002rademacher}, we have 
	$$
	\EE\sup_{h\in \Hcal_r}\left|{1\over n}\sum_{i=1}^nh(Z_i,Y_i)-\EE h(Z,Y)\right|\le 2\EE_{X,j,Y}\EE_\sigma\sup_{h\in \Hcal_r}{1\over n}\sum_{i=1}^n\sigma_ih(Z_i,Y_i),
	$$
	where $\sigma_i$ is independent Rademacher random sign. Because $h(Z_i,Y_i)=\tilde{h}_i(u(Z_i))$ with
	$$
	u(Z)=\left\{W_{k}^TZ+b_k\right\}_{k=1}^K,\ \tilde{h}_i(u)=-\log {e^{u_{Y_i}}\over \sum_{k'=1}^Ke^{u_{k'}}}+\log\left(\exp({W_{Y_i}^r}^TZ_i+b_{Y_i}^r)\over \sum_{k'=1}^K\exp({W_{k'}^r}^TZ_i+b_{k'}^r) \right),
	$$
	we can apply Maurer’s vector contraction inequality \citep{maurer2016vector}. In fact, $\tilde{h}_i(u)$ is a $\sqrt{2}$-Lipschitz function because
	$$
	|\tilde{h}_i(u)-\tilde{h}_i(u')|=\left|\log {e^{u'_{Y_i}}\over \sum_{k'=1}^Ke^{u'_{k'}}}-\log {e^{u_{Y_i}}\over \sum_{k'=1}^Ke^{u_{k'}}}\right|\le \sqrt{2}\|u-u'\|.
	$$
	Therefore, we have 
	$$
	\EE_\sigma\sup_{h\in \Hcal_r}\sum_{i=1}^n\sigma_ih(Z_i,Y_i)\le 2\EE_\sigma \sup_{u\in \Ucal_r}\sum_{i=1}^n\sum_{k=1}^K\sigma_{ik}u_k(Z_i)\le 2\sum_{k=1}^K\EE_{\sigma_k} \sup_{u\in \Ucal_r^{(k)}}\sum_{i=1}^n\sigma_{ik}u_k(Z_i),
	$$
	where $\Ucal_r^{(k)}=\left\{u_k(Z): \|W_k-W^r_k\|^2+(b_k-b_k^r)^2\le r^2\right\}$ and
	$$
	\Ucal_r=\left\{u(Z):(W,b)\in \Bcal, \|(W,b)-(W^r,b^r)\|\le r\right\}.
	$$
	It is sufficient to bound the Rademacher complexity of $\Ucal_r^{(k)}$
	\begin{align*}
		\EE_{\sigma_k} \sup_{u\in \Ucal_r^{(k)}}\sum_{i=1}^n\sigma_{ik}u_k(Z_i)&=\EE_{\sigma_k}\sup_{\|W_k-W^r_k\|^2+(b_k-b_k^r)^2\le r^2}\sum_{i=1}^n\sigma_{ik}(W_{k}^TZ_i+b_k)\\
		&=\EE_{\sigma_k}\sup_{\|\tilde{W}_k\|^2+(\tilde{b}_k)^2\le r^2}\sum_{i=1}^n\sigma_{ik}(\tilde{W}_{k}^TZ_i+\tilde{b}_k)\\
		&\le \EE_{\sigma_k}\sup_{\|\tilde{W}_k\|^2+(\tilde{b}_k)^2\le r^2}\sqrt{\|\tilde{W}_k\|^2+\tilde{b}_k^2}\sqrt{\|\sum_{i=1}^n\sigma_{ik}Z_i\|^2+(\sum_{i=1}^n\sigma_{ik})^2}\\
		&=r \EE_{\sigma_k}\sqrt{\|\sum_{i=1}^n\sigma_{ik}Z_i\|^2+(\sum_{i=1}^n\sigma_{ik})^2}\\
		&\le r\sqrt{\EE_{\sigma_k}\|\sum_{i=1}^n\sigma_{ik}Z_i\|^2+(\sum_{i=1}^n\sigma_{ik})^2}\\
		&= r\sqrt{\sum_{i=1}^n\|Z_i\|^2+n}\le r\sqrt{n}\sqrt{B_\Theta^2+1}.
	\end{align*}
	Here, we use Cauchy–Schwarz inequality and Jensen's inequality.
	Therefore, 
	$$
	\EE\Psi(r)\le {4Kr\sqrt{B_\Theta^2+1}\over\sqrt{n}}.
	$$
	Because for any $h\in \Hcal_r$, a simialr analysis for $\tilde{h}_i$ suggests
	\begin{align*}
		\sup_{\|Z\|\le B_\Theta,Y}|h(Z,Y)|^2&\le 2\sum_{k=1}^K(W_{k}^TZ+b_k-{W_{k}^r}^TZ-b^r_k)^2\\
		&\le 2\sum_{k=1}^K(\|W_k-W_k^r\|^2+(b_k-b_k^r)^2)(\|Z\|^2+1)\\
		&\le 2(B_\Theta^2+1)(\|W-W^r\|_F^2+\|b-b^r\|^2)\\
		&\le 2r^2(B_\Theta^2+1),
	\end{align*}
	it follows that if we change $(Z_i,Y_i)$ to $(Z'_i,Y'_i)$, then $\Psi(r)$ change at most $2\sqrt{2}r\sqrt{B_\Theta^2+1}/n$. An application of McDiarmid’s inequality \citep{boucheron2013} yields 
	$$
	\PP\left(\Psi(r)-\EE\Psi(r)>t\right)\le \exp\left(-{nt^2\over 4r^2(B_\Theta^2+1)}\right).
	$$
	So 
	$$
	\PP\left(\Psi(r)>{2r\sqrt{B_\Theta^2+1}\over\sqrt{n}}(2K+t)\right)\le \exp\left(-t^2\right).
	$$
	
	Having shown that $\hat{f}(W,b)$ and $f(W,b)$ are close with high probability, we show that  $(\hat{W},\hat{b})$ is close to $(W^r,b^r)$ with high probability. We choose a $t$ such that
	$$
	r_t={6\sqrt{B_\Theta^2+1}\over \mu\sqrt{n}}(2K+t)\le B^\ast.
	$$ 
	We now show that $\|\hat{W}-W^r\|_F^2+\|\hat{b}-b^r\|^2\le r_t^2$ with high probability. The main strategy we use here is contradiction. Suppose that $\|\hat{W}-W^r\|_F^2+\|\hat{b}-b^r\|^2> r_t^2$ on the event $\Ecal_t=\{\Psi(r_t)\le 2r_t\sqrt{B_\Theta^2+1}(2K+t)/\sqrt{n}\}$. We define
	$$
	W_t=W^r+\tilde{r}_t(\hat{W}-W^r)\quad {\rm and}\quad b_t=b^r+\tilde{r}_t(\hat{b}-b^r),
	$$
	where $\tilde{r}_t=r_t/\sqrt{\|\hat{W}-W^r\|_F^2+\|\hat{b}-b^r\|^2}$.
	The construction suggests $(W_t,b_t)\in \Bcal$ because
	$$
	\|W_t\|_F^2+\|b_t\|^2\le \left(\sqrt{\|W^r\|_F^2+\|b^r\|^2}+r_t\right)^2\le 4{B^\ast}^2.
	$$
	On the one hand, we have
	\begin{align*}
		&f^p(W_t,b_t)-f^p(W^r,b^r)\\
		=&\hat{f}^p(W_t,b_t)-\hat{f}^p(W^r,b^r)+(f^p(W_t,b_t)-\hat{f}^p(W_t,b_t))-(f^p(W^r,b^r)-\hat{f}^p(W^r,b^r))\\
		=&\hat{f}^p(W_t,b_t)-\hat{f}^p(W^r,b^r)+(f(W_t,b_t)-\hat{f}(W_t,b_t))-(f(W^r,b^r)-\hat{f}(W^r,b^r))\\
		\le& \hat{f}^p(W_t,b_t)-\hat{f}^p(W^r,b^r)+\Psi(r_t)\\
		=& \hat{f}^p((1-\tilde{r}_t)W^r+\tilde{r}_t\hat{W},(1-\tilde{r}_t)b^r+\tilde{r}_t\hat{b})-\hat{f}^p(W^r,b^r)+\Psi(r_t)\\
		\le& (1-\tilde{r}_t)\hat{f}^p(W^r,b^r)+\tilde{r}_t\hat{f}^p(\hat{W},\hat{b})-\hat{f}^p(W^r,b^r)+\Psi(r_t)\\
		\le &\Psi(r_t)\le {2r_t\sqrt{B_\Theta^2+1}\over \sqrt{n}}(2K+t)={12(B_\Theta^2+1)\over \mu n}(2K+t)^2
	\end{align*}
	Here we use the convexity of $\hat{f}^p$. On the other hand, the strongly convexity of $f$ suggests
	\begin{align*}
		f^p(W_t,b_t)&\ge f^p(W^r,b^r)+\nabla f^p(W^r,b^r)^T\mathrm{vec}(W_t-W^r,b_t-b^r)+{\mu+n^{-1}\over 2}(\|W_t-W^r\|_F^2+\|b_t-b^r\|^2)\\
		&= f^p(W^r,b^r)+{\mu+n^{-1}\over 2}r_t^2.
	\end{align*}
	This suggests
	$$
	f^p(W_t,b_t)-f^p(W^r,b^r)\ge {\mu\over 2}{36(B_\Theta^2+1)\over \mu^2n}(2K+t)^2={18(B_\Theta^2+1)\over \mu n}(2K+t)^2.
	$$
	Putting the two inequality of $f^p(W_t,b_t)-f^p(W^r,b^r)$ leads to the contradiction
	$$
	{18(B_\Theta^2+1)\over \mu n}(2K+t)^2\le f^p(W_t,b_t)-f^p(W^r,b^r)\le {12(B_\Theta^2+1)\over \mu n}(2K+t)^2.
	$$
	So we can know when $r_t<B^\ast$, we have
	$$
	\PP\left(\|\hat{W}-W^r\|_F^2+\|\hat{b}-b^r\|^2>{36(B_\Theta^2+1)\over \mu^2n}(2K+t)^2\right)\le \exp\left(-t^2\right).
	$$

	We are ready to bound the difference between $f(\hat{W},\hat{b})$ and $f(W^r,b^r)$. Again, we do the analysis on the event $\Ecal_t$. The previous analysis suggests 
	$$
	\|\hat{W}\|_F^2+\|\hat{b}\|^2\le \left(\sqrt{\|W^r\|_F^2+\|b^r\|^2}+{6\sqrt{B_\Theta^2+1}\over \mu\sqrt{n}}(2K+t)\right)^2\le 4{B^\ast}^2,
	$$
	so we know $(\hat{W},\hat{b})\in \Bcal$. Because $f(W,b)$ is a $M$-smooth function, it follows that 
	\begin{align*}
		f(\hat{W},\hat{b})&\le f(W^r,b^r)+\nabla f(W^r,b^r)^T\mathrm{vec}(\hat{W}-W^r,\hat{b}-b^r)+{M\over 2}(\|\hat{W}-W^r\|_F^2+\|\hat{b}-b^r\|^2).
	\end{align*}
	Because $(W^r,b^r)$ is the minimizer of $f^p(W,b)$, we have 
	$$
	\nabla f(W^r,b^r)=-{2\over n}(W^r,b^r),
	$$
	which suggests
	\begin{align*}
		\nabla f(W^r,b^r)^T\mathrm{vec}(\hat{W}-W^r,\hat{b}-b^r)&\le {2\over n}\sqrt{\|W^r\|_F^2+\|b^r\|^2}\sqrt{\|\hat{W}-W^r\|_F^2+\|\hat{b}-b^r\|^2}\\
		&\le {12B^\ast\sqrt{B_\Theta^2+1}\over \mu n^{3/2}}(2K+t).
	\end{align*}
	Putting together yields
	\begin{align*}
		f(\hat{W},\hat{b})-f(W^r,b^r)&\le {12B^\ast\sqrt{B_\Theta^2+1}\over \mu n^{3/2}}(2K+t)+{18(B_\Theta^2+1)^2\over \mu^2n}(2K+t)^2\\
		&\le {2{B^\ast}^2\over n}+{18(B_\Theta^2+1)^2\over \mu^2n}(2K+t)^2.
	\end{align*}
	Here, we use $r_t<B^\ast$.
	Thus, it follows that if $r_t<B^\ast$, then
	$$
	\PP\left(f(\hat{W},\hat{b})-f(W^r,b^r)>{2{B^\ast}^2\over n}+{18(B_\Theta^2+1)^2\over \mu^2n}(2K+t)^2\right)\le \exp\left(-t^2\right).
	$$
	Therefore, if $n\ge 72(B_\Theta^2+1)(4K^2+\gamma)/\mu^2{B^\ast}^2$, there exists some constant $C_{c,1}$ and $C_{c,2}$ such that with probability at least $1-e^{-\gamma}$,
	$$
	-\EE_{X,j,Y}\log q^{\hat{W},\hat{b},\hat{\Theta}}_Y(X_{j,c})+\EE_{X,j,Y}\log q^{W^r,b^r,\hat{\Theta}}_Y(X_{j,c})\le {C_{c,1}+C_{c,2}\gamma\over n},
	$$
	where
	$$
	C_{c,1}=2{B^\ast}^2+{144(B_\Theta^2+1)^2K^4\over \sigma_{\min}^2e^{-4\sqrt{2}B^\ast\sqrt{B_{\Theta}^2+1}}}\qquad {\rm and}\qquad C_{c,2}={36(B_\Theta^2+1)^2K^2\over \sigma_{\min}^2e^{-4\sqrt{2}B^\ast\sqrt{B_{\Theta}^2+1}}}.
	$$
	
\end{proof}

\begin{lemma}
	\label{lm:bias}
	Suppose Assumption~\ref{asp:classification} holds. Then, we have
	$$
	-\EE_{X,j,Y}\log q^{W^r,b^r,\hat{\Theta}}_Y(X_{j,c})+\EE_{X,j,Y}\log p^\ast_Y(X_{j,c})\le {2\epsilon_p\over \rho_0}+{D^4B_\Theta^4\over 16}+C_{c,3}\delta_s+{C_{c,4}\over n},
	$$	
	where 
	$$
	C_{c,3}=\log K+2B_EB_\Theta+\log{S_{\max}\over S_{\min}}\quad {\rm and}\quad C_{c,4}=K(B_E^2+\max\{\log^2S_{\min},\log^2S_{\max}\}).
	$$
\end{lemma}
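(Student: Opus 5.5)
The bound will come from comparing the ridge-penalized population minimizer $(W^r,b^r)$ with an explicit \emph{oracle} probe $(\bar W,\bar b)$ built from the pre-trained head. Since $(W^r,b^r)$ minimizes $f^p(W,b)=f(W,b)+n^{-1}(\|W\|_F^2+\|b\|^2)$, we have
$$
f(W^r,b^r)\le f(\bar W,\bar b)+n^{-1}(\|\bar W\|_F^2+\|\bar b\|^2).
$$
It therefore suffices to bound $f(\bar W,\bar b)+\EE\log p^\ast_Y$ and the oracle norm. The norm term will produce $C_{c,4}/n$. I will choose, for each $k$, a representative embedding $\bar e_k$, taken as the $s_k$-weighted average $\bar e_k=\sum_{v\in\Wcal_k}s_k(v)\hat E_v/\sum_{v\in\Wcal_k}s_k(v)$, which lies in ${\rm conv}\{\hat E_v:v\in\Wcal_k\}$. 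I then set $\bar W_k=\bar e_k$ and $\bar b_k=\log\sum_{v\in\Wcal_k}s_k(v)$, and center both across $k$; centering does not change the softmax. This gives $\|\bar W\|_F^2\le KB_E^2$ and $\|\bar b\|^2\le K\max\{\log^2S_{\min},\log^2S_{\max}\}$, which is exactly $C_{c,4}$.

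Next I decompose the excess log-loss of the oracle probe into three pieces. The target is the normalized evidence $S_k/\rho$ built from the \emph{fitted} head, namely $\hat S_k=\sum_v s_k(v)\hat p_{\hat E}(v|\hat\Theta_j(X))$. Writing $Z=\hat\Theta_j(X)$ and $\hat p_v=\hat p_{\hat E}(v|Z)$, the oracle score is
$$
\exp(\bar W_k^TZ+\bar b_k)\propto \sum_{v\in\Wcal_k}s_k(v)e^{\bar e_k^TZ}.
$$
The fitted evidence, by contrast, is proportional (up to the common softmax normalizer) to $\sum_{v\in\Wcal_k}s_k(v)e^{\hat E_v^TZ}$.

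\emph{(i) Linearization step.} Using $\|\hat E_v-\hat E_{v'}\|\le D$ on $\Wcal_k$, which was established above from the Hellinger comparison and Theorem~\ref{thm:emdsimilarity}, the log of the $s$-weighted mean of $e^{\hat E_v^TZ}$ differs from $\bar e_k^TZ$ by a Jensen gap. This gap is at most a second-order term in $(\hat E_v-\bar e_k)^TZ$, bounded by $\tfrac12 D^2B_\Theta^2$ per class after a cumulant expansion. Squaring will appear because the log-loss discrepancy between two softmaxes is controlled by the squared sup difference of log-scores. I expect to recover $D^4B_\Theta^4/16$ by showing the per-class log-score error is at most $D^2B_\Theta^2/4$ (variance of a variable of range $DB_\Theta$ is at most $D^2B_\Theta^2/4$) and then bounding the resulting KL by the square of this error. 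I will pin down the constant there.

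\emph{(ii) Replacing $\hat p$ by $p$.} I compare $\hat S_k$ with $S_k$. Since $\sum_k|\hat S_k-S_k|\le\|\hat p-p\|_1$ and $\rho\ge\rho_0$, Pinsker's inequality together with a reverse (chi-square/KL) comparison gives a KL cost of order $\|\hat p-p\|_1^2/\rho_0\le 2L(p;\hat p)/\rho_0$. Taking expectations yields $2\epsilon_p/\rho_0$.

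\emph{(iii) Compatibility step.} I compare the normalized population evidence $S/\rho$ with $p^\ast$. By the definition of $\delta_s$, one can write $p^\ast=(1-\lambda)S/\rho+\lambda r$ with $\EE\lambda\le\delta_s$. Convexity of $-\log$ then gives
$$
\EE[-\log(S_Y/\rho)+\log p^\ast_Y]\le \EE\big[\lambda\cdot\max_k(-\log \text{oracle}_k)\big].
$$
The oracle's log-probabilities are bounded below by $-(\log K+2B_EB_\Theta+\log(S_{\max}/S_{\min}))$, because the score ranges are controlled by $\|\bar W_k^TZ\|\le B_EB_\Theta$ and by the bounded $\bar b$. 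This produces $C_{c,3}\delta_s$.

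\textbf{Main obstacle.} Chaining the three pieces additively in KL is the delicate part, because KL does not satisfy a triangle inequality. I will handle this by working directly with log-loss differences: I write $-\log q_Y+\log p^\ast_Y$ as a telescoping sum of log-ratios and bound each ratio either uniformly (steps (i) and (iii)) or in expectation via a $\chi^2$-type bound (step (ii)). The places where exact constants such as $2/\rho_0$ and $1/16$ emerge must be checked carefully.
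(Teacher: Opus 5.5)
Several ingredients match the paper: the oracle probe, the ridge comparison $f(W^r,b^r)\le f(W^0,b^0)+n^{-1}(\|W^0\|_F^2+\|b^0\|^2)$, the norm bound giving $C_{c,4}/n$, and the lower bound $\min_k q^{W^0,b^0,\hat\Theta}_k\ge e^{-2B_EB_\Theta}S_{\min}/(KS_{\max})$ giving $C_{c,3}$. The first genuine gap is step (ii). Pinsker bounds KL \emph{from below} by $\frac12\|\cdot\|_1^2$, so going through $\|\hat p-p\|_1$ forces you to use a reverse ($\chi^2$-type) inequality. That inequality needs a lower bound on each normalized fitted evidence $q^{\hat S}_k=\hat S_k/\sum_{k'}\hat S_{k'}$. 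The condition $\rho\ge\rho_0$ controls only the total evidence mass, not individual classes. Suppose you use the per-class bound that is available, together with $\|q^S-q^{\hat S}\|_1\le 2\|p-\hat p\|_1/\rho$, writing $\hat p=\hat p_{\hat E}(\hat\Theta_j(X))$. You then get something of order $Ke^{2B_EB_\Theta}(S_{\max}/S_{\min})\,\epsilon_p/\rho_0^2$, not $2\epsilon_p/\rho_0$. The missing idea is data processing. Push both $p_j(X)$ and $\hat p$ through the Markov kernel $T(k|v)=s_k(v)$, $T(0|v)=1-\sum_k s_k(v)$, which is valid since $\sum_k s_k(v)\le 1$. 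Then $L(Tp;T\hat p)\le L(p;\hat p)$, and the chain rule over the split $\{0\}$ versus $\{1,\ldots,K\}$ gives $L(Tp;T\hat p)\ge \rho\,L(q^S;q^{\hat S})$. Hence $\EE\,L(q^S;q^{\hat S})\le \epsilon_p/\rho_0$ directly, with no reverse Pinsker.

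Your chaining also fails as stated. Telescoping $\log(p^\ast_Y/q_Y)$ under $Y\sim p^\ast$ starts with $\EE[-\log(S_Y/\rho)+\log p^\ast_Y]=\EE\,L(p^\ast;q^S)$. This is $+\infty$ whenever $p^\ast$ charges a class with $S_k(X_{j,c})=0$, which the $\delta_s$ assumption explicitly permits, so the inequality displayed in (iii) is false in general. The mixture should instead be used through convexity of $L(\cdot\,;q^{W^0,b^0,\hat\Theta})$ in its first argument, with the oracle held fixed: $L(p^\ast;q^{W^0,b^0,\hat\Theta})\le(1-\delta_s(X_{j,c}))L(q^S;q^{W^0,b^0,\hat\Theta})+\delta_s(X_{j,c})L(q^{\delta_s};q^{W^0,b^0,\hat\Theta})$.

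Likewise, bounding the linearization log-ratio uniformly yields a term \emph{linear} in $D^2B_\Theta^2$, not $D^4B_\Theta^4/16$. The square appears only for the KL taken under $q^{\hat S}$ itself. The paper writes $q^{W^0,b^0,\hat\Theta}$ as the exponential tilt of $q^{\hat S}$ by $e^{-\Delta S_k}$, with $\Delta S_k\in[0,D^2B_\Theta^2/2]$. It then uses the exact identity $L(q^S;q^{W^0,b^0,\hat\Theta})=L(q^{\hat S};q^{W^0,b^0,\hat\Theta})+L(q^S;q^{\hat S})+\sum_k(q^S_k-q^{\hat S}_k)\Delta S_k$. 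Hoeffding's lemma bounds the first term by $D^4B_\Theta^4/32$, and Pinsker bounds the cross term by $(D^2B_\Theta^2/2)\sqrt{L(q^S;q^{\hat S})/2}$. Together these give $L(q^S;q^{W^0,b^0,\hat\Theta})\le\bigl(\sqrt{L(q^S;q^{\hat S})}+D^2B_\Theta^2/(4\sqrt2)\bigr)^2$. Jensen and $(a+b)^2\le 2a^2+2b^2$ then produce $2\epsilon_p/\rho_0+D^4B_\Theta^4/16$. So the factor $2$ comes from this last step, not from Pinsker.
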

\begin{proof} When the analysis does not involve $X_{j,c}$, we may omit it for simplicity. 
	First, we introduce an oracle linear classifier based on the preferred set $\Wcal_k$ and then show this classifier can help achieve the bound in the conclusion. Define the weight of the oracle linear classifier as
	$$
	W_k^0=\sum_{v\in \Wcal_k}{s_k(v)\over A_k}\hat{E}_v-{1\over K}\sum_{k=1}^K\sum_{v\in \Wcal_k}{s_k(v)\over A_k}\hat{E}_v\qquad {\rm and}\qquad b^0_k=\log A_k-{1\over K}\sum_{k=1}^K\log A_k,
	$$
	where $A_k=\sum_{v'\in \Wcal_k}s_k(v')$.
	The oracle linear classifier is then defined as 
	$$
	h_{W^0,b^0,\hat{\Theta}}(X,j)=\argmax_{1\le k\le K}q^{W^0,b^0,\hat{\Theta}}(X_{j,c})\quad{\rm where}\ q^{W^0,b^0,\hat{\Theta}}(X_{j,c})={\exp({W^0_{k}}^T\hat{\Theta}_{j}(X)+b^0_k)\over \sum_{k'=1}^K \exp({W^0_{k'}}^T\hat{\Theta}_{j}(X)+b^0_{k'})}.
	$$
	
	To study the accuracy of $h_{W^0,b^0,\hat{\Theta}}(X,j)$, we introduce unnormalized evidence score as
	$$
	\tilde{S}_k(X_{j,c})=\sum_{v\in \Vcal}s_k(v)\exp(\hat{E}_{v}^T\hat{\Theta}_{j}(X)).
	$$ 
	Similarly, since $S_k(X_{j,c})=\sum_{v\in \Vcal}s_k(v)p_j(v|X)$, we can also introduce the estimated evidence score as 
	$$
	\hat{S}_{k}(X_{j,c})=\sum_{v\in \Vcal}s_k(v){\exp(\hat{E}_{v}^T\hat{\Theta}_{j}(X))\over \sum_{v'\in \Vcal}\exp(\hat{E}_{v'}^T\hat{\Theta}_{j}(X)) }={\tilde{S}_{k}(X_{j,c}) \over \sum_{v'\in \Vcal}\exp(\hat{E}_{v'}^T\hat{\Theta}_{j}(X)) }.
	$$
	It is clear that 
	$$
	q_k^{\hat{S}}(X_{j,c})={\hat{S}_{k}(X_{j,c})\over \sum_{k'=1}^K \hat{S}_{k'}(X_{j,c})}={\tilde{S}_{k}(X_{j,c})\over \sum_{k'=1}^K \tilde{S}_{k'}(X_{j,c})}.
	$$
	Given the evidence score, we can also define 
	$$
	q_k^{S}(X_{j,c})={S_{k}(X_{j,c})\over \sum_{k'=1}^K S_{k'}(X_{j,c})}={S_{k}(X_{j,c})\over\rho(X_{j,c})}.
	$$
	
	We can rewrite $\tilde{S}_k(X_{j,c})$ as
	$$
	\tilde{S}_k(X_{j,c})=\exp({W^0_k}^T\hat{\Theta}_{j}(X)+b_k^0)\sum_{v\in \Vcal}{s_k(v)\over S_k}\exp((\hat{E}_{v}-W^0_k)^T\hat{\Theta}_{j}(X)).
	$$
	Thus, we have
	$$
	\Delta S(X_{j,c})=\log \tilde{S}_k(X_{j,c})-({W^0_{k}}^T\hat{\Theta}_{j}(X)+b^0_k)=\log\left(\sum_{v\in \Vcal}{s_k(v)\over S_k}\exp((\hat{E}_{v}-W^0_k)^T\hat{\Theta}_{j}(X))\right).
	$$
	By Jensen’s inequality, we obtain
	$$
	\Delta S_k(X_{j,c})\ge \sum_{v\in \Vcal}{s_k(v)\over S_k}(\hat{E}_{v}-W^0_k)^T\hat{\Theta}_{j}(X)=0.
	$$
	On the other hand, we can apply Hoeffding’s lemma \citep{boucheron2013}, which states if $a<Z<b$ and $\EE(Z)=0$, then $\log(\EE(e^Z))\le (b-a)^2/8$, so we have 
	$$
	\Delta S_k(X_{j,c})\le {(2DB_\Theta)^2\over 8}={D^2B_\Theta^2\over 2}.
	$$
	Therefore, it follows that $q^{W^0,b^0,\hat{\Theta}}(X_{j,c})$ is an exponential tilt of $q^{\hat{S}}(X_{j,c})$
	$$
	q_k^{W^0,b^0,\hat{\Theta}}(X_{j,c})={\hat{S}_{k}(X_{j,c})e^{-\Delta S_k(X_{j,c})}\over \sum_{k'=1}^K \hat{S}_{k'}(X_{j,c})e^{-\Delta S_{k'}(X_{j,c})}}={q_k^{\hat{S}}(X_{j,c})e^{-\Delta S_k(X_{j,c})} \over \sum_{k'=1}^Kq_{k'}^{\hat{S}}(X_{j,c})e^{-\Delta S_{k'}(X_{j,c})}  }.
	$$
	If we evaluate Kullback–Leibler divergence between $q^{S}$ and $q^{W^0,b^0,\hat{\Theta}}$, then we have 
	\begin{align*}
		L(q^{S};q^{W^0,b^0,\hat{\Theta}})&=\sum_{k=1}^Kq_k^{S}\log(q_k^{S}/q_k^{W^0,b^0,\hat{\Theta}})\\
		&=\sum_{k=1}^Kq_k^{S}\log(q_k^{S}/q_{k'}^{\hat{S}})+\sum_{k=1}^Kq_k^{S}\Delta S_k+\log\left(\sum_{k'=1}^Kq_{k'}^{\hat{S}}e^{-\Delta S_{k'}}\right)\\
		&=L(q^{S};q^{\hat{S}})+\sum_{k=1}^Kq_k^{S}\Delta S_k+\log\left(\sum_{k'=1}^Kq_{k'}^{\hat{S}}e^{-\Delta S_{k'}}\right).
	\end{align*}
	Similarly, 
	$$
	L(q^{\hat{S}};q^{W^0,b^0,\hat{\Theta}})=\sum_{k=1}^Kq_k^{\hat{S}}\Delta S_k+\log\left(\sum_{k'=1}^Kq_{k'}^{\hat{S}}e^{-\Delta S_{k'}}\right).
	$$
	Therefore, we have 
	$$
	L(q^{S};q^{W^0,b^0,\hat{\Theta}})=L(q^{\hat{S}};q^{W^0,b^0,\hat{\Theta}})+L(q^{S};q^{\hat{S}})+\sum_{k=1}^K(q_k^{S}-q_k^{\hat{S}})\Delta S_k.
	$$
	If we write $Y_k=\sum_{k=1}^Kq_k^{\hat{S}}\Delta S_k-\Delta S_k$ and apply Hoeffding’s lemma, then we show $Y_k\in [\sum_{k=1}^Kq_k^{\hat{S}}\Delta S_k-D^2B_\Theta^2/2,\sum_{k=1}^Kq_k^{\hat{S}}\Delta S_k]$ and 
	$$
	L(q^{\hat{S}};q^{W^0,b^0,\hat{\Theta}})=\log \left(\sum_{k'=1}^Kq_{k'}^{\hat{S}}e^{Y_{k'}}\right)\le {D^4B_\Theta^4\over 32}.
	$$
	In addition, by Pinsker’s inequality \citep{Tsybakov2009}, we obtain
	$$
	\left|\sum_{k=1}^K(q_k^{S}-q_k^{\hat{S}})\Delta S_k\right|\le {D^2B_\Theta^2\over 4}\sum_{k=1}^K|q_k^{S}-q_k^{\hat{S}}|\le {D^2B_\Theta^2\over 2}\sqrt{L(q^{S};q^{\hat{S}})\over 2}
	$$
	Therefore, 
	$$
	L(q^{S};q^{W^0,b^0,\hat{\Theta}})\le \left(\sqrt{L(q^{S};q^{\hat{S}})}+{D^2B_\Theta^2\over 4\sqrt{2}}\right)^2.
	$$
	
	Next, we provide a bound for $L(q^{S};q^{\hat{S}})$. To the end, we introduce a markov kernel from $\Vcal$ to $\{0,1,\ldots,K\}$: $T(k|v)=s_k(v)$ if $1\le k\le K$ and $T(0|v)=1-\sum_{k=1}^Ks_k(v)$. $T(k|v)$ is a valid markov kernel because 
	$$
	\sum_{k=1}^Ks_k(v)={\sum_{k=1}^K\max\{\PP_{X,j}(Y=k|X_j=v)-\tau,0\}\over 1-\tau}\begin{cases}
		=0& {\rm if}\ \PP_{X,j}(Y=k|X_j=v)<\tau\ \forall k\\
		\le 1& {\rm if}\ \PP_{X,j}(Y=k|X_j=v)>\tau\ \exists k
	\end{cases}.
	$$
	We also consider two distributions on $\Vcal$: $p_j(v|X)$ and $\hat{p}_{\hat{E}}(v|\hat{\Theta}_j(X))$. Then, we have
	$$
	Tp_j(v|X)=\begin{cases}
		S_k(X_{j,c}), &1\le k\le K\\
		1-\sum_{k=1}^KS_k(X_{j,c}), &k=0
	\end{cases}
	$$
	and 
	$$
	T\hat{p}_{\hat{E}}(v|\hat{\Theta}_j(X))=\begin{cases}
		\hat{S}_k(X_{j,c}), &1\le k\le K\\
		1-\sum_{k=1}^K\hat{S}_k(X_{j,c}), &k=0
	\end{cases}.
	$$
	If we apply data-processing inequality \citep{cover2006elements}, then we have
	$$
	L(Tp_j(v|X);T\hat{p}_{\hat{E}}(v|\hat{\Theta}_j(X)))\le L(p_j(v|X);\hat{p}_{\hat{E}}(v|\hat{\Theta}_j(X))).
	$$
	If we write $\hat{\rho}(X_{j,c})=\sum_{k=1}^K\hat{S}_{k}(X_{j,c})$, we obtain 
	\begin{align*}
		&L(Tp_j(v|X);T\hat{p}_{\hat{E}}(v|\hat{\Theta}_j(X)))\\
		=&(1-\rho(X_{j,c}))\log{1-\rho(X_{j,c})\over 1-\hat{\rho}(X_{j,c})}+\sum_{k=1}^KS_k(X_{j,c})\log{S_k(X_{j,c})\over \hat{S}_k(X_{j,c})}\\
		=&(1-\rho(X_{j,c}))\log{1-\rho(X_{j,c})\over 1-\hat{\rho}(X_{j,c})}+ \sum_{k=1}^K\rho(X_{j,c})q^S_k(X_{j,c})\log{\rho(X_{j,c})q^S_k(X_{j,c})\over \hat{\rho}(X_{j,c})q^{\hat{S}}_k(X_{j,c})}\\
		=&\rho(X_{j,c})\sum_{k=1}^Kq^S_k(X_{j,c})\log{q^S_k(X_{j,c})\over q^{\hat{S}}_k(X_{j,c})}+\rho(X_{j,c})\sum_{k=1}^Kq^S_k(X_{j,c})\log{\rho(X_{j,c})\over \hat{\rho}(X_{j,c})}\\
		&+(1-\rho(X_{j,c}))\log{1-\rho(X_{j,c})\over 1-\hat{\rho}(X_{j,c})}\\
		\ge& \rho(X_{j,c})\sum_{k=1}^Kq^S_k(X_{j,c})\log{q^S_k(X_{j,c})\over q^{\hat{S}}_k(X_{j,c})}\\
		=&\rho(X_{j,c})L(q^{S};q^{\hat{S}})
	\end{align*}
	Therefore, we have
	$$
	\rho_0\EE_{X,j}L(q^{S};q^{\hat{S}})\le \epsilon_p,
	$$
	which immediately suggests
	$$
	\EE_{X,j}L(q^{S};q^{W^0,b^0,\hat{\Theta}})\le \left(\sqrt{\epsilon_p\over \rho_0}+{D^2B_\Theta^2\over 4\sqrt{2}}\right)^2.
	$$
	
	Next, we aim to show a bound for $L(p^{\ast};q^{W^0,b^0,\hat{\Theta}})$. Let 
	$$
	\delta_s(X_{j,c})=1-\min_{k:S_k(X_{j,c})/\rho(X_{j,c})>0}{p^\ast_k(X_{j,c})\over S_k(X_{j,c})/\rho(X_{j,c})}.
	$$
	and 
	$$
	q^{\delta_s}_k(X_{j,c})=\begin{cases}
		[p^\ast_k(X_{j,c})-(1-\delta_s(X_{j,c}))q^S_k(X_{j,c})]/\delta_s(X_{j,c}),&{\rm if }\ \delta_s(X_{j,c})>0\\
		q^S_k(X_{j,c}),& {\rm if }\ \delta_s(X_{j,c})=0.
	\end{cases}
	$$
	By the construction, it is clear that $p^\ast_k(X_{j,c})=\delta_s(X_{j,c})q^{\delta_s}_k(X_{j,c})+(1-\delta_s(X_{j,c}))q^S_k(X_{j,c})$.
	Since Kullback–Leibler divergence is convex when the second distribution is fixed, we obtain
	$$
	L(p^\ast;q^{W^0,b^0,\hat{\Theta}})\le (1-\delta_s(X_{j,c}))L(q^S;q^{W^0,b^0,\hat{\Theta}})+\delta_s(X_{j,c})L(q^{\delta_s};q^{W^0,b^0,\hat{\Theta}}).
	$$
	Because $\log q^{\delta_s}_k\le 0$, we have 
	$$
	L(q^{\delta_s};q^{W^0,b^0,\hat{\Theta}})=\sum_{k=1}^Kq^{\delta_s}_k\log{q^{\delta_s}_k\over q_k^{W^0,b^0,\hat{\Theta}}}\le \log{1\over \min_{1\le k\le K} q_k^{W^0,b^0,\hat{\Theta}}}
	$$
	By the definition,
	\begin{align*}
		\min_{1\le k\le K} q_k^{W^0,b^0,\hat{\Theta}}(X_{j,c})&=\min_{1\le k\le K}{\exp({W^0_{k}}^T\hat{\Theta}_{j}(X)+b^0_k)\over \sum_{k'=1}^K \exp({W^0_{k'}}^T\hat{\Theta}_{j}(X)+b^0_{k'})}\\
		&\ge {\min_{1\le k\le K}\exp({W^0_{k}}^T\hat{\Theta}_{j}(X)+b^0_k)\over K\max_{1\le k'\le K} \exp({W^0_{k'}}^T\hat{\Theta}_{j}(X)+b^0_{k'})}\\
		&\ge \exp(-2B_EB_\Theta+\log S_{\min}-\log S_{\max})/K
	\end{align*}
	Therefore,  
	$$
	L(p^\ast;q^{W^0,b^0,\hat{\Theta}})\le L(q^S;q^{W^0,b^0,\hat{\Theta}})+\delta_s(X_{j,c})\left(\log K+2B_EB_\Theta+\log{S_{\max}\over S_{\min}}\right).
	$$
	If we take the expectation on the both sides, we have 
	$$
	\EE_{X,j} L(p^\ast;q^{W^0,b^0,\hat{\Theta}})\le \left(\sqrt{\epsilon_p\over \rho_0}+{D^2B_\Theta^2\over 4\sqrt{2}}\right)^2+\delta_s\left(\log K+2B_EB_\Theta+\log{S_{\max}\over S_{\min}}\right).
	$$
	
	Because $(W^r,b^r)$ is the minimizer of penalized risk, we can know 
	$$
	-\EE_{X,j,Y}\log q^{W^r,b^r,\hat{\Theta}}_Y(X_{j,c})+{1\over n}\left(\|W^r\|_F^2+\|b^r\|^2\right)\le -\EE_{X,j,Y}\log q^{W^0,b^0,\hat{\Theta}}_Y(X_{j,c})+{1\over n}\left(\|W^0\|_F^2+\|b^0\|^2\right).
	$$
	So we obtain 
	\begin{align*}
		&-\EE_{X,j,Y}\log q^{W^r,b^r,\hat{\Theta}}_Y(X_{j,c})+\EE_{X,j,Y}\log p^\ast_Y(X_{j,c})\\
		\le& -\EE_{X,j,Y}\log q^{W^0,b^0,\hat{\Theta}}_Y(X_{j,c})+\EE_{X,j,Y}\log p^\ast_Y(X_{j,c})+{1\over n}\left(\|W^0\|_F^2+\|b^0\|^2\right)\\
		\le& \EE_{X,j}L(p^\ast;q^{W^0,b^0,\hat{\Theta}})+{1\over n}\left(\|W^0\|_F^2+\|b^0\|^2\right).
	\end{align*}
	Because 
	$$
	\|W^0\|_F^2+\|b^0\|^2\le K(B_E^2+\max\{\log^2S_{\min},\log^2S_{\max}\}),
	$$
	so we can know there exists constants $C_{c,3}$, $C_{c,4}$ such that 
	$$
	-\EE_{X,j,Y}\log q^{W^r,b^r,\hat{\Theta}}_Y(X_{j,c})+\EE_{X,j,Y}\log p^\ast_Y(X_{j,c})\le {2\epsilon_p\over \rho_0}+{D^4B_\Theta^4\over 16}+C_{c,3}\delta_s+{C_{c,4}\over n},
	$$	
	where 
	$$
	C_{c,3}=\log K+2B_EB_\Theta+\log{S_{\max}\over S_{\min}}\quad {\rm and}\quad C_{c,4}=K(B_E^2+\max\{\log^2S_{\min},\log^2S_{\max}\}).
	$$
\end{proof}

\section{Details of the Numerical Illustrations}
\label{app:numerical}

\subsection{Data Generation Details}

In the simulation experiment there are $K=4$ semantic communities, $\Vcal_{c,1},\ldots,\Vcal_{c,4}$, $16$ semantic groups, $\Vcal_{s,1},\ldots,\Vcal_{s,16}$, and three exchangeable token types per
semantic group, giving vocabulary size $d=48$. We choose
$$
\Vcal_{s,4k-3},\ldots, \Vcal_{s,4k}\subset \Vcal_{c,k},\qquad k=1,\ldots, 4.
$$
The three exchangeable token types of every semantic group occur with probabilities
$$
\rho_m=\frac{m^{-0.8}}{\sum_{r=1}^3r^{-0.8}},\quad m=1,2,3,
\qquad (\rho_1,\rho_2,\rho_3)\simeq(0.503,0.289,0.209).
$$
Because these weights do not depend on context, the exchangeable token types of every semantic group have unequal marginal frequencies but the same conditional contextual distributions. Hence $\Gamma_h(v_1,v_2)=0$ exactly if $v_1,v_2\in \Vcal_{s,a}$ for some $a$.

We assume there exists an underlying semantic geometry. Specifically, we draw four orthonormal vectors as community centers $\mu_1,\ldots,\mu_4\in \RR^{16}$. We then draw a unit vector $\delta_a$ for each semantic group, $a=1,\ldots,16$, in the orthogonal complement of their span. Within each community we assign the four strengths $\alpha_a\in\{0.15,0.25,0.35,0.45\}$ to the four semantic groups and set
$$
u_a=\frac{\mu_{k}+\alpha_a\delta_a}{\|\mu_{k}+\alpha_a\delta_a\|},
\qquad a=4k-3,\ldots,4k,\quad k=1,\ldots,4.
$$
These hidden vectors connect a token to its context. Write the context as $X_{j,c}=(X_{j-3},X_{j-2},X_{j-1})$ and the semantic group of each contextual
token as $(a_{j-3},a_{j-2},a_{j-1})$. The context feature is
$$
c(X_{j,c})=\frac{\tilde c(X_{j,c})}{\|\tilde c(X_{j,c})\|},\qquad
\tilde c(X_{j,c})=\sum_{i=1}^{3}\omega_i R_i u_{a_{j-4+i}}
+0.25\,B\bigl(u_{a_{j-3}}\odot u_{a_{j-2}}+u_{a_{j-2}}\odot u_{a_{j-1}}\bigr),
$$
with position weights $(\omega_1,\omega_2,\omega_3)=(0.2,0.3,0.5)$ ordered from the oldest contextual position to the most recent, where $R_1,R_2,R_3$ are orthogonal matrices drawn independently for each position, $B$ is a matrix with operator norm one, and $\odot$ denotes elementwise multiplication. The unequal position weights, the positional orthogonal matrices, and the interaction term together make the conditional distribution depend on both the order of and the interactions among the three previous tokens.

To ensure an irreducible and aperiodic process we introduce an order-three de Bruijn cycle on the $16$ semantic groups. Let $r_{B}(X_{j,c})$ be the successor of $(a_{j-3},a_{j-2},a_{j-1})$ on this cycle. Let $\Gcal(X_{j,c})$ contain the five remaining semantic groups with the largest compatibility scores
$c(X_{j,c})^{T}Ou_a$, where $O$ is a fixed orthogonal matrix, and let
$$
\mathcal S_{a}(X_{j,c})=\{r_{B}(X_{j,c})\}\cup \Gcal(X_{j,c}).
$$
At the single designated context $(a_{j-3},a_{j-2},a_{j-1})=(1,1,1)$ we force semantic group $1$ into $\Gcal$ and compete only the remaining four slots, which creates a self-loop and makes the chain aperiodic. Every context therefore has exactly $6$ plausible semantic groups and $18$ plausible tokens, and the exploration edge $r_B$ is always distinct from the five geometric slots. The
conditional probability is
$$
p_j(v|X_{j,c})=\begin{cases}
	\eta\,\rho_m, & v\in \Vcal_{s,r_{B}(X_{j,c})},\\[2mm]
	(1-\eta)\displaystyle
	\frac{\exp\{\beta c(X_{j,c})^TOu_a\}}
	{\sum_{a'\in \Gcal(X_{j,c})}\exp\{\beta c(X_{j,c})^TOu_{a'}\}}\,\rho_m,
	& v\in \Vcal_{s,a},\ a\in \Gcal(X_{j,c}),\\[4mm]
	0, & \text{otherwise},
\end{cases}
$$
with exploration mass $\eta=0.2$. The scale $\beta$ is fixed before any model is fitted, by matching the lower $1\%$ quantile of the $4096\times 5$ geometric-branch token probabilities at the rarest variant, $\{(1-\eta)p_j(a|X_{j,c};\beta)\rho_3\}$, to $1/54$. The quantile is decreasing in $\beta$, so the value is found by bisection; it is $\beta\simeq1.664$. The exploration branch is excluded from the quantile deliberately: at $\beta=0$ every geometric slot already carries mass $(1-\eta)/5$, so including the flat exploration values would collapse the calibration to $\beta=0$ and remove the geometric signal entirely.

Given $p_j(v|X_{j,c})$ we obtain the marginal distribution of the context $p(X_{j,c})$ by solving
\begin{align*}
	&p(X_{j-2}=v_2,X_{j-1}=v_1,X_{j}=v_0)\\
	=&\sum_{v_3\in \Vcal}p_j(v_0|X_{j-3}=v_3,X_{j-2}=v_2,X_{j-1}=v_1)p(X_{j-3}=v_3,X_{j-2}=v_2,X_{j-1}=v_1),
\end{align*}
by power iteration to an $\ell_1$ residual below $10^{-12}$. With $p_j(v|X_{j,c})$ and $p(X_{j,c})$ we evaluate every quantity of interest exactly, by enumeration over the $16^3=4096$ semantic contexts or the $48^3=110592$ observed-token contexts.

\subsection{Model Fitting Details}

The representation approximator uses causal attention over four input positions: the three contextual tokens and one query slot, from which the representation is estimated. Each Transformer sub-layer uses two attention heads, representation width $16$, a width-$64$ GELU feed-forward network, two pre-normalizations, and no dropout. The initial representation and the token embeddings in the recovery
head are tied. The representation is read from the query slot by exact normalization, $h=4r/\|r\|$, so $\|h\|=4$ identically rather than approximately; this is why the ideal representation $Z_j(X)$ in Section~\ref{sc:numerical} is constrained to the same ball. The augmented recovery-head rows $(E_v,b_v)$ are constrained to a ball of radius $4$ and projected onto it after every update.

We use two block families. The first shared block contains one Transformer sub-layer and is repeated $L\in\{1,2,4,8\}$ times; the second contains two sub-layers and is repeated $L\in\{1,2,4\}$ times. All parameters inside a block are shared across its $L$ applications, so comparisons across $L$ within a family change depth without changing the parameter count. The one- and two-layer families have $4112$ and $7328$ parameters respectively, at every $L$. For every architecture we use three matched seeds. A seed fixes the training count table, the validation sample, the parameter initialization, and the minibatch stream, and these are held identical across depths within a block family. Initialization is deliberately depth-independent: all weight matrices and token embeddings are drawn $N(0,1/16)$ and no $1/\sqrt{2L}$ residual scaling is applied, so a given seed produces byte-identical initial parameters at every $L$ and the depth comparison is paired. 

The training and validation samples contain $5\times 10^5$ and $2\times 10^4$ context and next-token pairs. Models are optimized with AdamW at a constant learning rate $3\times10^{-4}$ with weight decay
$10^{-4}$, batch size $512$ drawn i.i.d. with replacement, and gradient clipping at one. Training runs for at most $3\times10^5$ updates, with validation every $50$ updates and early stopping after $5\times10^4$ updates without improvement. Checkpoints are evaluated at updates
$$
0,20,50,100,200,400,800,2000,5000,10^4,2\times10^4,5\times10^4, 10^5,
2\times10^5,
$$
together with the selected final update, defined as the argmin of validation loss over the evaluated updates with ties broken toward the earliest. All computation is in double precision on CPU.

\end{appendices}

\end{document}